\renewcommand{\epsilon}{\varepsilon}
\newcommand{\iffullversion}[2]{%
#1%
}
\author{%
  Etienne Bamas\thanks{Equal Contribution.} \\
  EPFL, Lausanne, Switzerland\\
  \texttt{etienne.bamas@epfl.ch}\\
  \And
  Andreas Maggiori$^*$\\
  EPFL, Lausanne, Switzerland\\
  \texttt{andreas.maggiori@epfl.ch}\\
  \And 
  Ola Svensson$^*$\\
  EPFL, Lausanne, Switzerland\\
  \texttt{ola.svensson@epfl.ch}
  }
\DeclareMathOperator{\OPT}{OPT}
\DeclareMathOperator{\F}{\mathcal{F}}
\DeclareMathOperator{\A}{\mathcal{A}}
\DeclareMathOperator{\I}{\mathcal{I}}
\newcommand{\logb}[1]{%
  \log\left(#1\right)%
}
\theoremstyle{plain}
\newtheorem{thm}{Theorem} 
\newtheorem*{thm*}{Theorem}
\newtheorem{cor}[thm]{Corollary}
\newtheorem{lem}[thm]{Lemma}
\newtheorem*{cla*}{Claim}
\renewcommand{\geq}{\geqslant}
\renewcommand{\leq}{\leqslant}
\title{The Primal-Dual method for Learning Augmented Algorithms}
\begin{document}
\maketitle
\begin{abstract}
The extension of classical online algorithms when provided with predictions is a new and active research area. In this paper, we extend the primal-dual method for online algorithms in order to incorporate predictions that advise the online algorithm  about the next action to take. We use this framework to obtain novel algorithms for a variety of online covering problems. We compare our algorithms to the cost of the true and predicted offline optimal solutions and show that these algorithms outperform any online algorithm when the prediction is accurate while maintaining good guarantees when the prediction is misleading.
\end{abstract}

\pagenumbering{arabic}

\section{Introduction}
\label{sec:introduction}
\paragraph{} In the classical field of online algorithms the input is presented in an online fashion and the algorithm is required to make irrevocable decisions without knowing the future. The performance is often measured in terms of worst-case guarantees with respect to an optimal offline solution. In this paper, we will consider minimization problems and formally, we will say that an online algorithm \(\mathcal{ALG}\) is \(c\)-\textit{competitive} if on any input \(\mathcal I\), the cost \(c_\mathcal{ALG}(\mathcal I)\) of the solution output by algorithm \(\mathcal{ALG}\) on input \(\mathcal I\) satisfies \(c_\mathcal{ALG}(\I)\leq c\cdot \OPT(\mathcal I)\), where \(\OPT(\mathcal I)\) denotes the cost of the offline optimum. Due to the uncertainty about the future, online algorithms tend to be overly cautious which sometimes causes their performance in real-world situations to be far from what a machine learning (ML) algorithm would have achieved. Indeed in many practical applications future events follow patterns which are easily predictable using ML methods. In \cite{DBLP:conf/icml/LykourisV18} Lykouris and Vassilvitskii formalized a general framework for incorporating (ML) predictions into online algorithms and designed an extension of the marking algorithm to solve the online caching problem when provided with predictions. This work was quickly followed by many other papers studying different learning augmented online problems such as scheduling (\cite{DBLP:conf/soda/LattanziLMV20}), caching (\cite{antoniadis2020online,caching_SODA}), ski rental (\cite{Kodialam_ski_rental, GP19, WLW20_skirental,skirentalwithML}), clustering (\cite{LAclusteringwithNN}) and other problems (\cite{BloomFilters,IndykFrequencyestimation}). The main challenge is to incorporate the prediction without knowing how the prediction was computed and in particular without making any assumption on the quality of the prediction. This setting is natural as in real-world situations, predictions are provided by ML algorithms that rarely come with worst-case guarantees on their accuracy. Thus, the difficulty in designing a learning augmented algorithm is to find a good balance: on the one hand, following blindly the prediction might lead to a very bad solution if the prediction is misleading. On the other hand if the algorithm does not trust the prediction at all, it will simply never benefit from an excellent prediction. The aforementioned results solve this issue by designing smart algorithms which exploit the problem structure to achieve a good trade-off between these two cases. In this paper we take a different perspective. Instead of focusing on a specific problem trying to integrate predictions, we show how to extend a very powerful algorithmic method, the Primal-Dual method, into the design of online learning augmented algorithms. We underline that despite the generality of our extension technique, it produces online learning augmented algorithms in a fairly simple and straightforward manner.

\textbf{The Primal-Dual method.} The Primal-Dual (PD) method is a very powerful algorithmic technique to design online algorithms. It was first introduced by \citet{AAA03} to design an online algorithm for the classical online set cover problem and later extended to many other problems such as weighted caching (\cite{weighted_cachingFOCS}), revenue maximization in ad-auctions, TCP acknowledgement and ski rental \cite{ad_auctionsESA}. We mention the survey of Buchbinder and Naor \cite{BuchbinderBook} for more references about this technique. In a few words, the technique consists in formulating the online problem as a linear program \(P\) complemented by its dual \(D\). Subsequently, the algorithm builds online a feasible fractional solution to both the primal \(P\) and dual \(D\). Every time an update of the primal and dual variables is made, the cost of the primal increases by some amount \(\Delta P\) while the cost of the dual increases by some amount \(\Delta D\). The competitive ratio of the fractional solution is then obtained by upper bounding the ratio \(\frac{\Delta P}{\Delta D}\) and using weak duality. The integral solution is then obtained by an online rounding scheme of the fractional solution.

\textbf{Preliminary notions for Learning Augmented (LA) algorithms.} LA algorithms receive as input a prediction \(\A\), an instance \(\mathcal{I}\) which is revealed online, a robustness parameter \(\lambda\), and output a solution of cost \(c_\mathcal{ALG}(\A, \I, \lambda)\). Intuitively, \(\lambda\) indicates our confidence in the prediction with smaller values reflecting high confidence. We denote by \(S\left(\mathcal{A,\mathcal{I}}\right)\) the cost of the output solution on input \(\mathcal{I}\) if the algorithm follows blindly the prediction \(\mathcal{A}\). We avoid defining explicitly  prediction \(\A\) to easily fit different prediction cases. For instance if the prediction \(\mathcal{A}\) is a predicted solution (without necessarily revealing the predicted instance) then following blindly the solution would simply mean to output the predicted solution \(\mathcal{A}\). For each result presented in this paper, it will be clear what is the prediction \(\mathcal{A}\) and the cost \(S(\A, \I)\). Given this, we restate some useful definitions introduced in \cite{DBLP:conf/icml/LykourisV18, skirentalwithML} in our context. For any \(0 < \lambda\leq 1\), we will say that an LA algorithm is \textit{\(C(\lambda)\)-consistent} and \textit{\(R(\lambda)\)-robust} if the cost of the output solution satisfies:
\begin{equation}
    \label{eqn:intro1}
    c_\mathcal{ALG}(\A, \mathcal I, \lambda) \leq  \min \left\lbrace C(\lambda)\cdot S (\A, \mathcal I), R(\lambda)\cdot  \OPT(\I) \right\rbrace
\end{equation}

If \(\A\) is accurate (\(S(\A, \I) \approx \OPT (\I)\)) and at the same time we trust the prediction, we would like our performance to be close to the optimal offline. Thus, ideally \(C(\lambda)\) should approach \(1\) as \(\lambda\) approaches \(0\). On the same spirit, a value of \(\lambda\) close to \(1\) denotes no trust to the prediction, and in that case, our algorithm should not be much worse than the best pure online algorithm. Therefore, \(R(1)\) should be close to the competitive ratio of the best pure online algorithm. We also mention that in some other papers such as \cite{skirentalwithML}, a \textit{smoothness} criterion on the consistency bound is required. In these papers the setting is slightly different. Indeed, prediction \(\A\) is a predicted instance \(\I^{\textit{pred}}\) and the error is defined to describe how far \(\I^{\textit{pred}}\) is from the real instance \(\I\). With this in mind, an algorithm is said to be \textit{smooth} if the performance degrades smoothly as the error increases. We emphasize that, in the applications considered in this paper, this smoothness property is implicitly included in the value of \(S (\A, \mathcal I)\) which degrades smoothly with the quality of the prediction.
\paragraph{Our contributions.} We show how to extend the Primal-Dual method (when predictions are provided) for solving problems that can be formulated as covering problems. The algorithms designed using this technique receive as input a robustness parameter \(\lambda\) and incorporate a prediction. If the prediction is accurate our algorithms can be arbitrarily close to the optimal offline (beating known lower bounds of the classical online algorithms) while being robust to failures of the predictor. We first apply our Primal-Dual Learning Augmented  (PDLA) technique to the online version of the weighted set cover problem, which constitutes the most canonical example of a covering Linear Program (LP). For that problem we show how we can easily modify the Primal-Dual algorithm to incorporate predictions. Even though in this case, prediction may not seem very natural, this result reveals that we can use PDLA to design learning augmented algorithms for the large class of problems that can be formulated as a covering LP. We then continue by addressing problems in which the prediction model is much more natural. Using the PDLA technique, we first design an algorithm which recovers the results of \citet{skirentalwithML} for the ski rental problem, and we also prove that the consistency-robustness trade-off of that algorithm is optimal. We additionally design a learning augmented algorithm for a generalization of the ski rental, namely the Bahncard problem. Finally, we turn our attention to a problem which arises in network congestion control, the TCP acknowledgement problem. We design an LA algorithm for that problem and conduct experiments which confirm our claims. We note that the analysis of the algorithms designed using PDLA is (arguably) simple and boils down to (1) proving robustness with (essentially) the same proof as in the original Primal-Dual technique and (2) proving consistency using a simple charging argument that, without making use of the dual, relates the cost incurred by our algorithms to the prediction.
In addition to that, using PDLA, the design of online LA algorithms is almost automatic. We emphasize that the preexisting online rounding schemes to obtain an integral solution from a fractional solution still apply to our learning augmented algorithms. Hence in all the paper we focus only on building a fractional solution and provide appropriate references for the rounding scheme.

\section{General PDLA method}
\label{sec:set cover main}
In this section we apply PDLA to solve the online weighted set cover problem when provided with predictions. Set cover is arguably the most canonical example of a covering problem and the framework that we develop readily applies to other covering problems. In particular, we use the framework to give tight or nearly-tight LA algorithms for ski rental, Bahncard, and dynamic TCP acknowledgement, which are all problems that can be formulated as covering LPs.  
\begin{wrapfigure}{r}{0.4\textwidth}
  \begin{center}
    \includegraphics[width=0.4\textwidth]{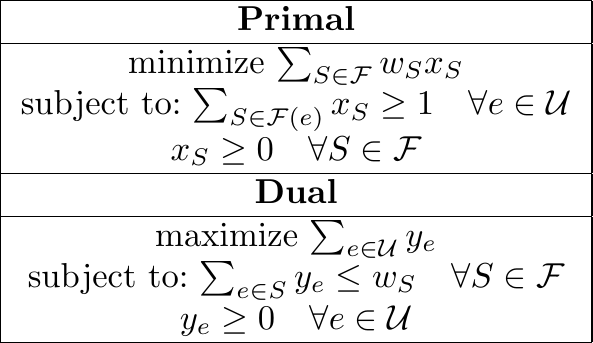}
  \end{center}
  \caption{Primal Dual formulation of weighted set cover}
  \label{fig: primal dual formulation of weighted set cover}
  \vspace{-5pt}
\end{wrapfigure}
\paragraph{The weighted set cover problem.} In this problem, we are given a universe \(\mathcal{U} = \{ e_1, e_2, \dots , e_n \}\) of \(n\) elements and a family \(\mathcal{F}\) of \(m\) sets over this universe, each set \(S\in \mathcal{F}\) has a weight \(w_S\) and each element \(e\) is covered by any set in \(\F (e) = \{ S \in \F \mid e \in S \}\). Let ${d = \max_{e \in \mathcal{U} } |\mathcal{F}(e)|}$ denote the maximum number of sets that cover one element. Our goal is to select sets so as to cover all elements while minimizing the total weight. In its online version, elements are given one by one and it is unknown to the algorithm which elements will arrive and in which order. When a new element arrives, it is required to cover it by adding a new set if necessary. Removing a set from the current solution to decrease its cost is not allowed. Alon et al. in \cite{AAA03} first studied the online version designing an almost optimal \({O(\log n \log d)}\)-competitive algorithm. We note that the \(O(\log n)\) factor comes from the integrality gap of the linear program formulation of the problem (Figure \ref{fig: primal dual formulation of weighted set cover}) while the \(O(\log d)\) is due to the online nature of the problem. Since \citet{AAA03} designed an online rounding scheme at a multiplicative cost of \(O(\log n)\), we will focus on building an increasing fractional solution to the set cover problem (i.e. \(x_S\) can only increase over time for all \(S\)).

\paragraph{PDLA for weighted set cover.} Algorithm \ref{alg:PDLA for Online Weighted Set Cover} takes as input a predicted covering \(\A \subset \F\) and a robustness parameter \(\lambda \in \left[0,1\right]\). While an instance \(\I\) is revealed in an online fashion, an increasing fractional solution \({\{x_S \}_{S \in \F} \in {[0,1]}^{\F}}\) is built. Note that \(\F (e) \cap \A\) are the sets which cover \(e\) in the prediction. To simplify the description, we assume that \({|\F (e) \cap \A | \geq 1}, \forall e\), i.e. the prediction forms a feasible solution. The algorithm without this assumption can be found in \iffullversion{appendix \ref{sec:Set Cover appendix}}{the supplemental material}.

\paragraph{Algorithm Intuition.} We first turn our attention to the original online algorithm of Alon et. al. \cite{AAA03} described in Algorithm \ref{alg:PD for Online Weighted Set Cover}. To get an intuition assume that \(w_S = 1, \forall S\) and consider the very first arrival of an element \(e\). After the first execution of the while loop, \(e\) is covered and \({x_S = \frac{1}{|\F (e)|}, \forall S \in \F (e)}\). In other words, the online algorithm creates a uniform distribution over the sets in \(\F (e)\), reflecting in such a way his unawareness about the future. On the contrary Algorithm \ref{alg:PDLA for Online Weighted Set Cover} uses the prediction to adjust the increase rate of primal variables, augmenting more aggressively primal variables of sets which are predicted to be in the optimal offline solution. Indeed, after the first execution of the while loop, sets which belong to \(\A\) get a value of \({\frac{\lambda}{|\F (e)|} + \frac{1-\lambda}{|{\F} (e) \cap \A|} }\) while sets which are not chosen by the prediction get \(\frac{\lambda}{|\F (e)|}\).

\begin{figure}[!h]
    \begin{minipage}[t]{0.46\textwidth}
       \vspace{0pt}
    \begin{algorithm}[H]
   \caption{\textsc{Primal Dual Method for Online Weighted Set Cover \cite{AAA03}.}}
   \label{alg:PD for Online Weighted Set Cover}
        \begin{algorithmic}
           \STATE {\bfseries Initialize:} \(x_S \leftarrow 0\), \(y_e \leftarrow 0\)  \(\forall S, e\)
           \FORALL{element \(e\) that just arrived}
                \WHILE{\(\sum_{S\in \F (e)} x_S < 1\)}
                    \STATE {\scriptsize \textit{/* Primal Update}}
                    \FORALL{\(S\in \F (e)\)}
                        \STATE \(x_S\leftarrow x_S \left(1+\frac{1}{w_S}\right)+\frac{1}{w_S  |\F (e)|}\)
                    \ENDFOR
                    \STATE {\scriptsize \textit{/* Dual Update}}
                    \STATE \(y_e \leftarrow y_e+1\)
                \ENDWHILE
           \ENDFOR
        \end{algorithmic}
    \end{algorithm}
    \end{minipage}
    \hfill
    \begin{minipage}[t]{0.01\textwidth}
    \vspace{3cm}
       {\Large \contour{black}{\(\Rightarrow\)}}
    \end{minipage}
    \hfill
    \begin{minipage}[t]{0.5\textwidth}
       \vspace{0pt}\raggedright
       \begin{algorithm}[H]
    \small
   \caption{\textsc{PDLA for Online Weighted Set Cover.}}
   \label{alg:PDLA for Online Weighted Set Cover}
\begin{algorithmic}
   \STATE {\bfseries Input:} \(\lambda\), \(\A\)
   \STATE {\bfseries Initialize:} \(x_S \leftarrow 0\), \(y_e \leftarrow 0\)  \(\forall S, e\)
   \FORALL{element \(e\) that just arrived}
        \WHILE{\(\sum_{S\in \F (e)} x_S < 1\)}
            \STATE {\scriptsize \textit{/* Primal Update}}
            \FORALL{\(S\in \F (e)\) and \(S \in \A\)}
                \STATE \({x_S\leftarrow x_S \! \left(\!1+\! \!\frac{1}{w_S}\right) \!+ \!\frac{\lambda}{w_S  |\F (e)|} \! \!+ \! \!\frac{1-\lambda}{w_S |\F (e) \cap \A|} }\)
            \ENDFOR
            \FORALL{\(S\in \F (e)\) and \(S \not \in \A\)}
                \STATE \(x_S\leftarrow x_S \left(1+\frac{1}{w_S}\right)+\frac{\lambda}{w_S  |\F (e)|}\)
            \ENDFOR
            \STATE {\scriptsize \textit{/* Dual Update}}
            \STATE \(y_e \leftarrow y_e+1\)
        \ENDWHILE
   \ENDFOR
\end{algorithmic}
\end{algorithm}
    \end{minipage}
\end{figure}

We continue by exposing our main conceptual contribution. To that end let \(S(\A, \I)\) denote the cost of the covering solution described by prediction \(\A\) on instance \(\I\).
\begin{thm}
\label{thm:set cover} 
Assuming \(\A\) is a feasible solution, the cost of the fractional solution output by Algorithm \ref{alg:PDLA for Online Weighted Set Cover} satisfies
\begin{equation*}
    \label{thm: set cover theorem in the main part of the paper}
    c_{\tiny\mathcal{PDLA}}(\mathcal{A}, \mathcal I, \lambda) \leq \min \left\lbrace O\left(\frac{1}{1-\lambda}\right) \cdot S(\A,\I) ,  O\left(\log \left( \frac{d}{\lambda}\right)\right) \cdot \OPT (\I)\right\rbrace
\end{equation*}
\end{thm}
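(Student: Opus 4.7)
The plan is to establish two independent upper bounds on the cost $c_{\mathcal{PDLA}} = \sum_S w_S x_S$ of the fractional primal solution produced by Algorithm~\ref{alg:PDLA for Online Weighted Set Cover}: a consistency bound of $O(1/(1-\lambda)) \cdot S(\A,\I)$ and a robustness bound of $O(\log(d/\lambda)) \cdot \OPT(\I)$; the theorem then follows by taking the minimum. As anticipated in the contributions paragraph, the robustness half adapts the Primal-Dual analysis of Alon et al.~\cite{AAA03}, while the consistency half is a self-contained charging argument that never touches the dual.

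For consistency, I first compute, in one iteration of the inner \textbf{while} loop triggered by an arriving element $e$, the primal cost increase
\[
\Delta P \;=\; \sum_{S \in \F(e)} x_S \;+\; \lambda \;+\; (1-\lambda) \;=\; \sum_{S\in \F(e)} x_S + 1 \;<\; 2,
\]
where the strict inequality uses the while-loop condition. In parallel, isolating the ``$(1-\lambda)$ boost'' applied only to sets in $\F(e)\cap\A$ shows that
\[
\Delta\!\left(\sum_{S \in \A} w_S x_S\right) \;\geq\; \sum_{S \in \F(e)\cap\A} w_S\cdot\frac{1-\lambda}{w_S\,|\F(e)\cap\A|} \;=\; 1-\lambda
\]
per iteration. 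Dividing, $\Delta P \leq \tfrac{2}{1-\lambda}\cdot \Delta(\sum_{S\in\A} w_S x_S)$, which integrates to $\sum_S w_S x_S \leq \tfrac{2}{1-\lambda}\sum_{S\in\A} w_S x_S$. The final ingredient is a uniform bound $x_S = O(1)$ at all times: just before any update of $x_S$ the loop invariant forces $x_S < 1$, while each update multiplies by $1+1/w_S \leq 2$ (WLOG $w_S \geq 1$) and adds an additive term bounded by $1/w_S \leq 1$, so $x_S < 3$. Combining these gives $\sum_S w_S x_S = O(1/(1-\lambda))\cdot \sum_{S\in\A} w_S = O(1/(1-\lambda))\cdot S(\A,\I)$.

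For robustness, the per-iteration bounds $\Delta P < 2$ and $\Delta D = 1$ immediately give $\sum_S w_S x_S \leq 2\sum_e y_e$, so it suffices to show that $\{y_e/\alpha\}$ is dual-feasible with $\alpha = O(\log(d/\lambda))$; weak duality then delivers $\sum_e y_e \leq \alpha\cdot\OPT(\I)$. Fix a set $S$ and let $N_S = \sum_{e\in S} y_e$; unrolling the recurrence $x_S \leftarrow x_S(1+1/w_S)+c_S^{(i)}$ with the uniform lower bound $c_S^{(i)} \geq \lambda/(w_S\,|\F(e)|) \geq \lambda/(w_S\,d)$ gives $x_S \geq \tfrac{\lambda}{d}\bigl((1+1/w_S)^{N_S} - 1\bigr)$. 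Combining this with the $O(1)$ upper bound on $x_S$ from the previous paragraph and the inequality $\ln(1+1/w_S) \geq 1/(2w_S)$ forces $N_S = O(w_S \log(d/\lambda))$, which is exactly the claimed dual feasibility.

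The main obstacle is bookkeeping the two additive ``paths'' (the $\lambda$-term applied to every $S\in\F(e)$ versus the $(1-\lambda)$-term applied only to $S\in\F(e)\cap\A$) cleanly enough that the consistency charge $\geq 1-\lambda$ and the robustness lower bound $c_S^{(i)} \geq \lambda/(w_S d)$ drop out without contaminating one another. Once this separation is done, everything else reduces to the same algebra as in the classical PD analysis of set cover, with the uniform $x_S = O(1)$ bound serving as the common anchor between the two arguments.
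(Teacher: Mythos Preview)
Your proof is correct and follows essentially the same two-part structure as the paper: robustness via the usual primal-dual ratio plus approximate dual feasibility, and consistency via a charging argument to the prediction's cost using the uniform bound $x_S=O(1)$.

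The only noteworthy difference is in the consistency step. The paper splits $\Delta P=\Delta P_c+\Delta P_u$ (contributions from $S\in\F(e)\cap\A$ versus $S\in\F(e)\setminus\A$), bounds $\Delta P_c\geq \lambda/d+1-\lambda$ and $\Delta P_u\leq 1+\lambda$, and then concludes $\Delta P\leq O(1/(1-\lambda))\Delta P_c$. You instead use the coarser pair $\Delta P<2$ together with $\Delta\bigl(\sum_{S\in\A}w_Sx_S\bigr)\geq 1-\lambda$, which immediately gives the same $O(1/(1-\lambda))$ factor; this is a strict simplification of the paper's argument, at the cost of a slightly worse hidden constant. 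The final charging step---bounding $\sum_{S\in\A}w_Sx_S$ by $O(1)\cdot S(\A,\I)$ via $x_S<3$---is identical in both.
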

\begin{proof}[Proof sketch.] The proof is split in two parts. The first part is to bound the cost of the algorithm by the term \({O\left(\frac{1}{1-\lambda}\right)\cdot  S(\A,\I)}\). As mentioned in the introduction we use a charging argument to do so. After each execution of the while loop we can decompose the primal increase into two parts. \(\Delta P_c\) which denotes the increase due to sets in \(\F (e) \cap \A\) and \(\Delta P_u\) which denotes the increase due to sets in \(\F (e) \setminus \A\), thus for the overall primal increase \(\Delta P \) we have \(\Delta P = \Delta P_c + \Delta P_u\). We continue by upper bounding \(\Delta P_u\) as a function of \(\lambda\) and \(\Delta P_c\), that is \(\Delta P_u \leq O (\frac{1 + \lambda}{1 - \lambda})\Delta P_c \), and deducing that \({\Delta P \leq O (\frac{1}{1 -\lambda}) \Delta P_c }\). Now the consistency proof terminates by noting that since \(\Delta P_c\) is generated by sets in the prediction, we can charge this increase to \(S (\A, \I)\). The robustness bound, which is independent of the prediction, is retrieved by mimicking the proof of the original online algorithm of \citet{AAA03}. See \iffullversion{appendix \ref{sec:Set Cover appendix}}{the supplemental material} for more details.
\end{proof}


\section{The Ski rental problem}
\label{sec:alternative ski rental}
\begin{wrapfigure}{r}{0.4\textwidth}
  \begin{center}
    \includegraphics[width=0.4\textwidth]{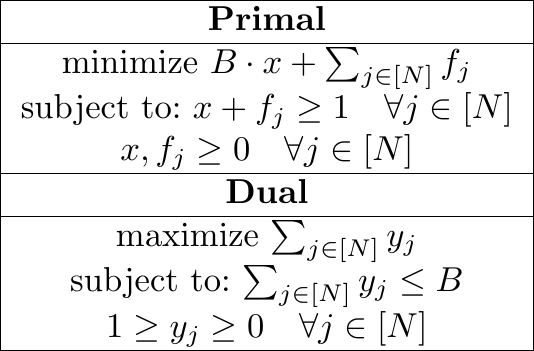}
  \end{center}
  \caption{Primal dual formulation of the ski rental problem.}
  \label{fig:PD for Ski-Rental}
\end{wrapfigure}
As another application of PDLA we design a learning augmented algorithm for one of the simplest and well studied online problems, the ski rental problem. In this problem, every new day, one has to decide whether to rent skis for this day, which costs $1$ dollar or to buy skis for the rest of the vacation at a cost of $B$ dollars. In its offline version the total number of vacation days, $N$, is known in advance and the problem becomes trivial. From the primal-dual formulation of the problem (Figure \ref{fig:PD for Ski-Rental}) it is clear that if $B < N$, the optimal strategy is to buy the skis at day one  while if $B \geq  N$ the optimal strategy is to always rent. In the online setting the difficulty relies in the fact that we do not know $N$ in advance. A deterministic $2$-competitive online algorithm has been known for a long time \cite{KMR86} and a randomized $\frac{e}{e-1}\approx 1.58$-competitive algorithm was also designed later \cite{KMMO90}. Both competitive ratios are known to be optimal for deterministic and randomized algorithms respectively. This problem was already studied in various learning augmented settings \cite{skirentalwithML, GP19, Kodialam_ski_rental, WLW20_skirental}. Our approach recovers, using the primal-dual method, the results of \cite{skirentalwithML}. As in \cite{skirentalwithML} our prediction $\A$ will be the total number of vacation days $N^{\textit{pred}}$.
\newcommand{\Npred}{N^{\textit{pred}}}
\paragraph{PDLA for ski rental.} To simplify the description, we denote an instance of the problem as ${\I = \left(N, B \right)}$ and define the function ${e(z)= (1 + 1/B)^{z\cdot B}}$. Note that if $B\rightarrow \infty$, then $e(z)$ approaches $e^z$ hence the choice of notation. In an integral solution, the variable $x$ is $1$ to indicate that the skis are bought and $0$ otherwise. In the same spirit $f_j$ indicates whether we rent on day $j$ or not. \citet{ad_auctionsESA} showed how to easily turn a fractional monotone solution (i.e. it is not permitted to decrease a variable) to an online randomized algorithm of expected cost equal to the cost of the fractional solution. Hence we focus only on building online a fractional solution. Algorithm \ref{alg:PD for Ski-Rental} is due to \cite{ad_auctionsESA} and uses the Primal-Dual method to solve the problem. Each new day $j$ a new constraint $x+f_j \geq 1$ is revealed. To satisfy this constraint, the algorithm updates the primal and dual variables while trying to maintain (1) the ratio $\Delta P / \Delta D$ as small as possible and (2) the primal and dual solutions feasible. As in the online weighted set cover problem, the key idea for extending Algorithm \ref{alg:PD for Ski-Rental} to the learning augmented Algorithm \ref{alg:PDLA for Ski-Rental} is to use the prediction $N^{\textit{pred}}$ in order to adjust the rate at which each variable is increased. Thus, when $N^{\textit{pred}} > B $ we increase the buying variable more aggressively than the pure online algorithm. Here, the cost of following blindly the prediction $\Npred$ is  $S (\Npred , \I) = B \cdot \mathbbm{1} \{ \Npred > B \} + N \cdot \mathbbm{1} \{ \Npred \leq B \} $.

\begin{minipage}{0.46\textwidth}
\begin{algorithm}[H]
 \caption{\textsc{Primal Dual for Ski-Rental \cite{ad_auctionsESA}.}}
 \label{alg:PD for Ski-Rental}
\begin{algorithmic}
   \STATE {\bfseries Initialize:} $x \leftarrow 0$, $f_j \leftarrow 0 $, $\forall j$
   \STATE $c \leftarrow e(1)$,  $c' \leftarrow 1 $
   \FOR{ each new day $j$ s.t. $ x + f_j < 1$ }
        \STATE \textit{\scriptsize /* Primal Update}
        \STATE $f_j \leftarrow 1 - x $
        \STATE $x \leftarrow (1 + \frac{1}{B}) x + \frac{1}{(c-1) \cdot B}$
        \STATE \textit{\scriptsize /* Dual Update}
        \STATE $y_j \leftarrow c'$
   \ENDFOR
\end{algorithmic}
\end{algorithm}
\end{minipage}
\hfill
\begin{minipage}{0.04\textwidth}
{\Large \contour{black}{$\Rightarrow$}}
\end{minipage}
\hfill
\begin{minipage}{0.46\textwidth}
\begin{algorithm}[H]
 \caption{\textsc{PDLA for Ski-Rental.}}
   \label{alg:PDLA for Ski-Rental}
\begin{algorithmic}
   \STATE {\bfseries Input:} $\lambda$, $N^{\textit{pred}}$
   \STATE {\bfseries Initialize:} $x \leftarrow 0$, $f_j \leftarrow 0 $, $\forall j$
    \IF{ $\Npred \geq  B $}
    \STATE {\scriptsize \textit{/* Prediction suggests buying}}
    \STATE $c \leftarrow e(\lambda)$,  $c' \leftarrow 1 $
    \ELSE
    \STATE \textit{\scriptsize /* Prediction suggests renting}
    \STATE $c \leftarrow e(1/\lambda)$, $c' \leftarrow \lambda $
    \ENDIF
   \FOR{ each new day $j$ s.t. $ x + f_j < 1$ }
        \STATE \textit{\scriptsize /* Primal Update}
        \STATE $f_j \leftarrow 1 - x $
        \STATE $x \leftarrow (1 + \frac{1}{B}) x + \frac{1}{(c-1) \cdot B}$
        \STATE \textit{\scriptsize /* Dual Update}
        \STATE $y_j \leftarrow c'$
   \ENDFOR
\end{algorithmic}
\end{algorithm}
\end{minipage}
\vspace{0.1in}

In the following we assume that either $\lambda B$ or $ B /\lambda$ is an integer (depending on whether $c$ equals $ e(\lambda) $ or $ e(1/\lambda) $ respectively in Algorithm \ref{alg:PDLA for Ski-Rental}). Our results do not change qualitatively by rounding up to the closest integer. See \iffullversion{appendix \ref{sec: Bahncard appendix}}{the supplemental material} for details.

\begin{thm}[PDLA for ski rental]
\label{thm:PDLA ski rental main}
 For any $\lambda \in (0,1]$, the cost of PDLA for ski rental is bounded as follows 
 $$\textit{c}_{\tiny{\mathcal{PDLA}}}(\Npred, \I,\lambda) \leq \min \left\lbrace \frac{\lambda}{1-e(-\lambda)}\cdot S(\Npred, \I), \frac{1}{1-e(-\lambda)}\cdot \OPT  (\I) \right\rbrace$$
\end{thm}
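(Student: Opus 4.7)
My plan is to follow the classical primal-dual template, splitting along the two prediction regimes set by the algorithm. The common engine in both halves is a per-iteration accounting: on every day $j$ on which the \texttt{for}-loop body fires, the update rule $f_j \leftarrow 1-x$ and $x \leftarrow (1+1/B)x + 1/((c-1)B)$ produces a constant primal increment $\Delta P_j = B\Delta x + f_j = c/(c-1)$ and a constant dual increment $\Delta D_j = y_j = c'$. Solving the recurrence gives $x_j = ((1+1/B)^j - 1)/(c-1)$, so $x$ first reaches $1$ at day $T^* = \lambda B$ when $c = e(\lambda)$ and at $T^* = B/\lambda$ when $c = e(1/\lambda)$; afterwards the condition $x+f_j<1$ fails and no further cost is charged. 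Hence the total fractional cost equals $\min(N, T^*) \cdot c/(c-1)$.

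For robustness, I would first check dual feasibility: $y_j = c' \leq 1$, and $\sum_j y_j \leq c' \cdot T^*$ equals $\lambda B$ in case 1 and $B$ in case 2, so it is at most $B$ in either case. Weak duality then yields algorithm cost at most $\frac{c}{(c-1)c'}\cdot \OPT(\I)$. For $c=e(\lambda),\ c'=1$ this ratio is exactly $1/(1-e(-\lambda))$, matching the claim. For $c=e(1/\lambda),\ c'=\lambda$ the ratio becomes $1/(\lambda(1-e(-1/\lambda)))$, which I would then bound by $1/(1-e(-\lambda))$ via the analytic inequality isolated below.

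For consistency, I would argue directly from the per-day cost $c/(c-1)$, without invoking the dual. In the buy-prediction case $\Npred \geq B$, where $S(\Npred,\I)=B$, the algorithm cost is at most $\lambda B/(1-e(-\lambda)) = (\lambda/(1-e(-\lambda)))\cdot S$, as required. In the rent-prediction case $\Npred<B$, where $S=N$, I would split further: if $N \leq B/\lambda$ the cost is exactly $N/(1-e(-1/\lambda))$, so the target bound again reduces to the same analytic inequality; if $N > B/\lambda$ then $\lambda S = \lambda N \geq B \geq \OPT(\I)$, so the consistency bound is inherited from the robustness bound already proved.

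The main obstacle is the single scalar inequality $\lambda(1-e(-1/\lambda)) \geq 1 - e(-\lambda)$ for $\lambda \in (0,1]$, with equality at $\lambda=1$, which is what unifies the non-trivial pieces of both directions in case 2. I would establish it by first passing to the $B \to \infty$ limit, where $e(z)=e^z$ and the inequality becomes a short calculus exercise (e.g., via monotonicity of the auxiliary function $\lambda \mapsto \lambda(1-e^{-1/\lambda}) - (1-e^{-\lambda})$), and then controlling the $O(1/B)$ correction separately.
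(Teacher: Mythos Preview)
Your proposal is correct and follows essentially the same route as the paper: compute the per-update primal cost $c/(c-1)$ and dual increment $c'$, solve the recurrence to find the cutoff $T^*$, verify dual feasibility so that weak duality gives the robustness bound via $\Delta P/\Delta D$, and argue consistency directly from the total primal cost $\min(N,T^*)\cdot c/(c-1)$ compared to $S(\Npred,\I)$. The one analytic inequality $\lambda(1-e(-1/\lambda))\geq 1-e(-\lambda)$ that you isolate is exactly what the paper isolates as Lemma~\ref{lem:inequalities} (inequalities \eqref{eqn:lemma12_1b} and \eqref{eqn:lemma12_1}); the paper, like you, first handles the $B\to\infty$ case and then reduces the finite-$B$ statement to it.

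One small remark: your case split in the rent-prediction consistency argument ($N\le B/\lambda$ versus $N>B/\lambda$) is unnecessary. Since the number of updates is $\min(N,B/\lambda)\le N$, the cost is at most $N/(1-e(-1/\lambda))$ in both subcases, and the inequality finishes the bound directly; the detour through robustness for the second subcase is correct but avoidable.
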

\begin{proof}[Proof sketch.]
The robustness bound is proved essentially using the same proof as for the original analysis of Algorithm \ref{alg:PD for Ski-Rental} in \cite{ad_auctionsESA}. For the consistency bound we first note that after an update the primal increase is $1 + \frac{1}{c-1}$, now depending on the value of $c$ we distinguish between two cases. If $\Npred \geq B$ then Algorithm \ref{alg:PDLA for Ski-Rental} is always aggressive in buying. In this case it is easy to show that at most $\lambda B$ updates are made before we get $x\geq 1$. Once $x\geq 1$, no more updates are needed. Since each aggressive update costs at most $1+\frac{1}{e(\lambda)-1}=\frac{e(\lambda)}{e(\lambda)-1}=\frac{1}{1-e(-\lambda)}$ we get that the total cost paid by Algorithm \ref{alg:PDLA for Ski-Rental} is at most $\frac{\lambda B}{1-e(-\lambda)} = S(\Npred, \I) \cdot \frac{\lambda }{1-e(-\lambda)}$. 
Similarly, in the second case $\Npred < B$ and the algorithm increases the buying variable less aggressively. In this case each update costs at most $1+\frac{1}{e(1/\lambda)-1}=\frac{1}{1-e(-1/\lambda)}$ and at most $N$ of these updates are made therefore Algorithm \ref{alg:PDLA for Ski-Rental} pays at most $\frac{N}{1-e(-1/\lambda)} =S(\Npred, \I)\cdot  \frac{1}{1-e(-1/\lambda)}  $. To conclude the consistency proof, note that $\frac{1}{1-e(-1/\lambda)} \leq \frac{\lambda}{1-e(-\lambda)}$ (\iffullversion{see Lemma \ref{lem:inequalities} inequality \eqref{eqn:lemma12_1b}}{see the supplemental material}).
\end{proof}

In addition to recovering the positive results of \cite{skirentalwithML}, we additionally show in \iffullversion{appendix \ref{sec: Optimality appendix}}{the supplemental material} that this consistency-robustness trade-off is optimal. 

\begin{restatable}{lem}{primelemma}
\label{lem:optimality lemma}
Any $\frac{\lambda}{1-e^{-\lambda}}$-consistent learning augmented algorithm for ski rental has robustness $R(\lambda)\geq \frac{1}{1-e^{-\lambda}}$
\end{restatable}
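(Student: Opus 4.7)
The plan is to fix the prediction $\Npred \geq B$, so that $S(\A,\I) = B$ regardless of the actual instance $\I$, and to work in the asymptotic regime $B \to \infty$ (matching the asymptotics of the upper bound in Theorem~\ref{thm:PDLA ski rental main}). Any randomized learning-augmented algorithm, conditioned on this prediction, is specified by a distribution $p$ over the buying day $t \in \{1,2,\dots\}$. Writing $q_i = \Pr[t \geq i]$ (so $q_1 = 1$), the expected cost on an $N$-day instance equals
\[
h(N) \;=\; \sum_{i=1}^{N} q_i + (B-1)\bigl(1 - q_{N+1}\bigr),
\]
which is non-decreasing in $N$. Consequently $\tfrac{\lambda}{1-e^{-\lambda}}$-consistency is equivalent to the single inequality $h(\infty) \leq \tfrac{\lambda B}{1-e^{-\lambda}}$, while $R(\lambda)$-robustness on an instance of $N \leq B$ days reads $h(N) \leq R(\lambda)\, N$. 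I will derive a contradiction from the assumption $R(\lambda) < \tfrac{1}{1-e^{-\lambda}}$.

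The heart of the argument is a lower bound on $h(\infty)$ forced by robustness alone. Let $S_N = \sum_{i\le N} q_i$. Rearranging $h(N) \le RN$ gives $(B-1)\,q_{N+1} \ge S_N + B - 1 - RN$, and substituting into $S_{N+1} = S_N + q_{N+1}$ produces the first-order recurrence
\[
S_{N+1} \;\geq\; \tfrac{B}{B-1}\,S_N + \tfrac{B-1-RN}{B-1}, \qquad S_1 = 1,
\]
whose closed form is
\[
S_N \;\geq\; RN + (R-1)(B-1) - B(R-1)\left(\tfrac{B}{B-1}\right)^{N-1}.
\]
Choosing $N = T := \lceil \ln(R/(R-1))/\ln(B/(B-1)) \rceil$ (the integer at which the closed form is maximized, which lies in $\{1,\dots,B\}$ whenever $R \geq e/(e-1)$, automatic for the target $R^{*} = 1/(1-e^{-\lambda})$ when $\lambda \leq 1$) yields, up to $O(1)$ additive slack from rounding, $S_T \geq RT - (B-1)$. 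Hence
\[
h(\infty) \;=\; \sum_{i \geq 1} q_i + B - 1 \;\geq\; S_T + B - 1 \;\geq\; RT.
\]

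Combining this with consistency gives $RT \leq \tfrac{\lambda B}{1-e^{-\lambda}}$, i.e., $R\,T/B \leq \lambda/(1-e^{-\lambda})$. Since $B \ln(B/(B-1)) \to 1$ as $B \to \infty$, one has $T/B \to \ln(R/(R-1))$, and passing to the limit produces
\[
R \ln\!\left(\tfrac{R}{R-1}\right) \;\leq\; \tfrac{\lambda}{1-e^{-\lambda}}.
\]
A short calculation shows $f(R) := R\ln(R/(R-1))$ is strictly decreasing on $(1,\infty)$: its derivative $\ln(1 + \tfrac{1}{R-1}) - \tfrac{1}{R-1}$ is negative by the standard inequality $\ln(1+u) < u$ for $u > 0$. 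Since $f(1/(1-e^{-\lambda})) = \lambda/(1-e^{-\lambda})$, inversion yields $R \geq 1/(1-e^{-\lambda})$, the desired bound.

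The most delicate step is the identification and justification of the optimal index $T$ at which to apply the closed-form lower bound on $S_N$. Differentiating $S_N$ (treated as a function of real $N$) leads to the equation $(B/(B-1))^{N-1} = R/[B(R-1)\ln(B/(B-1))]$, whose solution asymptotes to $B\ln(R/(R-1))$; this matches exactly the ``exponential'' hard distribution $p_i \propto (B/(B-1))^{i-1}$ that one would guess by mirroring the primal analysis of Algorithm~\ref{alg:PDLA for Ski-Rental}. The integer rounding of $T$, the vacuous regime of the recurrence when $S_N + B - 1 - RN < 0$, and the restriction of the recurrence to $N \leq B$ each contribute only lower-order error, which vanishes once the bound is divided by $B$ in the asymptotic limit.
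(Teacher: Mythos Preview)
Your argument is correct. It reaches the same conclusion as the paper but by a genuinely different route.

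The paper works directly in the continuous limit: it writes the optimal robustness as the value of a linear program whose variables are the buying density $p_t$ on $[0,1]$, with the normalization, consistency, and pointwise robustness constraints, and then lower-bounds the LP value by exhibiting an explicit feasible solution to its dual ($\lambda_t = K e^{-t}\mathbbm{1}\{t\le\lambda\}$, $\lambda_d=K$, $\lambda_c=Ke^{-\lambda}$ with $K = 1/(1-\lambda e^{-\lambda}-e^{-\lambda})$), whose objective evaluates exactly to $\tfrac{1}{1-e^{-\lambda}}$. By contrast, you stay in the discrete model, encode the algorithm via the survival function $q_i$, turn the robustness constraints into a first-order recurrence for $S_N=\sum_{i\le N}q_i$, solve it in closed form, evaluate at the maximizing index $T\approx B\ln(R/(R-1))$, and only then pass to the limit $B\to\infty$ to obtain the scalar inequality $R\ln(R/(R-1))\le \lambda/(1-e^{-\lambda})$, which you invert using the monotonicity of $R\mapsto R\ln(R/(R-1))$. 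The LP-duality proof is cleaner and thematically matches the paper, but it requires guessing the right dual certificate; your recursion is more elementary and self-contained, at the cost of managing the asymptotics and the integer rounding of $T$. One small point: the restriction $T\le B$ (needed so that the recurrence is valid at $N=T$) relies on $R\ge e/(e-1)$; you note this but should make explicit that any algorithm already satisfies this by the classical randomized lower bound, independently of consistency, so the case $R<e/(e-1)$ is vacuous.
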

To emphasize how PDLA permits us to tackle more general problems, we apply the same ideas to a generalization of the ski-rental problem, namely, the Bahncard problem \cite{F01}. This problem models a situation where a tourist travels every day multiple trips. Before any new trip, the tourist has two choices, either to buy a ticket for that particular trip at a cost of $1$ or buy a discount card, at a cost of $B$, that allows to buy tickets at a cheaper price of $\beta < 1 $. The discount card remains valid during $T$ days. Note that ski-rental is modeled by taking $\beta = 0$ and $T\xrightarrow{} \infty$. In the learning augmented version of the problem we are given a prediction $\A$ which consists in a collection of times where we are advised to acquire the discount card. We state the main result on this problem and defer the proof to \iffullversion{Appendix \ref{sec: Bahncard appendix}}{the supplemental material}.

 \begin{thm}[PDLA for the Bahncard problem]
 \label{thm:PDLA bahncard}
  For any $\lambda \in (0,1]$, any $\beta \in [0,1]$ and ${\frac{B}{1-\beta} \xrightarrow[]{} \infty }$, we have the following guarantees on any instance $\mathcal I$ and prediction $\A$
  $$ \textit{cost}_{\tiny{\mathcal{PDLA}}}(\mathcal A, \mathcal I,\lambda) \leq \min \left\lbrace \frac{\lambda}{1-\beta +\lambda \beta }\cdot \frac{e^{\lambda} - \beta}{e^{\lambda} - 1}\cdot S(\mathcal A, I), \frac{e^{\lambda}-\beta}{e^{\lambda}-1}\cdot \OPT  (\I) \right\rbrace$$
 \end{thm}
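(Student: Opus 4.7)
I would generalize the approach used to prove Theorem~\ref{thm:PDLA ski rental main}. The Bahncard problem admits a covering LP analogous to Figure~\ref{fig:PD for Ski-Rental}: for each potential card purchase time $t$ introduce a variable $x_t\in[0,1]$; for each trip $j$ at time $t_j$ introduce a trip variable $y_j\in[0,1]$; impose the constraint $y_j+\sum_{t:\,t_j\in[t,t+T)}x_t\ge 1$; and set the primal objective to $B\sum_t x_t+\beta\cdot(\#\text{trips})+(1-\beta)\sum_j y_j$. Algorithm~\ref{alg:PDLA for Ski-Rental} extends in the natural way: on each trip, perform a multiplicative update of the relevant card variables at rate $1+(1-\beta)/B$ and close the covering constraint with $y_j$; choose $(c,c')=(e(\lambda),1)$ when $\A$ currently advises holding a card (aggressive mode) and $(c,c')=(e(1/\lambda),\lambda)$ otherwise (conservative mode). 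Following Algorithm~\ref{alg:PDLA for Ski-Rental}, the cost of following blindly the prediction $S(\A,\I)$ is the sum of the $B$'s paid for each advised card plus $\beta$ (resp.\ $1$) per trip covered (resp.\ not covered) by an advised card.

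For robustness I mimic the pure primal-dual analysis of Bahncard in the style of~\cite{BuchbinderBook}. A direct computation shows that one update contributes $\Delta P=(c-\beta)/(c-1)$ to the primal and $\Delta D=c'$ to the dual. The two choices of $(c,c')$ give ratio $(e^\lambda-\beta)/(e^\lambda-1)$ in aggressive mode and $(e^{1/\lambda}-\beta)/\bigl(\lambda(e^{1/\lambda}-1)\bigr)$ in conservative mode; a short inequality (generalizing the one already used in the ski rental proof) shows that the latter is dominated by the former. Dual feasibility is inherited from the pure primal-dual Bahncard algorithm, since at most $B/(1-\beta)$ dual variables contribute to any single card constraint once $B/(1-\beta)\to\infty$. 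Weak duality then yields the robustness factor $(e^\lambda-\beta)/(e^\lambda-1)$.

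For consistency I use a charging argument split by prediction mode. If $\A$ does not advise a card when trip $j$ arrives, the algorithm performs a single conservative update of cost at most $(e^{1/\lambda}-\beta)/(e^{1/\lambda}-1)$ against a prediction cost of $1$, and an elementary inequality bounds this ratio by $\tfrac{\lambda}{1-\beta+\lambda\beta}\cdot\tfrac{e^\lambda-\beta}{e^\lambda-1}$. If $\A$ advises a card that covers $k$ trips, the aggressive updates saturate the corresponding $x_t$ to $1$ after at most $K:=\lambda B/(1-\beta)$ trips (using $B/(1-\beta)\to\infty$ so that $(1+(1-\beta)/B)^{zB/(1-\beta)}$ may be replaced by $e^z$), after which each remaining trip costs only $\beta$, matching the prediction. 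The algorithm therefore pays at most $\min(k,K)\cdot(e^\lambda-\beta)/(e^\lambda-1)+\beta\cdot\max(k-K,0)$ against the prediction cost $B+\beta k$. The main obstacle is a two-case analysis on $k\le K$ versus $k>K$: at $k=K$ the ratio is exactly $\tfrac{\lambda}{1-\beta+\lambda\beta}\cdot\tfrac{e^\lambda-\beta}{e^\lambda-1}$, and in the $k>K$ regime one verifies that $\tfrac{\lambda}{1-\beta+\lambda\beta}\cdot\tfrac{e^\lambda-\beta}{e^\lambda-1}\ge 1$ (with Taylor-expansion at $\lambda=0$ giving the tightest check), which makes the ratio monotonically decrease from $C$ toward $1$ as $k$ grows. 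Summing over card periods and outside-card trips closes the consistency bound.
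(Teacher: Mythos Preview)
Your proposal is correct and follows essentially the same approach as the paper: the same (equivalent) covering LP, the same aggressive/conservative update rule keyed to whether the predicted card is currently valid, robustness via the $\Delta P/\Delta D$ ratio combined with the inequality comparing the two modes, and consistency via the identical charging argument that bounds the number of aggressive updates per predicted card interval by $K=\lambda B/(1-\beta)$ with the worst-case ratio attained at $k=K$. The one caveat is your phrase ``dual feasibility is inherited from the pure primal-dual Bahncard algorithm'': the paper points out that no prior primal-dual treatment of Bahncard existed, so this step must be (and in the paper is) proved from scratch rather than cited, though your stated reason---at most $B/(1-\beta)$ weighted dual contributions per card constraint before the primal sum saturates---is exactly the right one.
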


\section{Dynamic TCP acknowledgement}
\label{sec:tcp ack main}
\begin{wrapfigure}{r}{0.5\textwidth}
  \begin{center}
    \includegraphics[width=0.5\textwidth]{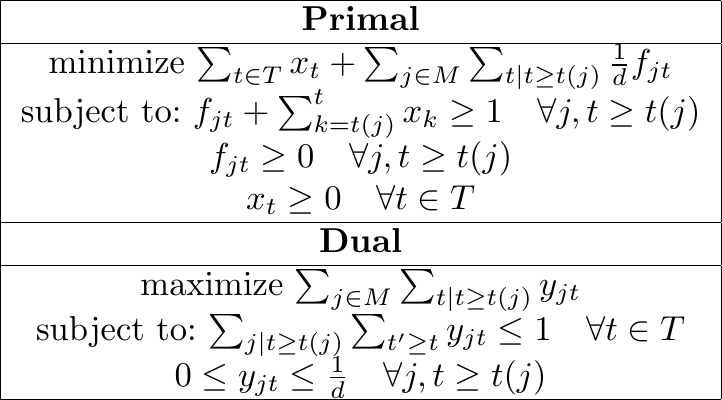}
  \end{center}
  \caption{Primal Dual formulation of the TCP acknowledgement problem}
  \label{fig:PD for TCP}
  \vspace{0.05in}
\end{wrapfigure}
In this section, we continue by applying PDLA to a classic network congestion problem of the Transmission Control Protocol (TCP). During a TCP interaction, a server receives a stream of packets and replies back to the sender acknowledging that each packet arrived correctly. Instead of sending an acknowledgement for each packet separately, the server can choose to delay its response and acknowledge multiple packets simultaneously via a single TCP response. Of course, in this scenario there is an additional cost incurred due to the delayed packets, which is the total latency incurred by those packets. Thus, on one hand sending too many acknowledgments (acks) overloads the network, on the other hand sending one ack for all the packets slows down the TCP interaction. Hence a good trade-off has to be achieved and the objective function which we aim to minimize will be the sum of the total number of acknowledgements plus the total latency. The problem was first modeled by Dooly et al. \cite{DGS98}, where they showed how to solve the offline problem optimally in quadratic time along with a deterministic \(2\)-competitive online algorithm. Karlin et al. \cite{TCPandOtherStories} provided the first \(\frac{e}{e-1}\)-competitive randomized algorithm which was later shown to be optimal by Seiden in \cite{TCP_lowerbound}. The problem was later solved using the primal-dual method by \citet{ad_auctionsESA} who also obtained an \(\frac{e}{e-1}\)-competitive algorithm. Figure \ref{fig:PD for TCP} presents the primal-dual formulation of the problem. In this formulation each packet \(j\) arrives at time \(t(j)\) and is acknowledged by the first ack sent after \(t(j)\). Here, variable \(x_t\) corresponds to sending an ack at time \(t\) and \(f_{jt}\) is set to one (in the integral solution) if packet \(j\) was not acknowledged by time \(t\). The time granularity is controlled by the parameter \(d\) and each additional time unit of latency comes at a cost of \(1/d\). As in the ski rental problem, there is no integrality gap and a fractional monotone solution can be converted to a randomized algorithm in a lossless manner (see \cite{ad_auctionsESA} for more details).
\subsection{The PDLA algorithm and its theoretical analysis}
Our prediction consists in a collection of times \(\A\) in which the prediction suggests sending an ack. Let \(\alpha (t) \) be the next time \(t'\geq t\) when prediction sends an ack. With this definition each packet \(j\), if the prediction is followed blindly, is acknowledged at time \(\alpha (t (j))\) incurring a latency cost of \({\left(\alpha(t(j)) - t(j)\right) \cdot \frac{1}{d}}\). In the same spirit as for the ski rental problem we adapt the pure online Algorithm \ref{alg:Primal-Dual for TCP ack} into the learning augmented Algorithm \ref{alg:LA Primal-Dual for TCP ack}. Algorithm \ref{alg:LA Primal-Dual for TCP ack} adjusts the rate at which we increase the primal and dual variables according to the prediction \(\A\). Thus if a packet \(j\) at time t is "uncovered" (\(\sum_{k = t(j)}^t x_k + f_{jt} < 1\)) by our fractional solution and "covered" by \(\A\) (\(\alpha (t(j)) \leq t\)) we increase \(x_t\) at a faster rate. To simplify the description of Algorithm \ref{alg:LA Primal-Dual for TCP ack} we define \(e(z) = (1+\frac{1}{d})^{z \cdot d}\). To get to the continuous time case, we will take the limit \(d\rightarrow \infty\) so the reader should think intuitively as \(e(z)\approx e^z\). 


\begin{figure}[!h]
    \begin{minipage}[t]{0.46\textwidth}
       \vspace{0pt}
    \begin{algorithm}[H]
  \caption{\textsc{Primal Dual method for TCP acknowledgement \cite{ad_auctionsESA}.}}
  \label{alg:Primal-Dual for TCP ack}
\begin{algorithmic}
  \STATE {\bfseries Initialize:} \(x \leftarrow 0\), \(y\leftarrow 0\)
  \FORALL{times \(t\)}
  \FORALL{packages \(j\) such that \(\sum_{k = t(j)}^t x_k < 1\)}
        \STATE \(c \xleftarrow{} e(1) \), \(c' \xleftarrow{} 1/d\)
        \STATE {\scriptsize \textit{/* Primal Update}}
        \STATE \(f_{jt} \leftarrow 1 - \sum_{k = t(j)}^t x_k\)
        \STATE \(x_t \leftarrow x_t + \frac{1}{d}\cdot \left( \sum_{k = t(j)}^{t} x_k + \frac{1}{c -1}\right)\)
        \STATE {\scriptsize \textit{/* Dual Update}}
        \STATE \(y_{jt} \leftarrow c'\)
  \ENDFOR
  \ENDFOR
\end{algorithmic}
\end{algorithm}
    \end{minipage}
    \hfill
    \begin{minipage}[t]{0.01\textwidth}
    \vspace{3cm}
       {\Large \contour{black}{\(\Rightarrow\)}}
    \end{minipage}
    \hfill
    \begin{minipage}[t]{0.5\textwidth}
       \vspace{0pt}\raggedright
       \begin{algorithm}[H]
\small
  \caption{\textsc{PDLA for TCP acknowledgement}}
  \label{alg:LA Primal-Dual for TCP ack}
\begin{algorithmic}
  \STATE {\bfseries Input: \(\lambda\), \(\A\)}
  \STATE {\bfseries Initialize:} \(x \leftarrow 0\), \(y\leftarrow 0\)
  \FORALL{times \(t\)}
  \FORALL{packages \(j\) such that \(\sum_{k = t(j)}^t x_k < 1\)}
        \IF{\(t \geq \alpha(t(j))\)}
            \STATE {\scriptsize \textit{/* Prediction already acknowledged packet \(j\)}}
            \STATE \(c \xleftarrow{} e(\lambda) \), \(c' \xleftarrow{} 1/d\)
        \ELSE
            \STATE {\scriptsize \textit{/* Prediction did not acknowledge packet \(j\) yet}}
            \STATE \(c \xleftarrow{} e(1/\lambda) \), \(c' \xleftarrow{} \lambda/d\)
        \ENDIF
        \STATE {\scriptsize \textit{/* Primal Update}}
        \STATE \(f_{jt} \leftarrow 1 - \sum_{k = t(j)}^t x_k\)
        \STATE \(x_t \leftarrow x_t + \frac{1}{d}\cdot \left( \sum_{k = t(j)}^{t} x_k + \frac{1}{c -1}\right)\)
        \STATE {\scriptsize \textit{/* Dual Update}}
        \STATE \(y_{jt} \leftarrow c'\)
  \ENDFOR
  \ENDFOR
\end{algorithmic}
\end{algorithm}
    \end{minipage}
\end{figure}

We continue by presenting Algorithm's \ref{alg:LA Primal-Dual for TCP ack} guarantees together with a proof sketch. As before \(\I\) denotes the TCP ack problem instance which is revealed in an online fashion. The full proof is deferred to \iffullversion{appendix \ref{sec:TCP appendix}}{the supplemental material}.

\begin{thm}[PDLA for TCP-ack]
\label{thm:tcp_main}
For any prediction \(\mathcal A\), any instance \(\mathcal I\) of the TCP ack problem, any parameter \(\lambda \in (0,1]\), and \(d\rightarrow \infty\): Algorithm \ref{alg:LA Primal-Dual for TCP ack} outputs a fractional solution of cost at most
\(c_{\tiny{\mathcal{PDLA}}}(\mathcal A, \mathcal I, \lambda) \leq \min \left\lbrace \frac{\lambda}{1-e^{-\lambda }}\cdot S(\mathcal A, \mathcal I),\frac{1}{1-e^{-\lambda}}\cdot \OPT (\I) \right\rbrace \)
\end{thm}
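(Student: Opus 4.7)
My plan is to prove the robustness and consistency bounds separately. Robustness follows the original primal-dual analysis of Algorithm~\ref{alg:Primal-Dual for TCP ack} via weak duality, while consistency is established by a direct charging argument that bypasses the dual and relates the primal cost to $S(\mathcal A,\mathcal I)$, in the same spirit as the ski-rental proof of Theorem~\ref{thm:PDLA ski rental main}.

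For the robustness bound, I first compute the per-iteration change in primal and dual costs. Writing $S=\sum_{k=t(j)}^t x_k$ for the partial sum before the update, the iteration sets $f_{jt}=1-S$ and increases $x_t$ by $\tfrac{1}{d}(S+\tfrac{1}{c-1})$, so
\[
\Delta P \;=\; \Delta x_t + \tfrac{1}{d}f_{jt} \;=\; \tfrac{1}{d}\bigl(S+\tfrac{1}{c-1}\bigr) + \tfrac{1}{d}(1-S) \;=\; \tfrac{1}{d}\cdot\tfrac{c}{c-1},
\]
while $\Delta D=c'$. Substituting the two regime values of $(c,c')$ and letting $d\to\infty$ gives $\Delta P/\Delta D=\tfrac{1}{1-e^{-\lambda}}$ in the fast regime and $\tfrac{1}{\lambda(1-e^{-1/\lambda})}$ in the slow regime; the latter is at most the former by the elementary inequality already invoked in the ski-rental proof. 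I then verify that the constructed dual is feasible throughout, i.e.\ that $\sum_{j,t'\,:\,t(j)\le t\le t'}y_{jt'}\le 1$ for every $t$, using the same exponential-growth argument as in the original analysis. Weak duality then yields the robustness bound $\tfrac{1}{1-e^{-\lambda}}\cdot\OPT(\mathcal I)$.

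For consistency, I split the total primal cost according to the regime in which each iteration took place. A slow iteration occurs for packet $j$ only at times $t\in[t(j),\alpha(t(j)))$, so the number of slow iterations is at most $d(\alpha(t(j))-t(j))$ per packet, each contributing (as $d\to\infty$) $\tfrac{1}{d(1-e^{-1/\lambda})}$ to the primal cost; summing over $j$ yields a total slow cost of at most $\tfrac{1}{1-e^{-1/\lambda}}\cdot L_{\text{pred}}$, where $L_{\text{pred}}=\sum_j(\alpha(t(j))-t(j))$ is the prediction's total latency. For fast iterations, I group the packets by the ack in $\mathcal A$ that covers them: for each $T\in\mathcal A$ let $J_T=\{j:\alpha(t(j))=T\}$, so every packet of $J_T$ that reaches the fast regime enters it at time $T$. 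Writing $\tilde S(t)=\sum_{k=T}^t x_k$ and $V(t)=\tilde S(t)+\tfrac{1}{c-1}$, the primal update rule (together with $S_j^{(0)}\ge 0$) implies $\tfrac{dV}{dt}\ge m_{J_T}(t)\,V(t)$, where $m_{J_T}(t)$ is the number of packets of $J_T$ still uncovered at time $t$. Integrating $\tfrac{d}{dt}\ln V\ge m_{J_T}(t)$ from $T$ to the time $\tau$ when the last such packet is covered, and using $V(T)=\tfrac{1}{c-1}$ and $V(\tau)\le\tfrac{c}{c-1}$, gives $\sum_{j\in J_T}u_j\le\ln c=\lambda$, where $u_j$ is the time packet $j$ spends in the fast regime. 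Hence the total number of fast iterations coming from $J_T$ is at most $d\lambda$, contributing at most $\tfrac{\lambda}{1-e^{-\lambda}}$ to the primal cost. Summing over $T\in\mathcal A$ gives total fast cost at most $\tfrac{\lambda|\mathcal A|}{1-e^{-\lambda}}$. Combining both parts and using $\tfrac{1}{1-e^{-1/\lambda}}\le\tfrac{\lambda}{1-e^{-\lambda}}$ gives a total cost of at most $\tfrac{\lambda}{1-e^{-\lambda}}(|\mathcal A|+L_{\text{pred}})=\tfrac{\lambda}{1-e^{-\lambda}}\cdot S(\mathcal A,\mathcal I)$, as required.

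The main obstacle, in my view, is the bound $\sum_{j\in J_T}u_j\le\lambda$ in the fast regime: several packets of $J_T$ share the $x$-variables, so the single-packet estimate does not directly transfer. The key observation---captured by the logarithmic-derivative argument above---is that each additional uncovered packet in $J_T$ speeds up the growth of $V$ by a contribution proportional to $V$ itself, which is exactly what is needed to keep $\int m_{J_T}(t)\,dt$ bounded by $\lambda$ independently of $|J_T|$. The verification of dual feasibility for the hybrid scheme (whose rate $c'$ changes between the two regimes) is the other technically delicate step, but it should go through essentially as in the original analysis of Algorithm~\ref{alg:Primal-Dual for TCP ack} since the exponential-growth structure of $x_t$ is preserved in both regimes.
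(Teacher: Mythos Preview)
Your proposal is essentially correct and follows the same two-part structure as the paper: robustness via bounding $\Delta P/\Delta D$ and invoking weak duality, consistency via charging small (slow) updates to the prediction's latency and big (fast) updates to the prediction's acks. Your continuous logarithmic-derivative argument for the fast-regime bound is a clean rephrasing of the paper's discrete induction, which shows that after $l$ big updates for packets acknowledged by $\mathcal A$ at time $t_0$ one has $\sum_{k\ge t_0}x_k\ge\frac{(1+1/d)^l-1}{e(\lambda)-1}$; both arguments conclude that at most $\lceil\lambda d\rceil$ such updates are possible. One cosmetic point: you write $V(T)=\tfrac{1}{c-1}$, but in general $\tilde S(T)$ need not vanish (earlier updates may already have put mass at time $T$), so the correct statement is $V(T)\ge\tfrac{1}{c-1}$; this only strengthens your bound.

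The one place where your sketch underestimates the work is dual feasibility. You say it ``should go through essentially as in the original analysis of Algorithm~\ref{alg:Primal-Dual for TCP ack}'', but that algorithm has a single update rate, so its feasibility proof is a straightforward single-sequence induction. In the hybrid scheme, big and small updates contribute $1/d$ and $\lambda/d$ respectively to the dual constraint $\sum_{j:t(j)\le t}\sum_{t'\ge t}y_{jt'}\le 1$, while contributing \emph{different} additive constants $\tfrac{1}{d(e(\lambda)-1)}$ and $\tfrac{1}{d(e(1/\lambda)-1)}$ to the growth of $\sum_{t'\ge t}x_{t'}$. The exponential-growth structure you mention is indeed preserved in each regime, but that alone does not control arbitrary interleavings of the two. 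The paper handles this with a separate technical lemma showing that, among all orderings of $b$ big and $s$ small updates, placing all small updates first gives the smallest final value; this reduces the analysis to one explicit sequence and yields that $b+\lambda s\ge d$ forces $\sum_{t'\ge t}x_{t'}\ge 1$, hence the dual constraint is violated by at most a $(1+1/d)$ factor. You should expect to need an analogous interleaving argument rather than a direct appeal to the single-regime case.
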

\begin{proof}[Proof sketch]
The two bounds are proven separately. For the robustness bound, while our analysis is slightly more technical, we use the same idea as the original analysis in \cite{ad_auctionsESA}. That is, upper bounding the ratio \(\Delta P /\Delta D\) in every iteration and using weak duality. The consistency proof uses a simple charging scheme that can be seen as a generalization of our consistency proof for the ski rental problem. We essentially have two cases, big (\(c = e(\lambda)\)) and small (\(c = e(1/\lambda)\)) updates. In the case of a small update, a simple calculation reveals that the increase in cost of the solution is at most \(\Delta P=\frac{1}{d}\left(1-\sum_{k=t(j)}^{t}x_k\right)+\frac{1}{d}\left(\sum_{k=t(j)}^{t}x_k + \frac{1}{e(1/\lambda)-1}\right)=\frac{1}{d}\left(1 + \frac{1}{e(1/\lambda)-1}\right)=\frac{1}{d}\cdot \left( \frac{1}{1-e(-1/\lambda)}\right)\). Notice then whenever Algorithm \ref{alg:LA Primal-Dual for TCP ack} does a small update at time \(t\) due to request \(j\), prediction \(\A\) pays a latency cost of \(1/d\) since it has not yet acknowledged request \(j\). Hence the primal increase of cost which is at most \(\frac{1}{d}\cdot \frac{1}{1-e(-1/\lambda)}\) can be charged to the latency cost \(1/d\) paid by \(\A\) with a multiplicative factor \(\frac{1}{1-e(-1/\lambda)} \leq \frac{\lambda}{1-e(-\lambda)}\) (\iffullversion{see Lemma \ref{lem:inequalities}, inequality \eqref{eqn:lemma12_1}}{see the supplemental material}). The case of big updates is slightly different. Consider a time \(t_0\) at which \(\A\) sends an acknowledgement and consider the big updates performed by Algorithm \ref{alg:LA Primal-Dual for TCP ack} for packets \(j\) arrived before that time (\(t(j) \leq t_0\)). We claim that at most \(\left\lceil \lambda d \right\rceil \) such big updates can be made. Indeed, big updates are more aggressive (i.e. \(x_t\) increases faster), and a ``covering'' due to \(\sum_{k = t_0}^t x_k \geq 1\) is reached after only \(\left\lceil \lambda d \right\rceil\) updates (after this point, the packets arrived before time \(t_0\) will never force Algorithm \ref{alg:LA Primal-Dual for TCP ack} to make an update). Thus Algorithm's \ref{alg:LA Primal-Dual for TCP ack} cost due to these big updates is at most \(\left\lceil \lambda d \right\rceil \cdot (\textrm{cost of a big update}) = \left\lceil \lambda d \right\rceil \cdot (\frac{1}{d} \cdot \frac{\lambda}{1 - e(-\lambda)}) \) which can be charged to the cost of 1 incurred by \(\A\) for sending an ack at time \(t_0\).
\end{proof}
\subsection{Experiments}
We present experimental results that confirm the theoretical analysis of Algorithm \ref{alg:LA Primal-Dual for TCP ack} for the TCP acknowledgement problem. The code is publicly available at \url{https://github.com/etienne4/PDLA}. We experiment on various types of distribution for packet arrival inputs. Historically, the distribution of TCP packets was often assumed to follow some Poisson distribution (\cite{TCP_history,Marathe82}). However, it was later shown than this assumption was not always representative of the reality. In particular real-world distributions often exhibit a heavy tail (i.e. there is still a significant probability of seeing a huge amount of packets arriving at some time). To better integrate this in models, heavy tailed distributions such as the Pareto distribution are often suggested (see for instance \cite{mitzenmacher2003,Gong01onthe}). This motivates our choice of distributions for random packet arrival instances. We will experiment on Poisson distribution, Pareto distribution and a custom distribution that we introduce and seems to generate the most challenging instances for our algorithms.
\paragraph{Input distributions.} In all our instances, we set the subdivision parameter $d$ to $100$ which means that every second is split into $100$ time units. Then we define an array of length $1000$ where the $i$-th entry defines how many requests arrive at the $i$-th time step. Each entry in the array is drawn independently from the others from a distribution $\mathcal{D}$. In the case of a Poisson distribution, we set $\mathcal{D}=\mathcal{P}(1)$ (the Poisson distribution of mean $1$). For the Pareto distribution, we choose $\mathcal{D}$ to be the Lomax distribution (which is a special case of Pareto distribution) with shape parameter set to $2$ (\cite{wiki:lomax}). Finally, we define the \textit{iterated} Poisson distribution as follows. Fix an integer $n>0$ and $\mu > 0$. Draw $X_1 \sim \mathcal{P}(\mu)$. Then for $i$ from $2$ to $n$ draw $X_i\sim \mathcal{P}\left(X_{i-1}\right)$. The final value returned is $X_n$. This distribution, while still having an expectation of $\mu$, appears to generate more spikes than the classical Poisson distribution. The interest of this distribution in our case is that it generates more challenging instances than the other two (i.e. the competitive ratios of online algorithms are closer to the worst-case bounds). In our experiments, we choose $\mu = 1$ and $n = 10$. Plots of typical instances under these laws can be seen in \iffullversion{appendix \ref{sec:TCP appendix}}{the supplemental material}. Note that for all these distributions, the expected value for each entry is $1$.
\paragraph{Noisy prediction.} The prediction $\A$ is produced as follows. We perturb the real instances with noise, then compute an optimal solution on this perturbed instance and use this as a prediction. More precisely, we introduce a \textit{replacement} rate $p\in[0,1]$. Then we go through the instance generated according to some distribution $\mathcal{D}$ and for each each entry at index $1 \leq i \leq 1000$, with probability $p$ we set this entry to $0$ (i.e. we delete this entry) and with probability $p$ we add to this entry a random variable $Y\sim \mathcal{D}$. Both operations, adding and deleting, are performed independently of each other. We then test our algorithm with $4$ different values of robustness parameter $\lambda\in \{1, 0.8, 0.6, 0.4\}$.
\paragraph{Results.} The plots in Figure \ref{fig:results} present the average competitive ratios of Algorithm \ref{alg:LA Primal-Dual for TCP ack} over $10$ experiments for each distribution and each value of $\lambda$. As expected, with a perfect prediction, setting a lower $\lambda$ will yield a much better solution while setting $\lambda = 1$ simply means that we run the pure online algorithm of \citet{ad_auctionsESA} (that achieves the best possible competitive ratio for the pure online problem). On the most challenging instances generated by the iterated Poisson distribution (Figure \ref{fig:poisson_it_results}), even with a replacement rate of $1$ where the prediction is simply an instance totally uncorrelated to the real instance, our algorithm maintains good guarantees for small values of $\lambda$. We note that in all the experiments the competitive ratios achieved by Algorithm \ref{alg:LA Primal-Dual for TCP ack} are better than the robustness guarantees of Theorem \ref{thm:tcp_main}, which are $\{1.58,1.68,2.21,3.03\}$ for $\lambda\in \{1, 0.8, 0.6, 0.4\}$ respectively. In addition to that, all the competitive ratios degrade smoothly as the error increases which confirms our earlier discussion about smoothness.
\begin{figure}[H]
\centering
\begin{subfigure}{.32\textwidth}
  \centering
  \includegraphics[width=1.\linewidth]{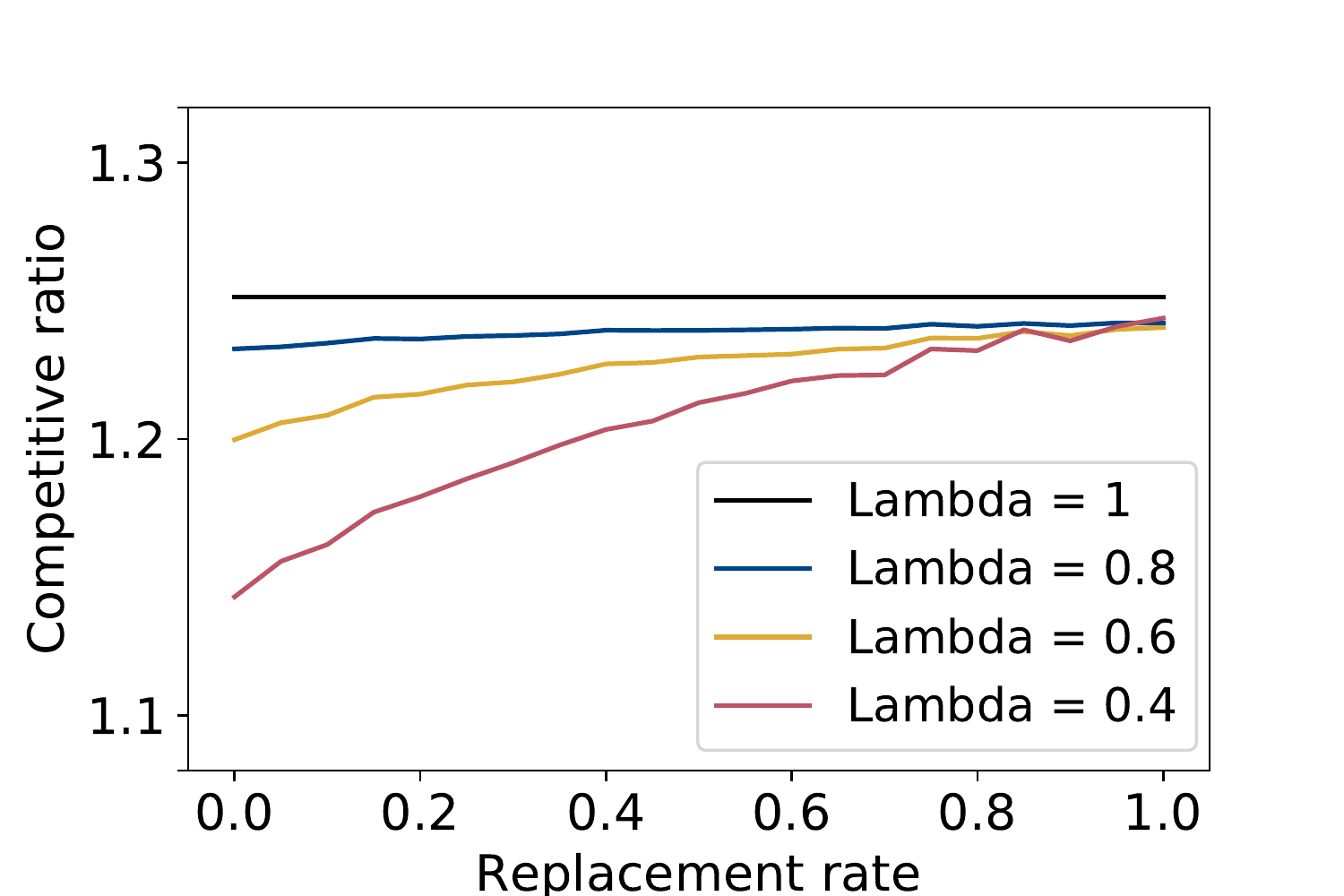}
  \caption{Poisson distribution}
  \label{fig:poisson_results}
\end{subfigure}%
\begin{subfigure}{.32\textwidth}
  \centering
  \includegraphics[width=1.\linewidth]{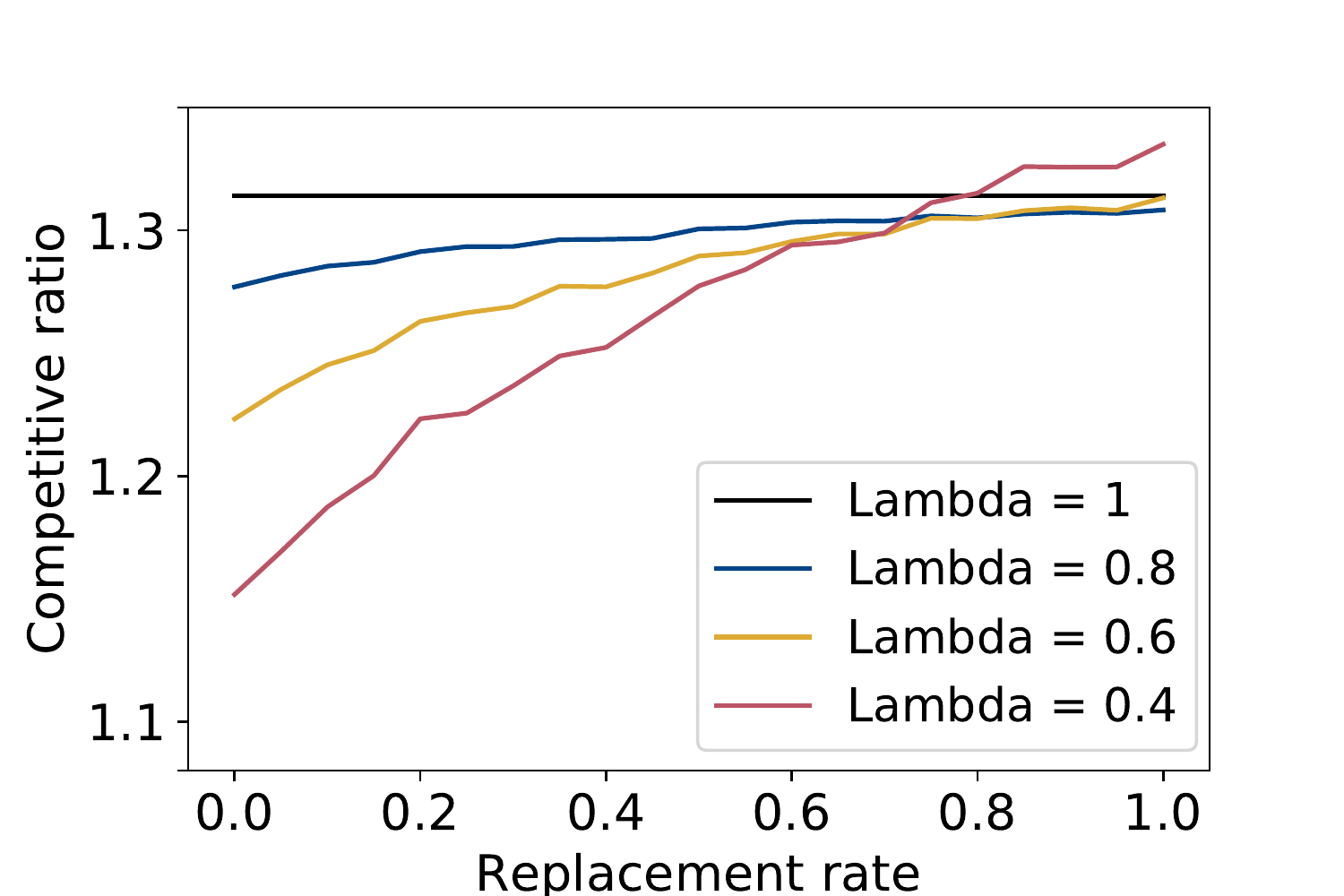}
  \caption{Pareto distribution}
  \label{fig:pareto_results}
\end{subfigure}
\begin{subfigure}{.32\textwidth}
  \centering
  \includegraphics[width=1.\linewidth]{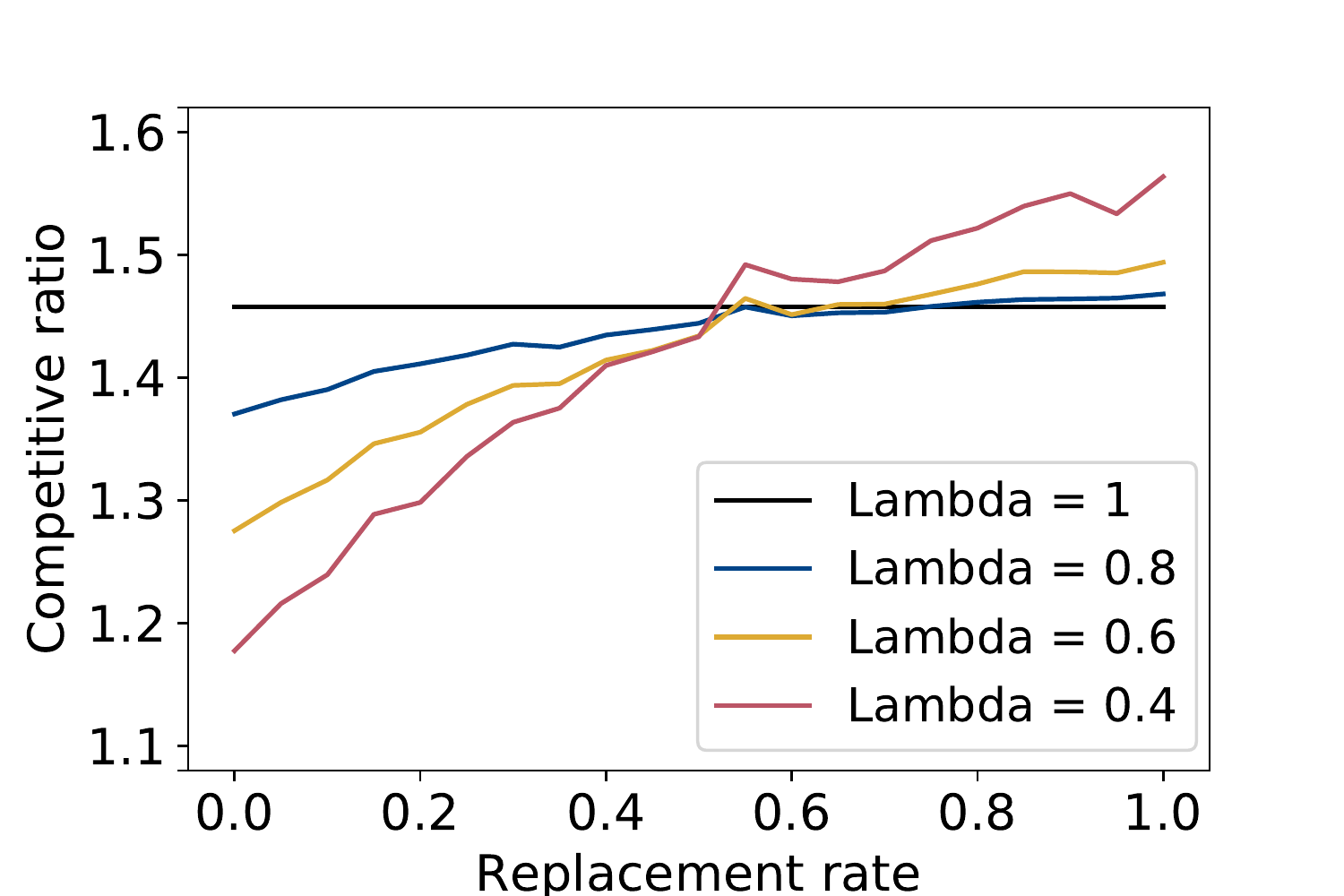}
  \caption{Iterated Poisson distribution}
  \label{fig:poisson_it_results}
\end{subfigure}
\caption{Competitive ratios under various distributions and replacement rates from $0$ to $1$}
\label{fig:results}
\end{figure}

\section{Future Directions}
\label{sec: future directions}
In this paper we present the PDLA technique, a learning augmented version of the classic Primal-Dual technique, and apply it to design algorithms for some classic online problems when a prediction is provided. Since the Primal-Dual technique is used to solve many more covering problems, like for instance weighted caching or load balancing \cite{BuchbinderBook}, an interesting research direction would be to apply PDLA to tackle those problems and (hopefully) get tight consistency-robustness trade-off guarantees (as the one achieved by Algorithm \ref{alg:PDLA for Ski-Rental} and proved in Lemma \ref{lem:optimality lemma}). In addition to that, we suspect that this work might provide insights not only for covering but also for some packing problems which are solved using the Primal-Dual technique in the classic online model (e.g. revenue maximization in ad-auctions \cite{ad_auctionsESA}). Finally, another interesting direction would be to incorporate predictions into the Primal-Dual technique when used to solve covering problems where the objective function is non linear (e.g. convex).

\section*{Broader Impact}
The field of learning augmented algorithms lies in the intersection of machine learning and online algorithms, trying to combine the best of the two worlds. Learning augmented algorithms are particularly suited for critical applications where maintaining worst-case guarantees is mandatory but at the same time predictions about the future are possible. Thus, our work represents a stepping stone towards (easily) integrating ML predictions in such applications, increasing this way the possible benefits of ML to society. PDLA offers a recipe on how to incorporate predictions to tackle classical covering online problems, that is to first solve the online problem using the Primal-Dual technique and then use the prediction to change the rate at which primal and dual variables increase or decrease. We believe that since the idea behind this technique is simple and does not require too much domain-specific knowledge, it might be applicable to different problems and can also be implemented in practice.

\section*{Acknowledgments and Disclosure of Funding}
This research is supported by the Swiss National Science Foundation project 200021-184656 ``Randomness in Problem Instances and Randomized Algorithms''. Andreas Maggiori was supported by the Swiss National Science Fund (SNSF) grant n\textsuperscript{o} $200020\_182517/1$ ``Spatial Coupling of Graphical Models in Communications, Signal Processing, Computer Science and Statistical Physics''.

\nocite{*}
\bibliographystyle{plainnat}
\bibliography{main}

\begin{thebibliography}{31}
\providecommand{\natexlab}[1]{#1}
\providecommand{\url}[1]{\texttt{#1}}
\expandafter\ifx\csname urlstyle\endcsname\relax
  \providecommand{\doi}[1]{doi: #1}\else
  \providecommand{\doi}{doi: \begingroup \urlstyle{rm}\Url}\fi

\bibitem[Alon et~al.(2003)Alon, Awerbuch, and Azar]{AAA03}
Noga Alon, Baruch Awerbuch, and Yossi Azar.
\newblock The online set cover problem.
\newblock In \emph{Proceedings of the Thirty-Fifth Annual ACM Symposium on
  Theory of Computing}, STOC ’03, page 100–105, New York, NY, USA, 2003.
  Association for Computing Machinery.
\newblock ISBN 1581136749.
\newblock \doi{10.1145/780542.780558}.
\newblock URL \url{https://doi.org/10.1145/780542.780558}.

\bibitem[Antoniadis et~al.(2020)Antoniadis, Coester, Elias, Polak, and
  Simon]{antoniadis2020online}
Antonios Antoniadis, Christian Coester, Marek Elias, Adam Polak, and Bertrand
  Simon.
\newblock Online metric algorithms with untrusted predictions, 2020.

\bibitem[{Bansal} et~al.(2007){Bansal}, {Buchbinder}, and
  {Naor}]{weighted_cachingFOCS}
N.~{Bansal}, N.~{Buchbinder}, and J.~{Naor}.
\newblock A primal-dual randomized algorithm for weighted paging.
\newblock In \emph{48th Annual IEEE Symposium on Foundations of Computer
  Science (FOCS'07)}, pages 507--517, 2007.

\bibitem[Buchbinder and (Seffi)~Naor(2009)]{BuchbinderBook}
Niv Buchbinder and Joseph (Seffi)~Naor.
\newblock The design of competitive online algorithms via a primal: Dual
  approach.
\newblock \emph{Found. Trends Theor. Comput. Sci.}, 3\penalty0
  (2–3):\penalty0 93–263, February 2009.
\newblock ISSN 1551-305X.
\newblock \doi{10.1561/0400000024}.
\newblock URL \url{https://doi.org/10.1561/0400000024}.

\bibitem[Buchbinder et~al.(2007)Buchbinder, Jain, and Naor]{ad_auctionsESA}
Niv Buchbinder, Kamal Jain, and Joseph~(Seffi) Naor.
\newblock Online primal-dual algorithms for maximizing ad-auctions revenue.
\newblock In Lars Arge, Michael Hoffmann, and Emo Welzl, editors,
  \emph{Algorithms -- ESA 2007}, pages 253--264, Berlin, Heidelberg, 2007.
  Springer Berlin Heidelberg.

\bibitem[Cho et~al.(2011)Cho, Myers, and Leskovec]{Brightkite}
Eunjoon Cho, Seth~A. Myers, and Jure Leskovec.
\newblock Friendship and mobility: User movement in location-based social
  networks.
\newblock In \emph{Proceedings of the 17th ACM SIGKDD International Conference
  on Knowledge Discovery and Data Mining}, KDD ’11, page 1082–1090, New
  York, NY, USA, 2011. Association for Computing Machinery.
\newblock ISBN 9781450308137.
\newblock \doi{10.1145/2020408.2020579}.
\newblock URL \url{https://doi.org/10.1145/2020408.2020579}.

\bibitem[Dong et~al.(2020)Dong, Indyk, Razenshteyn, and
  Wagner]{LAclusteringwithNN}
Yihe Dong, Piotr Indyk, Ilya Razenshteyn, and Tal Wagner.
\newblock Learning space partitions for nearest neighbor search.
\newblock In \emph{Eighth International Conference on Learning Representations
  (ICLR)}, April 2020.
\newblock URL
  \url{https://www.microsoft.com/en-us/research/publication/learning-space-partitions-for-nearest-neighbor-search/}.

\bibitem[Dooly et~al.(1998)Dooly, Goldman, and Scott]{DGS98}
Daniel~R Dooly, Sally~A Goldman, and Stephen~D Scott.
\newblock Tcp dynamic acknowledgment delay (extended abstract) theory and
  practice.
\newblock In \emph{Proceedings of the thirtieth annual ACM symposium on Theory
  of computing}, pages 389--398, 1998.

\bibitem[Fleischer(2001)]{F01}
Rudolf Fleischer.
\newblock On the bahncard problem.
\newblock \emph{Theoretical Computer Science}, 268\penalty0 (1):\penalty0 161
  -- 174, 2001.
\newblock ISSN 0304-3975.
\newblock \doi{https://doi.org/10.1016/S0304-3975(00)00266-8}.
\newblock URL
  \url{http://www.sciencedirect.com/science/article/pii/S0304397500002668}.
\newblock On-line Algorithms '98.

\bibitem[Gollapudi and Panigrahi(2019)]{GP19}
Sreenivas Gollapudi and Debmalya Panigrahi.
\newblock Online algorithms for rent-or-buy with expert advice.
\newblock In Kamalika Chaudhuri and Ruslan Salakhutdinov, editors,
  \emph{Proceedings of the 36th International Conference on Machine Learning},
  volume~97 of \emph{Proceedings of Machine Learning Research}, pages
  2319--2327, Long Beach, California, USA, 09--15 Jun 2019. PMLR.
\newblock URL \url{http://proceedings.mlr.press/v97/gollapudi19a.html}.

\bibitem[Gong et~al.(2001)Gong, Liu, Misra, and Towsley]{Gong01onthe}
Weibo Gong, Yong Liu, Vishal Misra, and Don Towsley.
\newblock On the tails of web file size distributions.
\newblock In \emph{in: Proceedings of 39th Allerton Conference on
  Communication, Control, and Computing}, 2001.

\bibitem[Hsu et~al.(2019)Hsu, Indyk, Katabi, and
  Vakilian]{IndykFrequencyestimation}
Chen{-}Yu Hsu, Piotr Indyk, Dina Katabi, and Ali Vakilian.
\newblock Learning-based frequency estimation algorithms.
\newblock In \emph{7th International Conference on Learning Representations,
  {ICLR} 2019, New Orleans, LA, USA, May 6-9, 2019}, 2019.
\newblock URL \url{https://openreview.net/forum?id=r1lohoCqY7}.

\bibitem[{Karlin} et~al.(1986){Karlin}, {Manasse}, {Rudolph}, and
  {Sleator}]{KMR86}
A.~R. {Karlin}, M.~S. {Manasse}, L.~{Rudolph}, and D.~D. {Sleator}.
\newblock Competitive snoopy caching.
\newblock In \emph{27th Annual Symposium on Foundations of Computer Science
  (sfcs 1986)}, pages 244--254, 1986.

\bibitem[Karlin et~al.(1990)Karlin, Manasse, McGeoch, and Owicki]{KMMO90}
Anna~R. Karlin, Mark~S. Manasse, Lyle~A. McGeoch, and Susan Owicki.
\newblock Competitive randomized algorithms for non-uniform problems.
\newblock In \emph{Proceedings of the First Annual ACM-SIAM Symposium on
  Discrete Algorithms}, SODA ’90, page 301–309, USA, 1990. Society for
  Industrial and Applied Mathematics.
\newblock ISBN 0898712513.

\bibitem[Karlin et~al.(2001)Karlin, Kenyon, and Randall]{TCPandOtherStories}
Anna~R. Karlin, Claire Kenyon, and Dana Randall.
\newblock Dynamic tcp acknowledgement and other stories about e/(e-1).
\newblock In \emph{Proceedings of the Thirty-Third Annual ACM Symposium on
  Theory of Computing}, STOC ’01, page 502–509, New York, NY, USA, 2001.
  Association for Computing Machinery.
\newblock ISBN 1581133499.
\newblock \doi{10.1145/380752.380845}.
\newblock URL \url{https://doi.org/10.1145/380752.380845}.

\bibitem[Kodialam(2019)]{Kodialam_ski_rental}
Rohan Kodialam.
\newblock Optimal algorithms for ski rental with soft machine-learned
  predictions.
\newblock \emph{CoRR}, abs/1903.00092, 2019.
\newblock URL \url{http://arxiv.org/abs/1903.00092}.

\bibitem[Lattanzi et~al.(2020)Lattanzi, Lavastida, Moseley, and
  Vassilvitskii]{DBLP:conf/soda/LattanziLMV20}
Silvio Lattanzi, Thomas Lavastida, Benjamin Moseley, and Sergei Vassilvitskii.
\newblock Online scheduling via learned weights.
\newblock In \emph{Proceedings of the 2020 {ACM-SIAM} Symposium on Discrete
  Algorithms, {SODA} 2020, Salt Lake City, UT, USA, January 5-8, 2020}, pages
  1859--1877, 2020.
\newblock \doi{10.1137/1.9781611975994.114}.
\newblock URL \url{https://doi.org/10.1137/1.9781611975994.114}.

\bibitem[Lee et~al.(2019)Lee, Hajiesmaili, and
  Li]{DBLP:journals/corr/abs-1911-07972}
Russell Lee, Mohammad~H. Hajiesmaili, and Jian Li.
\newblock Learning-assisted competitive algorithms for peak-aware energy
  scheduling.
\newblock \emph{CoRR}, abs/1911.07972, 2019.
\newblock URL \url{http://arxiv.org/abs/1911.07972}.

\bibitem[Lykouris and Vassilvitskii(2018)]{DBLP:conf/icml/LykourisV18}
Thodoris Lykouris and Sergei Vassilvitskii.
\newblock Competitive caching with machine learned advice.
\newblock In \emph{Proceedings of the 35th International Conference on Machine
  Learning, {ICML} 2018, Stockholmsm{\"{a}}ssan, Stockholm, Sweden, July 10-15,
  2018}, pages 3302--3311, 2018.
\newblock URL \url{http://proceedings.mlr.press/v80/lykouris18a.html}.

\bibitem[Marathe and Hawe(1982)]{Marathe82}
M.~Marathe and W.~Hawe.
\newblock Predicted capacity of ethernet in a university environment.
\newblock In \emph{Proceedings of Southcon 1982}, pages 1--10, 1982.

\bibitem[Medina and Vassilvitskii(2017)]{DBLP:conf/nips/MedinaV17}
Andres~Mu{\~{n}}oz Medina and Sergei Vassilvitskii.
\newblock Revenue optimization with approximate bid predictions.
\newblock In \emph{Advances in Neural Information Processing Systems 30: Annual
  Conference on Neural Information Processing Systems 2017, 4-9 December 2017,
  Long Beach, CA, {USA}}, pages 1858--1866, 2017.
\newblock URL
  \url{http://papers.nips.cc/paper/6782-revenue-optimization-with-approximate-bid-predictions}.

\bibitem[Mitzenmacher(2003)]{mitzenmacher2003}
Michael Mitzenmacher.
\newblock Dynamic models for file sizes and double pareto distributions.
\newblock \emph{Internet Math.}, 1\penalty0 (3):\penalty0 305--333, 2003.
\newblock URL \url{https://projecteuclid.org:443/euclid.im/1109190964}.

\bibitem[Mitzenmacher(2018)]{BloomFilters}
Michael Mitzenmacher.
\newblock A model for learned bloom filters and optimizing by sandwiching.
\newblock In S.~Bengio, H.~Wallach, H.~Larochelle, K.~Grauman, N.~Cesa-Bianchi,
  and R.~Garnett, editors, \emph{Advances in Neural Information Processing
  Systems 31}, pages 464--473. Curran Associates, Inc., 2018.
\newblock URL
  \url{http://papers.nips.cc/paper/7328-a-model-for-learned-bloom-filters-and-optimizing-by-sandwiching.pdf}.

\bibitem[{Mitzenmacher}(2019)]{mitzenmacher2019scheduling}
Michael {Mitzenmacher}.
\newblock {Scheduling with Predictions and the Price of Misprediction}.
\newblock \emph{arXiv e-prints}, art. arXiv:1902.00732, February 2019.

\bibitem[Purohit et~al.(2018)Purohit, Svitkina, and Kumar]{skirentalwithML}
Manish Purohit, Zoya Svitkina, and Ravi Kumar.
\newblock Improving online algorithms via {ML} predictions.
\newblock In \emph{Advances in Neural Information Processing Systems 31: Annual
  Conference on Neural Information Processing Systems 2018, NeurIPS 2018, 3-8
  December 2018, Montr{\'{e}}al, Canada}, pages 9684--9693, 2018.
\newblock URL
  \url{http://papers.nips.cc/paper/8174-improving-online-algorithms-via-ml-predictions}.

\bibitem[Rohatgi(2020)]{caching_SODA}
Dhruv Rohatgi.
\newblock Near-optimal bounds for online caching with machine learned advice.
\newblock In \emph{Proceedings of the Thirty-First Annual ACM-SIAM Symposium on
  Discrete Algorithms}, SODA ’20, page 1834–1845, USA, 2020. Society for
  Industrial and Applied Mathematics.

\bibitem[Seiden(2000)]{TCP_lowerbound}
Steven~S. Seiden.
\newblock A guessing game and randomized online algorithms.
\newblock In \emph{Proceedings of the Thirty-Second Annual ACM Symposium on
  Theory of Computing}, STOC ’00, page 592–601, New York, NY, USA, 2000.
  Association for Computing Machinery.
\newblock ISBN 1581131844.
\newblock \doi{10.1145/335305.335385}.
\newblock URL \url{https://doi.org/10.1145/335305.335385}.

\bibitem[{Wang} et~al.(2020){Wang}, {Li}, and {Wang}]{WLW20_skirental}
Shufan {Wang}, Jian {Li}, and Shiqiang {Wang}.
\newblock {Online Algorithms for Multi-shop Ski Rental with Machine Learned
  Predictions}.
\newblock \emph{arXiv e-prints}, art. arXiv:2002.05808, February 2020.

\bibitem[{Wikipedia contributors}(2004)]{wiki:lomax}
{Wikipedia contributors}.
\newblock Lomax distribution, 2004.
\newblock URL \url{https://en.wikipedia.org/wiki/Lomax_distribution}.
\newblock [Online; accessed 18-May-2020].

\bibitem[Wilson(2006)]{TCP_history}
Michael Wilson.
\newblock A historical view of network traffic models.
\newblock \url{http://www.cse.wustl.edu/~jain/cse567-06/traffic_models2.htm},
  2006.

\bibitem[Xu and Xu(2004)]{DBLP:conf/isnn/XuX04}
Yinfeng Xu and Weijun Xu.
\newblock Competitive algorithms for online leasing problem in probabilistic
  environments.
\newblock In \emph{Advances in Neural Networks - {ISNN} 2004, International
  Symposium on Neural Networks, Dalian, China, August 19-21, 2004, Proceedings,
  Part {II}}, pages 725--730, 2004.
\newblock \doi{10.1007/978-3-540-28648-6\_116}.
\newblock URL \url{https://doi.org/10.1007/978-3-540-28648-6\_116}.

\end{thebibliography}


\iffullversion{
\clearpage

\appendix
\onecolumn


\section{Missing proofs for Set Cover}
\label{sec:Set Cover appendix}
We first present the slightly modified algorithm where we do not need the prediction to form a feasible solution. As mentioned in the main paper, when an element $e$ is uncovered by the prediction, i.e. $|\F (e) \cap \A| = 0$, we just run the purely online algorithm ($\lambda = 1$).

\newcommand{\laterm}{\frac{(1-\lambda) \cdot \mathbbm{1} \{ S \in \A\} }{w_S \cdot |\F (e) \cap \A|}}
\begin{algorithm}[H]
   \caption{\textsc{PDLA for Online Weighted Set Cover.}}
   \label{alg:general PDLA for Online Weighted Set Cover}
\begin{algorithmic}
   \STATE {\bfseries Input:} $\lambda$, $\A$
   \STATE {\bfseries Initialize:} $x_S \leftarrow 0$, $y_e \leftarrow 0$  $\forall S, e$
   \FORALL{element $e$ that just arrived}
        \WHILE{$\sum_{S\in \F (e)} x_S < 1$}
            \FORALL{$S\in \F (e)$}
                \IF{ $|\F (e) \cap \A| \geq 1$}
                    \STATE { \small \textit{/* Primal Update (more aggressive if $\mathbbm{1} \{ S \in \A\} = 1$ )}}
                    \STATE ${x_S\leftarrow x_S \left(1+\frac{1}{w_S}\right)+\frac{\lambda}{w_S \cdot |\F (e)|} + \laterm }$
                \ELSE
                    
                    \STATE {\small \textit{/* e is not covered by the prediction}}
                    \STATE $x_S\leftarrow x_S \cdot \left(1+\frac{1}{w_S}\right)+\frac{1}{w_S \cdot |\F (e)|}$
            \ENDIF
            \ENDFOR
                \STATE { \small\textit{/* Dual Update}}
                \STATE $y_e \leftarrow y_e+1$
        \ENDWHILE
   \ENDFOR
\end{algorithmic}
\end{algorithm}

We start by proving that the dual constraints are only violated by a multiplicative factor of $O\left(\log \left(\frac{d}{\lambda}\right)\right)$. Thus, scaling down the dual solution of Algorithm \ref{alg:general PDLA for Online Weighted Set Cover} by $O\left(\log \left(\frac{d}{\lambda}\right)\right)$ creates a feasible dual solution which will permit us to use weak duality.

\begin{lem}
\label{lem:set_constraints}
Let $y$ be the dual solution built by Algorithm \ref{alg:general PDLA for Online Weighted Set Cover}. Then $\frac{y}{\Theta( \log (d/\lambda))}$ is a feasible solution to the dual problem.
\end{lem}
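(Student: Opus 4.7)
The plan is to show that for every set $S$, the solution $y$ produced by Algorithm~\ref{alg:general PDLA for Online Weighted Set Cover} satisfies $\sum_{e \in S} y_e = O(w_S \log(d/\lambda))$. Dividing $y$ by this $\Theta(\log(d/\lambda))$ factor then yields a solution respecting every dual constraint $\sum_{e \in S} y_e \leq w_S$.

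To do this, I would fix an arbitrary set $S$ and track the evolution of the primal variable $x_S$. The key observation is that $y_e$ is incremented by $1$ (for some $e \in S$) precisely when the inner while loop performs one iteration for an element $e$ with $S \in \F(e)$, and that at each such iteration $x_S$ is updated as $x_S \leftarrow x_S \cdot (1 + 1/w_S) + b$ for some additive term $b$ whose exact value depends on the branch taken. Inspecting the three branches of Algorithm~\ref{alg:general PDLA for Online Weighted Set Cover}, I would verify that in all cases $b \geq \lambda/(w_S \cdot |\F(e)|) \geq \lambda/(w_S d)$: in the two branches with $|\F(e) \cap \A| \geq 1$ the $\lambda/(w_S |\F(e)|)$ summand appears explicitly, and in the branch with $|\F(e) \cap \A| = 0$ the additive term $1/(w_S |\F(e)|)$ is strictly larger.

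Next, by a direct induction on the number of updates to $x_S$, I would show that after $k$ updates $x_S \geq (\lambda/d) \cdot \bigl((1 + 1/w_S)^k - 1\bigr)$. The crucial structural property of the algorithm is then that once $x_S \geq 1$, no future element $e \in S$ can trigger any while-loop iteration, because the test $\sum_{S' \in \F(e)} x_{S'} < 1$ already fails from the contribution of $x_S$ alone. Hence if $k$ is the total number of updates ever applied to $x_S$, then just before the very last one we must have had $x_S < 1$, which combined with the inductive lower bound gives $(1 + 1/w_S)^{k-1} < 1 + d/\lambda$. Taking logarithms and using $\log(1+1/w_S) = \Omega(1/w_S)$ for $w_S \geq 1$ yields $k = O(w_S \log(d/\lambda))$; since $\sum_{e \in S} y_e$ equals exactly this count $k$, the bound follows.

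The only mildly delicate step is the uniform lower bound $b \geq \lambda/(w_S d)$ across the three branches of the algorithm, which is what introduces the $\log(d/\lambda)$ (instead of $\log d$) loss in the dual-constraint violation. The rest of the argument is a direct adaptation of the original geometric-growth analysis of Alon et al., with $\lambda$ merely shifting the logarithmic threshold from $d$ to $d/\lambda$.
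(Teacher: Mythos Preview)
Your proposal is correct and follows essentially the same approach as the paper's proof: both fix a set $S$, lower-bound the additive term in every branch by $\lambda/(w_S d)$, use induction to obtain $x_S \geq (\lambda/d)\bigl((1+1/w_S)^k - 1\bigr)$ after $k$ updates, and then exploit that $x_S < 1$ must hold before any update to $x_S$ (since the while-loop test would otherwise fail) to cap $k$ at $O(w_S \log(d/\lambda))$. The only cosmetic difference is that the paper bounds the post-update value $x_S \leq 3$ and plugs that into the inductive lower bound, whereas you use the pre-update bound $x_S < 1$ at step $k-1$; both routes give the same conclusion.
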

\begin{proof}
The proof essentially follows the same path as in \cite{BuchbinderBook}. The only constraints that can be violated are of the form $\sum_{e\in S} y_e \leq w_S$ for some $S\in \mathcal{F}$. Consider one such constraint. At every update of the primal variable $x_S$ the sum $\sum_{e\in S} y_e$ increases by 1, since the dual variable corresponding to the newly arrived element increases by $1$ . We prove by induction on the number of such updates that at any point in time ${x_S \geq \frac{\lambda}{d} \left(   \left(1 + \frac{1}{w_S}\right)^{\sum_{e \in S} y_e} -1  \right)}$. Indeed, when no update concerning $S$ is done we have that $x_S=0$ and $\sum_{e\in S} y_e = 0$. Suppose this is true after $k$ updates of the variable $x_S$, i.e. $\sum_{e\in S} y_e = k$. Now, assume that a newly arrived element $e^* \in S$ provokes a primal update from  $x_S^{\textit{old}}$ to $x_S^{\textit{new}}$ and increases its dual value by one, i.e. ${y_{e^*}^{\textit{new}} = y_{e^*}^{\textit{old}}+1}$. Then we always have:
\begin{align*}
    x_S^{\textit{new}} \geq &  x_S^{\textit{old}} \cdot \left(1 + \frac{1}{w_S}\right) + \min \left\lbrace \frac{1}{|\F (e)| \cdot w_S} ,  \frac{\lambda}{|\F (e)| \cdot w_S}  + \frac{(1-\lambda)\cdot \mathbbm{1} \{ S \in \A\}}{|\F (e) \cap \A| \cdot w_S}  \right\rbrace \geq \\
     \geq & x_S^{\textit{old}} \cdot \left(1 + \frac{1}{w_S}\right) + \frac{\lambda}{d \cdot w_S}
\end{align*}
Thus, by the induction hypothesis
\begin{align*}
        x_S^{\textit{new}}\geq &   \frac{\lambda}{d} \left(   \left(1 + \frac{1}{w_S}\right)^{\sum_{e \in S \setminus \{ e^*\}} y_e + y_{e^*}^{\textit{old}}} -1  \right) \cdot \left( 1 + \frac{1}{w_S}\right) + \frac{\lambda}{d \cdot w_S} \\
     = & \frac{\lambda}{d} \left(   \left(1 + \frac{1}{w_S}\right)^{\sum_{e \in S \setminus \{ e^*\}} y_e + y_{e^*}^{\textit{new}}} -1  \right) = \frac{\lambda}{d} \left(   \left(1 + \frac{1}{w_S}\right)^{\sum_{e \in S } y_e } -1  \right)
\end{align*}

Moreover, since $w_S \geq 1$, we have that $(1 + 1/w_s )^{w_s} \geq 2$, thus:
\begin{equation*}
x_S \geq \frac{\lambda}{d} \left(   \left(1 + \frac{1}{w_S}\right)^{w_S \cdot \frac{\sum_{e \in S} y_e}{w_S}} -1  \right)\geq \frac{\lambda}{d}\left(2^{\frac{\sum_{e \in S} y_e}{w_S}}-1 \right)
\end{equation*}

We continue by upper bounding the value of $x_S$. Note that once $x_S\geq 1$, no more primal updates can happen, therefore whenever an update is made we have $x_S < 1$ just before the update. Thus:
\begin{align*}
    x_S^{\textit{new}} &\leq  x_S^{\textit{old}} \cdot \left(1 + \frac{1}{w_S}\right) + \max \left\lbrace \frac{\lambda }{w_S \cdot |\F (e)|} + \laterm , \frac{1}{w_S \cdot |\F (e)|} \right\rbrace \\
    &\leq x_S^{\textit{old}} \cdot 2 + 1 \leq 3
\end{align*}
Combining the lower and upper bound on $x_S$ we get that:
\begin{equation*}
    \sum_{e \in S} y_e \leq \logb{\frac{3 d}{\lambda} +1} \cdot w_S = O(\logb{d/\lambda} ) \cdot w_S
\end{equation*}
which concludes the proof.
\end{proof}

\newcommand{\xnew}{x_S^{\textit{new}}}
\newcommand{\xold}{x_S^{\textit{old}}}
\newcommand{\sumoverall}{\sum_{S \in \F (e)}}
\newcommand{\sumovercovered}{\sum_{S \in \F (e) \cap \A}}
\newcommand{\sumoveruncovered}{\sum_{S \in \F (e) \setminus \F (e) \cap \A}}

\begin{lem}[Robustness]
\label{lem: robustness of general PDLA for set cover }
The competitive ratio is always bounded by $O\left(\log\left(\frac{d}{\lambda}\right)\right)$
\end{lem}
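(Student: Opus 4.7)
The strategy is the standard primal--dual robustness argument, combined with the violation bound already established in Lemma~\ref{lem:set_constraints}. I will (i) bound the ratio $\Delta P/\Delta D$ of the increments of the primal and dual objectives per execution of the inner while loop by an absolute constant, (ii) invoke Lemma~\ref{lem:set_constraints} to turn the dual built by Algorithm~\ref{alg:general PDLA for Online Weighted Set Cover} into a feasible dual solution after scaling by $O(\log(d/\lambda))$, and (iii) chain these with weak LP duality to conclude.

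Fix an iteration of the while loop triggered by a newly arrived element $e$. The dual update sets $y_e \leftarrow y_e+1$, so $\Delta D = 1$. For the primal update, consider the case $|\F(e)\cap\A|\ge 1$ (the other case is analogous and simpler). The change in primal cost is
\begin{align*}
\Delta P \;=\; \sum_{S\in \F(e)} w_S\,(\xnew - \xold)
\;=\; \sum_{S\in\F(e)} \xold \;+\; \sum_{S\in\F(e)}\frac{\lambda}{|\F(e)|} \;+\; \sum_{S\in\F(e)\cap\A}\frac{1-\lambda}{|\F(e)\cap\A|}.
\end{align*}
The first sum is $<1$ because the while loop would otherwise not execute, the second sum equals $\lambda$, and the third equals $1-\lambda$. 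Thus $\Delta P \le 1 + \lambda + (1-\lambda) = 2$, and the exact same bookkeeping yields $\Delta P \le 2$ in the uncovered case. Summing over all loop iterations, the total primal cost of the fractional solution produced by Algorithm~\ref{alg:general PDLA for Online Weighted Set Cover} satisfies
\begin{equation*}
c_{\tiny\mathcal{PDLA}}(\A,\I,\lambda) \;=\; \sum_{\text{updates}}\Delta P \;\le\; 2\sum_{\text{updates}}\Delta D \;=\; 2\sum_{e}y_e.
\end{equation*}

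To finish, Lemma~\ref{lem:set_constraints} shows that $\widetilde{y} := y/\Theta(\log(d/\lambda))$ is a feasible dual solution to the set cover LP, so weak duality gives $\sum_e \widetilde{y}_e \le \OPT_{\text{LP}}(\I) \le \OPT(\I)$, i.e.\ $\sum_e y_e \le O(\log(d/\lambda))\cdot \OPT(\I)$. Plugging back yields
\begin{equation*}
c_{\tiny\mathcal{PDLA}}(\A,\I,\lambda) \;\le\; 2\sum_e y_e \;\le\; O\!\left(\log\!\tfrac{d}{\lambda}\right)\cdot \OPT(\I),
\end{equation*}
which is the claimed robustness guarantee. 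The only technically delicate step is the per-iteration bound $\Delta P \le O(1)$; this is where the extra ``prediction-aware'' term with factor $(1-\lambda)/|\F(e)\cap\A|$ appears, and the argument hinges on the fact that summing this term over $S\in\F(e)\cap\A$ telescopes to $1-\lambda$ independently of the prediction, so the same constant bound holds regardless of $\A$.
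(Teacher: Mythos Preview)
Your proof is correct and follows essentially the same approach as the paper's own argument: bound $\Delta P/\Delta D\le 2$ per while-loop iteration via the same case split, then combine with Lemma~\ref{lem:set_constraints} and weak duality. The only cosmetic difference is that the paper also lists primal feasibility as an item in its summary, but this is not needed for the cost bound itself.
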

\begin{proof}
 We denote as before by $\xold$ and $\xnew$ the primal variables before and after the update respectively. Each time the while loop is executed we have that $\sumoverall \xold < 1$ and  the increase in the dual is $\Delta D = 1$. Denote by $\delta x_S=\xnew-\xold$ the increase of a variable for a specific set $S$. If an element is covered by the prediction then it holds that:
\begin{align*}
    \Delta P  =&   \sumoverall w_S \cdot \delta x_S = \sumovercovered w_S \cdot \delta x_S + \sumoveruncovered w_S \cdot \delta x_S = \\
    = & \sumoverall \left( \xold + \frac{\lambda}{|\F (e)|}\right) + \sumovercovered \frac{(1 - \lambda)}{|\F (e)\cap \mathcal A|} = \sumoverall \xold + \lambda + 1 - \lambda \leq 2
\end{align*}
By repeating the same calculation we get that if an element is uncovered by the prediction then:
\begin{align*}
    \Delta P  =&   \sumoverall w_S \cdot \delta x_S = \sumoverall \left( \xold + \frac{1}{|\F (e)|}\right)  = \sumoverall \xold +  1 \leq 2
\end{align*}

Overall we have that:
\begin{enumerate}
\item At any iteration $\frac{\Delta P }{\Delta D} \leq 2$.
\item The final primal solution is feasible.
\item By Lemma \ref{lem:set_constraints}, denoting $y$ the final dual solution, $\frac{y}{\Theta(\log(d/\lambda)}$ is feasible.
\end{enumerate}
Thus, by weak duality we get that the competitive ratio of Algorithm \ref{alg:general PDLA for Online Weighted Set Cover} is upper bounded by ${2 \cdot O(\logb{d/\lambda} ) }= {O(\logb{d/\lambda}))}$. 
\end{proof}

\newcommand{\lapdcost}{c_{\tiny \mathcal{PDLA}}(\mathcal{A}, \mathcal I, \lambda)}
\newcommand{\nccost}{C_{\textit{nc}}}
\newcommand{\pcost}{S(\A, \I)}
In the following we do not assume that our prediction $\A$ forms a feasible solution. Therefore we will denote by 
\begin{enumerate}
    \item $\pcost$ the cost of the (possibly partial) covering if prediction $\A$ is followed blindly.
    \item $\nccost$ the cost of optimally covering elements which are not covered by the prediction.
    \item $\lapdcost$ the cost of the covering solution calculated by Algorithm \ref{alg:general PDLA for Online Weighted Set Cover}.
\end{enumerate}

\begin{lem}[Consistency]
\label{lem: consistency of general PDLA for set cover}
$\lapdcost \leq O\left(\frac{1}{1-\lambda}\right) \cdot \pcost + O (\logb{d}) \cdot \nccost$
\end{lem}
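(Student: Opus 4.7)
The plan is to partition the executions of the inner while loop according to whether the newly arrived element $e$ is \emph{covered by the prediction} ($|\F(e) \cap \A| \geq 1$) or \emph{uncovered} ($|\F(e) \cap \A| = 0$). Writing the algorithm's cost as $\lapdcost = \sum_{\text{cov iter}} \Delta P + \sum_{\text{uncov iter}} \Delta P$, I will charge the first sum to $\pcost$ and the second sum to $\nccost$, handling each class separately.

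For a single \textbf{covered iteration}, I will decompose $\Delta P = \Delta P_c + \Delta P_u$ into the contributions from the sets in $\F(e) \cap \A$ and from the sets in $\F(e) \setminus \A$, respectively. The update rule immediately yields $\Delta P_c \geq 1 - \lambda$, since the terms $(1-\lambda)/|\F(e) \cap \A|$ in the update sum exactly to $1-\lambda$ over the $|\F(e) \cap \A|$ sets in the prediction (all the other contributions being nonnegative). On the other hand, using $\sum_{S \in \F(e)} x_S^{\text{old}} < 1$ together with $\sum_{S \in \F(e) \setminus \A} \lambda/|\F(e)| \leq \lambda$, a direct calculation gives $\Delta P_u < 1 + \lambda$. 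Combining, $\Delta P_u \leq \frac{1+\lambda}{1-\lambda}\, \Delta P_c$ and hence $\Delta P \leq \frac{2}{1-\lambda}\, \Delta P_c = O\!\left(\frac{1}{1-\lambda}\right) \Delta P_c$. Summing over covered iterations, $\sum_{\text{cov iter}} \Delta P_c = \sum_{S \in \A} w_S\, \delta_S$, where $\delta_S$ denotes the cumulative increase of $x_S$ during covered iterations. Since the proof of Lemma \ref{lem:set_constraints} shows $x_S \leq 3$ uniformly in time, $\delta_S \leq 3$, so the total cost from covered iterations is at most $O\!\left(\frac{1}{1-\lambda}\right) \cdot 3 \sum_{S \in \A} w_S = O\!\left(\frac{1}{1-\lambda}\right) \pcost$.

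For \textbf{uncovered iterations}, the update rule collapses to the pure online Algorithm \ref{alg:PD for Online Weighted Set Cover} of \cite{AAA03}, and as already observed in the proof of the robustness lemma, $\Delta P \leq 2$ and $\Delta D = 1$ per such iteration. Thus $\sum_{\text{uncov iter}} \Delta P \leq 2 \sum_{e \text{ uncov}} y_e^{\text{final}}$, and it remains to show $\sum_{e \text{ uncov}} y_e^{\text{final}} \leq O(\log d)\, \nccost$ by weak duality applied to the LP that only requires covering the uncovered elements. For this I will redo the induction of Lemma \ref{lem:set_constraints} but count only the uncovered updates, which satisfy $x_S^{\text{new}} \geq x_S^{\text{old}}(1 + 1/w_S) + 1/(d w_S)$ with no $\lambda$-dependence; the interleaved covered updates can only further raise $x_S$, which preserves the inductive lower bound. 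This yields $\sum_{e \in S,\, e \text{ uncov}} y_e \leq O(\log d)\, w_S$ for every $S$, i.e.\ dual feasibility up to an $O(\log d)$ factor on the restricted LP, and weak duality then delivers the desired bound.

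The main obstacle is the global charging in the covered case: the per-iteration bound $\Delta P_c \geq 1-\lambda$ cannot be matched iteration-by-iteration against the one-shot prediction cost $\pcost$, so one must aggregate across iterations and exploit the uniform ceiling $x_S \leq 3$ to amortize. A secondary subtlety is securing the $\log d$ (rather than $\log(d/\lambda)$) factor in the uncovered analysis, which is why the induction must be restricted to uncovered updates where the $\lambda$-dependent terms drop out.
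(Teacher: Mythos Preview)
Your proposal is correct and follows essentially the same approach as the paper: the same covered/uncovered split, the same decomposition $\Delta P = \Delta P_c + \Delta P_u$ with the bounds $\Delta P_c \geq 1-\lambda$ and $\Delta P_u < 1+\lambda$, and the same restricted-LP weak-duality argument for the uncovered part. Your charging of $\sum \Delta P_c$ to $\pcost$ via the uniform bound $x_S \leq 3$ is in fact more explicit than the paper's own sentence (``charge to $\pcost$ losing only a multiplicative $O(1)$ factor''), and your remark that interleaved covered updates only help the inductive lower bound in the uncovered analysis correctly fills in a detail the paper leaves implicit.
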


\begin{proof}
We split the analysis in two parts. First, we look at the case when an element which is uncovered by the prediction arrives. In this case Algorithm \ref{alg:general PDLA for Online Weighted Set Cover} emulates the pure online algorithm ($\lambda = 1$). More precisely, by the same calculations as before, we can show that $y_{nc}$ the solution of the dual problem restricted to the uncovered elements satisfy the property that $\frac{y_{nc}}{O(\log d)}$ is feasible. Therefore for those elements by Lemma \ref{lem: robustness of general PDLA for set cover } the cost of Algorithm \ref{alg:general PDLA for Online Weighted Set Cover} is upper bounded by $O(\log d) \cdot \nccost$. We turn our attention to the more interesting case where the prediction covers an element. In this case, after the execution of the while loop we decompose the primal increase into two parts. $\Delta P_c$ which denotes the increase due to sets $S$ chosen by $\A$ ($\mathbbm{1} \{ S \in \A\} =1$) and $\Delta P_u$ which denotes the increase due to sets $S$ not chosen by the prediction ($\mathbbm{1} \{ S \in \A\} = 0$), thus we have ${\Delta P = \Delta P_c + \Delta P_u}$. Let $c = \{  S \in \F (e) : \mathbbm{1} \{ S \in \A\} =1 \}$ and $u = \{  S \in \F (e) : \mathbbm{1} \{ S \in \A\} =0 \}$. We then have:
\begin{align*}
    \Delta P_c &= \sum_{S \in c} x_S + \frac{\lambda \cdot |c|}{|c| + |u|} + 1-\lambda  \geq \frac{\lambda}{d} + 1 - \lambda\\
    \Delta P_u &= \sum_{S \in u} x_S + \frac{\lambda \cdot |u|}{|c| + |u|}  \leq 1 + \lambda\\
\end{align*}
since , $\frac{|c|}{|c|+ |u|} \geq \frac{1}{d}$ and $\frac{|u|}{|c|+ |u|} \leq 1$.
Combining the two bounds we get that $\Delta P_u \leq \frac{1+ \lambda}{\frac{\lambda}{d} + 1 - \lambda} \cdot \Delta P_c$ and consequently:
\begin{equation*}
    \Delta P \leq \left(1 + \frac{1+ \lambda}{\frac{\lambda}{d} + 1 - \lambda} \right) \Delta P_c = O \left( \frac{1}{1 - \lambda} \right) \Delta P_c
\end{equation*}
Since the cost increase $\Delta P_c$ is caused by sets which are selected by the prediction, we can charge this cost to the corresponding increase of $\pcost$ loosing only a multiplicative $O(1)$ factor. By combining the two cases we conclude the proof.
\end{proof}

\section{Missing proofs for ski rental and the Bahncard problem}
\label{sec: Bahncard appendix}
We detail here the missing proofs from section \ref{sec:alternative ski rental}. We first prove our results regarding the ski rental problem and then focus on the Bahncard problem.

\subsection{The ski rental problem}

We provide here a full proof of Theorem \ref{thm:PDLA ski rental main}. In our setting, the prediction $\A$ is the predicted number of skiing days $\Npred$ and $S(\A,\I)=S(\Npred,\I)=B \cdot \mathbbm{1} \{ \Npred > B \} + N \cdot \mathbbm{1} \{ \Npred \leq B \} $ is the cost of following blindly the prediction. We first prove an easy lemma about the feasibility of the dual solution.
\begin{lem}
\label{lem:dual_ski_rental_feasible}
Let $y$ be the dual solution built by Algorithm \ref{alg:PDLA for Ski-Rental}. Then $y$ is a feasible solution (assuming $\frac{B}{\lambda}$ is integral if the prediction suggests to rent).
\end{lem}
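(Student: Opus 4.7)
The plan is to verify the two families of dual constraints from the LP in Figure \ref{fig:PD for Ski-Rental}: a per-day bound $y_j \leq 1$ for every day $j$ and a single budget bound $\sum_j y_j \leq B$. The per-day constraint is immediate since Algorithm \ref{alg:PDLA for Ski-Rental} only ever assigns $y_j \in \{0, c'\}$ with $c' \in \{1,\lambda\}$ and $\lambda \in (0,1]$, so it suffices to focus on the budget constraint.

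For the budget constraint the key step is to count the maximum number of primal updates the algorithm can perform. First I would unroll the recurrence $x \leftarrow (1+1/B)\,x + 1/((c-1)B)$ with $x_0 = 0$ to obtain the closed form $x_k = ((1+1/B)^k - 1)/(c-1)$ after $k$ updates. Since an update on day $j$ occurs only when the current $x$ satisfies $x < 1$, the total number of updates (and hence the number of days with $y_j \neq 0$) is the smallest integer $k$ such that $(1+1/B)^k \geq c$.

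Plugging in the two branches of the algorithm then finishes the argument. In the buy branch ($\Npred \geq B$), $c = e(\lambda) = (1+1/B)^{\lambda B}$, so at most $\lceil \lambda B \rceil \leq B$ updates can happen (using $\lambda \leq 1$ and $B$ integer) and each contributes $c' = 1$, giving $\sum_j y_j \leq B$. In the rent branch ($\Npred < B$), $c = e(1/\lambda) = (1+1/B)^{B/\lambda}$, so at most $\lceil B/\lambda \rceil$ updates occur; invoking the integrality hypothesis $B/\lambda \in \mathbb{Z}$ yields $\sum_j y_j \leq (B/\lambda)\cdot \lambda = B$.

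The only point that genuinely needs care is why the integrality hypothesis is stated only for the rent branch: without it, $\lceil B/\lambda\rceil \cdot \lambda$ could marginally exceed $B$, whereas in the buy branch $\lceil \lambda B \rceil \leq B$ holds automatically because $\lambda \leq 1$. Beyond this bookkeeping, everything reduces to the unrolling of a linear recurrence, so I do not anticipate any real obstacle.
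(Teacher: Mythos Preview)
Your proposal is correct and follows essentially the same approach as the paper: both arguments bound the number of updates by showing that $x$ reaches $1$ after at most $\lceil \lambda B\rceil$ (buy branch) or $\lceil B/\lambda\rceil$ (rent branch) steps, then multiply by $c'$. The only cosmetic difference is that you solve the recurrence in closed form $x_k = ((1+1/B)^k - 1)/(c-1)$, whereas the paper establishes the equivalent lower bound $x(k) \geq \frac{e(k/B)-1}{e(\lambda)-1}$ by induction; the resulting counts and the handling of the integrality assumption are identical.
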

\begin{proof}
To see this, note that the only constraint that might by violated is the constraint $\sum_{j\in [N]}y_j \leq B$. Denote by $S$ the value of the sum $\sum_{j\in [N]}y_j$. Note that once $x\geq 1$, the value of $S$ will never change anymore. The value of $S$ increases by $1$ for every big update and by $\lambda$ for every small update. In the case $\Npred>B$, the algorithm always does big updates (the prediction suggest to buy). We claim that at most $\left\lceil \lambda B\right\rceil$ big updates can be made before $x\geq 1$. We denote $x(k)$ the value of $x$ after $k$ updates. We then prove by induction that $x(k)\geq \frac{e(k/B)-1}{e(\lambda)-1}$ (recall that $e(z) = (1+1/B)^{z \cdot B}\approx e^z$). Clearly, if $k=0$, we have $x(0)\geq 0$. Now assume this is the case for $k$ updates we then have
\begin{align*}
    x(k+1) &= \left(1+\frac{1}{B} \right)\cdot x(k) + \frac{1}{(e(\lambda)-1)\cdot B}\\
    &\geq  \left(1+\frac{1}{B} \right)\cdot \frac{e(k/B)-1}{e(\lambda)-1} + \frac{1}{(e(\lambda)-1)\cdot B}\\
    &= \frac{\left(1+1/B \right)\cdot (e(k/B)-1) + 1/B}{e(\lambda)-1}\\
    &= \frac{e((k+1)/B)-1}{e(\lambda)-1}
\end{align*}
which ends the induction. Hence at most $\left\lceil \lambda B\right\rceil\leq B$ big updates can be made before $x\geq 1$. This implies that $S\leq B$ at the end of the algorithm. In the case where $\Npred \leq B$, we prove in exactly the same way that at most $\left\lceil \frac{B}{\lambda}\right\rceil$ updates are performed before $x\geq 1$. Hence we have that $S\leq \lambda \cdot \left\lceil \frac{B}{\lambda}\right\rceil$. By assumption, we have that $B/\lambda$ is an integer hence $S\leq 1$ and $y$ is again feasible.
\end{proof}

We can finish the main proof.
\begin{proof}[Proof of Theorem \ref{thm:PDLA ski rental main}]
We prove first the robustness bound. By the Lemma \ref{lem:dual_ski_rental_feasible}, we know that the dual solution is feasible. Hence what remains to prove is to upper bound the ratio $\frac{\Delta P}{\Delta D}$ and use weak duality. In the case of a big update we have 
$$ \frac{\Delta P}{\Delta D} = \Delta P = 1+\frac{1}{e(\lambda)-1}=\frac{1}{1-e(-\lambda)}$$
In the case of a small update we have
$$\frac{\Delta P}{\Delta D} = \frac{\Delta P}{\lambda} = \frac{1}{\lambda} \cdot \frac{1}{1-e(-1/\lambda)}\leq \frac{1}{1-e(-\lambda)}$$
where the last inequality comes from Lemma \ref{lem:inequalities} inequality \eqref{eqn:lemma12_1b}. By weak duality, we have the robustness bound.

To prove consistency, we have two cases. If $\Npred\leq B$, then Algorithm \ref{alg:PDLA for Ski-Rental} does at most $N$ updates, each of cost at most $\frac{1}{1-e(-1/\lambda)}$ while the prediction $\A$ pays a cost of $N$. Noting again that, by Lemma \ref{lem:inequalities}, $\frac{1}{1-e(-1/\lambda)}\leq \frac{\lambda}{1-e(-\lambda)}$ ends the proof of consistency in this case. The other case is different. As in the proof of Lemma \ref{lem:dual_ski_rental_feasible}, we still have that $x(k)\geq \frac{e(k/B)-1}{e(\lambda)-1}$ hence at most $\left\lceil \lambda B\right\rceil\leq B$ updates are done by Algorithm, each of cost at most $\frac{1}{1-e(-\lambda)}$ hence a total cost of at most 
$$ \frac{\left\lceil \lambda B\right\rceil}{1-e(-\lambda)}$$ 
Since we assume in this case that $\lambda B$ is integral and that the prediction $\A$ pays a cost of $B$, the competitive ratio is indeed $\frac{\lambda}{1-e(-\lambda)}$
\end{proof}

\subsection{The Bahncard problem}

\paragraph{History of the problem.} The Banhcard problem, which was initially introduced in \cite{F01}, models a situation where a tourist travels every day multiple trips. Before any new trip, the tourist has two choices, either to buy a ticket for that particular trip at a cost of $1$ or buy a discount card, at a cost of $B$, and use this discount card to get a ticket for a price of $\beta < 1 $. The discount card is then valid for rest of that day and for the next $T-1$ days. This generalizes the ski rental problem in several ways, first the discount expires after a fixed amount of time, second buying only offers a discount and not a free trip. Note that if ${\beta = 0}$ and ${T \xrightarrow[]{} \infty} $ we recover the ski-rental problem. Karlin et al.  \cite{TCPandOtherStories} designed an optimal randomized online algorithm of competitive ratio $\frac{e}{e-1+\beta}$ when ${B \xrightarrow[]{} \infty}$.  

\paragraph{PDLA for the Bahncard problem.} We design, using PDLA, a learning augmented algorithm for the Bahncard problem. The final goal is to prove Theorem \ref{thm:PDLA bahncard}. An interesting feature of our algorithm is that, as for the TCP ack problem, it does not need to be given the full prediction in advance. If Bahncards are bought by the prediction $\A$ at a set of times $\{t_1, t_2,\ldots, t_k\}$, the algorithm does not need to know before time $t_i$ that the Bahncard $i$ is bought. For instance we could think of the prediction of an employee of the station giving short-term advice to a traveller every time he shows up at the station.

We now give the primal dual formulation of the Bahncard problem along with its corresponding learning augmented algorithm. We mention that, to the best of our knowledge, no online algorithm using the primal-dual method was designed before. Hence the primal-dual formulation (Figure \ref{fig:PD_Bahncard}) of the problem is new. In an integral solution, we would have $x_t=1$ if the solution buys a Bahncard at time $t$ and $x_t=0$ otherwise. Then $f_j$ represents the fractional amount of trip $j$ done at time $t(j)$ that is bought at full price and $d_j$ the amount of the trip bought at discounted price. The first natural constraint is the one that says that each trip should be paid entirely either in discounted or full price, i.e. $d_j+f_j\geq 1$. We then have the constraint $\sum_{t=t(j)-T}^{t(j)} x_t \geq d_j$ that says that to be able to buy a ticket at discounted price, at least one Bahncard must have been bought in the last $T$ time steps. 
\begin{figure}[h]
    \centering
    \caption{Primal Dual formulation of the Bahncard problem.}
    \label{fig:PD_Bahncard}
    \begin{tabular}{|c|c|}
    \hline
    \textbf{Primal} &  \textbf{Dual} \\
    \hline
     \text{minimize} $B \cdot \sum_{t \in T } x_t + \sum_{j \in M }\beta d_j + f_j$  & \text{maximize} $\sum_{j \in M } c_j$ \\
     \text{subject to:} $\quad d_j+f_j \geq 1 \quad \forall j$ & \text{subject to:} $\quad c_j\leq 1 \quad \forall j$ \\
     $\sum_{t=t(j)-T}^{t(j)} x_t \geq d_j \quad \forall j$ & $c_j-b_j\leq \beta \quad \forall j$ \\
     $x_t\geq 0 \quad \forall t\in \mathcal  T$ & $\sum_{j:t(j)-T\leq  t\leq  t(j)} b_j \leq B \quad \forall t\in \mathcal T$\\
     $d_j,f_j \geq 0 \quad \forall j$ & $c_j,b_j\geq 0 \quad \forall j$\\
     
     \hline 
\end{tabular}{}
    
\end{figure}{}

Following the same idea as for the ski rental problem, we will guide the updates in the primal-dual algorithm with the advice provided. We define a function $e(z)=\left(1+\frac{1-\beta}{B} \right)^{z\cdot (B/(1-\beta))}$. Again for $\frac{B}{1-\beta}\rightarrow \infty$, the reader should think intuitively of $e(z)$ as $e^z$. The parameter $z$ will then take values either $\lambda$ or $1/\lambda$ depending on if we want to do a big or small update in the primal. As for ski rental, when we do a small update, we will need to scale down the dual update by a factor of $\lambda$ to maintain feasibility of the dual solution.

The rule to decide if an update should be big or small is the following: if the prediction $\mathcal A$ bought a Bahncard less than $T$ time steps in the past (i.e. if the predicted solution has currently a valid Bahncard) the update should be big. Otherwise the update should be cautious. In algorithm \ref{alg:LA Primal-Dual for Bahncard}, we denote by $l_\mathcal{A}(t)$ the latest time before time t at which the prediction $\mathcal{A}$ bought a Bahncard. We use the convention that $l_\mathcal{A}(t)=-\infty$ if no Bahncard was bought before time $t$. Of course in this problem it is possible that trips show up while the fractional solution already has a full Bahncard available (i.e. $\sum_{t=t(j)-T}^{t(j)} x_t\geq 1$). In this case there is no point in buying more fractions of a Bahncard and the algorithm will do what we call a \textit{minimal} update. 

\begin{algorithm}[H]
   \caption{\textsc{LA Online Primal-Dual for the Bahncard problem}}
   \label{alg:LA Primal-Dual for Bahncard}
\begin{algorithmic}
   \STATE {\bfseries Input: $\lambda$, $\mathcal A$}
   \STATE {\bfseries Initialize:} $x,d,f \leftarrow 0$, $c,b\leftarrow 0$
   \FORALL{trip $j$}
        \IF{$\sum_{t=t(j)-T}^{t(j)} x_t \geq 1$}
        \STATE $d_j\leftarrow 1$
        \STATE $c_j\leftarrow \beta$
        \ENDIF
        \IF{$\sum_{t=t(j)-T}^{t(j)} x_t < 1$}
        \IF{$t(j)\leq l_\mathcal{A}(t(j))+T$}
        \STATE $d_j \leftarrow \sum_{t=t(j)-T}^{t(j)} x_t$
        \STATE $f_j \leftarrow 1-d_j$
        \STATE $x_{t(j)} \leftarrow x_{t(j)} + \frac{1-\beta}{B}\cdot \left(\sum_{t=t(j)-T}^{t(j)} x_t + \frac{1}{e(\lambda)-1}\right)$
        \STATE $b_j \leftarrow 1-\beta$
        \STATE $c_j\leftarrow b_j+\beta$
        \ENDIF
        \IF{$t(j)> l_\mathcal{A}(t(j))+T$}
        \STATE $d_j \leftarrow \sum_{t=t(j)-T}^{t(j)} x_t$
        \STATE $f_j \leftarrow 1-d_j$
        \STATE $x_{t(j)} \leftarrow x_{t(j)} + \frac{1-\beta}{B}\cdot \left( \sum_{t=t(j)-T}^{t(j)} x_t + \frac{1}{e(1/\lambda)-1}\right)$
        \STATE $b_j \leftarrow \lambda (1-\beta)$
        \STATE $c_j\leftarrow b_j+\beta$
        \ENDIF
        \ENDIF
   \ENDFOR
\end{algorithmic}
\end{algorithm}

We first prove that the dual built by the algorithm is almost feasible.

\begin{lem}
\label{lem:dual_bahncard}
Let $(c,b)$ be the dual solution built by Algorithm \ref{alg:LA Primal-Dual for Bahncard}, then $\frac{(c,b)}{1+(1-\beta)/B}$ is feasible.
\end{lem}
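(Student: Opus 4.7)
Set $\kappa:=1+(1-\beta)/B$; the plan is to verify, after dividing $(c,b)$ by $\kappa$, each of the dual constraints separately. The non-packing constraints are immediate by inspection of Algorithm \ref{alg:LA Primal-Dual for Bahncard}: $c_j\in\{\beta,\ \beta+\lambda(1-\beta),\ 1\}$, so $c_j\leq 1\leq\kappa$; and in every branch the algorithm enforces $c_j=b_j+\beta$ (with $b_j=0$ in the ``discount-already-available'' branch), hence $c_j-b_j=\beta\leq\beta\kappa$. Non-negativity is obvious.

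The real work is the packing constraint $\sum_{j:\,t\leq t(j)\leq t+T} b_j \leq B\kappa = B+1-\beta$. Fix a time $t$ and list chronologically the trips $j_1<\cdots<j_K$ in $[t,t+T]$ with $b_{j_k}>0$. Let $W_k:=\sum_{i=1}^{k}\Delta x_{t(j_i)}$ be the cumulative $x$-mass laid down by these updates. Since every $t(j_i)\in[t,t+T]$ and $t(j_k)-T\leq t$, the window $[t(j_k)-T,\,t(j_k)]$ contains all previous $t(j_i)$, so the triggering condition $\sum_{s=t(j_k)-T}^{t(j_k)}x_s<1$ forces $W_{k-1}<1$. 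Feeding this into the algorithm's update formula yields the recurrence $W_k\geq \alpha W_{k-1}+\gamma_k$, with $\alpha=1+(1-\beta)/B$ and $\gamma_k=\bigl((1-\beta)/B\bigr)/\bigl(e(z_k)-1\bigr)$, where $z_k=\lambda$ for a big update and $z_k=1/\lambda$ for a small one.

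When all updates in the window are of a single type, the standard exponential-growth induction of Lemma \ref{lem:dual_ski_rental_feasible} applies: in the pure big case it gives $K\leq\lceil\lambda B/(1-\beta)\rceil$, so $\sum_k b_{j_k}=(1-\beta)K\leq\lambda B+(1-\beta)$; in the pure small case it gives $K\leq\lceil B/(\lambda(1-\beta))\rceil$, so $\sum_k b_{j_k}=\lambda(1-\beta)K\leq B+\lambda(1-\beta)$. Both are at most $B+1-\beta=B\kappa$, as needed.

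The main obstacle is the mixed case, where big and small updates may interleave arbitrarily. I would dispose of it by a short exchange argument: if consecutive updates $k,k+1$ are small-then-big, swapping them changes $W_{k+1}$ (and hence $W_{K-1}$) by exactly $(\alpha-1)(\gamma_b-\gamma_s)>0$, where $\gamma_b>\gamma_s$. Therefore, for any fixed counts $(K_b,K_s)$, the ordering ``all smalls first, all bigs last'' minimizes $W_{K-1}$, and is the hardest ordering to rule out by the constraint $W_{K-1}<1$. For that canonical ordering, $W_{K-1}<1$ reads
\[
\alpha^{K_b-1}\cdot\frac{\alpha^{K_s}-1}{e(1/\lambda)-1}\;+\;\frac{\alpha^{K_b-1}-1}{e(\lambda)-1}\;<\;1.
\]
In the regime $B/(1-\beta)\to\infty$ allowed by the statement, setting $a=K_s(1-\beta)/B$ and $b=(K_b-1)(1-\beta)/B$ this becomes $\tfrac{e^{b}(e^{a}-1)}{e^{1/\lambda}-1}+\tfrac{e^{b}-1}{e^{\lambda}-1}\leq 1$, and a short boundary analysis (the gradient of $b+\lambda a$ pushes to the boundary $b=0$, and there the constraint reduces to $a\leq 1/\lambda$) shows $b+\lambda a\leq 1$ with equality at $(a,b)=(1/\lambda,0)$. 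Translating back yields $(1-\beta)(K_b+\lambda K_s)\leq B+1-\beta=B\kappa$, closing the packing constraint and completing the proof.
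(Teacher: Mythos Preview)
Your overall architecture matches the paper's: dispose of the per-trip constraints by inspection, then for the packing constraint track the growth of the $x$-mass in the window, reduce to the canonical ordering by an exchange argument, and bound that ordering. The paper packages the last two steps as a standalone technical lemma (Lemma~\ref{lem:sequence_lemma}), but the content is the same as your swap argument and your analysis of ``all smalls first, then all bigs.''

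The real gap is your final step. Writing the canonical-ordering constraint as
\[
F(a,b)\;:=\;\frac{e^{b}(e^{a}-1)}{e^{1/\lambda}-1}+\frac{e^{b}-1}{e^{\lambda}-1}\;\leq\;1,
\]
you need to deduce $b+\lambda a\leq 1$. Your justification, ``the gradient of $b+\lambda a$ pushes to the boundary $b=0$,'' is not a proof and is in fact pointing the wrong way: $\nabla(b+\lambda a)=(\lambda,1)$ favors increasing $b$, so there is no a~priori reason the maximum of $b+\lambda a$ on $\{F\leq 1\}$ should sit at $b=0$. What is actually required is that along the level curve $F=1$, the function $b+\lambda a$ is maximized at $(a,b)=(1/\lambda,0)$; equivalently, that the paper's $h(x)\geq 1$ for all $x$, which is exactly what Lemma~\ref{lem:sequence_lemma} establishes via a derivative computation that bottoms out in the non-trivial inequality~\eqref{eqn:lemma12_2} (and its limiting form~\eqref{eqn:lemma12_2b}). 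You have swept this under the rug; checking only the endpoint $(1/\lambda,0)$ does not rule out a larger value of $b+\lambda a$ somewhere in the interior of the curve.

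A secondary issue: the lemma is stated for finite $B$ and $\beta$, but you pass to the regime $B/(1-\beta)\to\infty$ before doing the boundary analysis. The paper proves the finite-parameter version directly (again via Lemma~\ref{lem:sequence_lemma}); even if the asymptotic statement suffices for the downstream Theorem~\ref{thm:PDLA bahncard}, your argument as written does not establish the lemma as stated.
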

\begin{proof}
Note that the constraints $c_j\leq 1$ and $c_j-b_j\leq \beta$ are clearly maintained by the algorithm. And scaling down both $c$ and $b$ by some factor bigger than $1$ will not alter their feasibility. Hence we focus only on the constraints of the form $\sum_{j:t(j)-T\leq  t\leq  t(j)} b_j \leq B$ for a fixed time $t$. Note that during a minimal update, the value of $b_j$ is not changed hence only small or big updates can alter the value of the sum $\sum_{j:t(j)-T\leq  t\leq  t(j)} b_j$. Similarly as for proofs in ski rental, denote by $b$ the number of big updates that are counted in this sum and by $s$ the number of small updates in this sum. 

We first notice that once we have that $\sum_{t'=t}^{t+T} x_{t'} \geq 1$, no updates that alter the constraint $\sum_{j:t(j)-T\leq  t\leq  t(j)} b_j$ can happen. To see this, note that upon arrival of a trip $j$ between time $t$ and $t+T$, we have $\sum_{t=t(j)-T}^{t(j)} x_t \geq \sum_{t'=t}^{t(j)} x_{t'}=\sum_{t'=t}^{t+T} x_{t'}$.

Denote by $S$ the value of the sum $\sum_{t'=t}^{t+T} x_{t'}$. Note that for a big update, we have that the value of the sum $S$ is increased to at least $S\cdot \left(1+\frac{1-\beta}{B} \right)+\frac{1-\beta}{B}\cdot \frac{1}{e(\lambda)-1}$. Similarly for a small update the new value of the sum is at least $S\cdot \left(1+\frac{1-\beta}{B} \right)+\frac{1-\beta}{B}\cdot \frac{1}{e(1/\lambda)-1}$. Hence we can apply directly Lemma \ref{lem:sequence_lemma} with $d = \frac{B}{1-\beta}$ to conclude that once $b+\lambda s \geq \frac{B}{1-\beta}$, we have that $S\geq 1$. 

Since for a big update, the sum $\sum_{j:t(j)-T\leq  t\leq  t(j)} b_j$ increases by $1-\beta$ and by $\lambda (1-\beta)$ for a small update we can see that the first time the constraint $\sum_{j:t(j)-T\leq  t\leq  t(j)} b_j\leq B$ is violated, we have $S\geq 1$. Now since each update in the sum $\sum_{j:t(j)-T\leq  t\leq  t(j)} b_j$ is of value at most $1-\beta$ we can conclude that at the end of the algorithm, we have $\sum_{j:t(j)-T\leq  t\leq  t(j)} b_j\leq B+1-\beta$ hence the conclusion. 
\end{proof}

We then prove robustness of Algorithm \ref{alg:LA Primal-Dual for Bahncard} by the following lemma. 

\begin{lem}[Robustness]
\label{lem:robustness Bahncard}
For any $\lambda\in (0,1]$ and any $\beta \in [0,1]$, PDLA for the Bahncard problem is $\frac{\left(e(\lambda)-\beta\right)\cdot \left(1+(1-\beta)/B \right)}{e(\lambda)-1}$-robust.
\end{lem}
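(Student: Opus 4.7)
The plan is to follow the standard primal-dual template: identify the three kinds of updates that Algorithm~\ref{alg:LA Primal-Dual for Bahncard} can perform (minimal, big, small), bound the ratio $\Delta P / \Delta D$ in each case, and combine this with the near-feasibility of the dual given by Lemma~\ref{lem:dual_bahncard} together with weak duality.

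First I would dispose of the \emph{minimal} update (case $\sum_{t=t(j)-T}^{t(j)} x_t \geq 1$): here $d_j$ jumps from $0$ to $1$ and $c_j$ jumps from $0$ to $\beta$, so $\Delta P = \beta = \Delta D$ and the ratio is $1$, which is already dominated by the target $\tfrac{e(\lambda)-\beta}{e(\lambda)-1}$. For a \emph{big} update, I would compute
\[
    \Delta P \;=\; B\cdot \Delta x_{t(j)} \;+\; \beta\, d_j \;+\; f_j
    \;=\; (1-\beta)\sum_{t=t(j)-T}^{t(j)} x_t + \frac{1-\beta}{e(\lambda)-1} + 1 - (1-\beta)\sum_{t=t(j)-T}^{t(j)} x_t,
\]
which simplifies to $\Delta P = 1 + \tfrac{1-\beta}{e(\lambda)-1} = \tfrac{e(\lambda)-\beta}{e(\lambda)-1}$, while $\Delta D = c_j = (1-\beta) + \beta = 1$; the ratio is therefore exactly $\tfrac{e(\lambda)-\beta}{e(\lambda)-1}$. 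For a \emph{small} update the analogous computation yields $\Delta P = \tfrac{e(1/\lambda)-\beta}{e(1/\lambda)-1}$ and $\Delta D = \lambda(1-\beta) + \beta$, so the ratio becomes
\[
    \frac{\Delta P}{\Delta D} \;=\; \frac{e(1/\lambda)-\beta}{\bigl(e(1/\lambda)-1\bigr)\bigl(\lambda(1-\beta)+\beta\bigr)}.
\]

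The technical heart of the argument is to show that this last quantity is also bounded by $\tfrac{e(\lambda)-\beta}{e(\lambda)-1}$. I expect this to be the main obstacle: after taking the limit $\tfrac{B}{1-\beta} \to \infty$ so that $e(z) \to e^z$, the inequality becomes $\tfrac{e^{1/\lambda}-\beta}{(e^{1/\lambda}-1)(\lambda(1-\beta)+\beta)} \leq \tfrac{e^\lambda - \beta}{e^\lambda -1}$, which specializes (at $\beta = 0$) to the ski-rental estimate $\tfrac{1}{1-e^{-1/\lambda}} \leq \tfrac{\lambda}{1-e^{-\lambda}}$ already used in Theorem~\ref{thm:PDLA ski rental main}. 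I would prove it by the same type of single-variable calculus argument that underlies inequality~\eqref{eqn:lemma12_1b}, or cite an extended version of Lemma~\ref{lem:inequalities} stated for the Bahncard parameters.

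With all three update ratios bounded by $\tfrac{e(\lambda)-\beta}{e(\lambda)-1}$, the primal cost output by the algorithm is at most $\tfrac{e(\lambda)-\beta}{e(\lambda)-1}$ times the dual value produced. By Lemma~\ref{lem:dual_bahncard}, scaling that dual by $\bigl(1 + \tfrac{1-\beta}{B}\bigr)^{-1}$ gives a feasible dual solution, and weak duality then upper bounds it by $\OPT(\I)$. Combining these two factors yields the claimed bound
\[
    c_{\mathcal{PDLA}}(\A,\I,\lambda) \;\leq\; \frac{\bigl(e(\lambda)-\beta\bigr)\bigl(1 + (1-\beta)/B\bigr)}{e(\lambda)-1}\cdot \OPT(\I),
\]
which, in the limit $\tfrac{B}{1-\beta} \to \infty$, matches the robustness factor stated in Theorem~\ref{thm:PDLA bahncard}.
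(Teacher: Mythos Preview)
Your proposal is correct and follows essentially the same route as the paper: split into minimal, big, and small updates, compute $\Delta P/\Delta D$ in each case, invoke the small-update inequality, and finish with Lemma~\ref{lem:dual_bahncard} plus weak duality. The only refinement is that the inequality you call ``the technical heart'' is already recorded in the paper as inequality~\eqref{eqn:lemma12_4} of Lemma~\ref{lem:inequalities}, so you can simply cite it rather than redo the calculus.
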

\begin{proof}
Algorithm \ref{alg:LA Primal-Dual for Bahncard} makes 3 possible types of updates. For a minimal update, we have $\Delta P = \Delta D = \beta$. For a small update we have 
\begin{align*}
    \Delta P &= \left(1-\beta \right)\cdot \left(\sum_{t=t(j)-T}^{t(j)} x_t + \frac{1}{e(1/\lambda)-1}\right) + \beta \cdot \sum_{t=t(j)-T}^{t(j)} x_t + 1-\sum_{t=t(j)-T}^{t(j)} x_t \\
    &= 1+\frac{1-\beta}{e(1/\lambda)-1} = \frac{e(1/\lambda)-\beta}{e(1/\lambda)-1}
\end{align*}{}
and 
$$\Delta D = \lambda (1-\beta) + \beta = \beta (1-\lambda) + \lambda $$
hence the ratio is 
$$\frac{\Delta P}{\Delta D} = \frac{1}{\beta (1-\lambda) + \lambda} \cdot \frac{e(1/\lambda)-\beta}{e(1/\lambda)-1}$$
Similarly in the case of a big update we have 
$$\Delta P =  \frac{e(\lambda)-\beta}{e(\lambda)-1}$$ and $\Delta D  = 1$ which gives a ratio of 
$$\frac{\Delta P}{\Delta D} = \frac{e(\lambda)-\beta}{e(\lambda)-1}$$
We can conclude by Lemma \ref{lem:inequalities} (inequality \eqref{eqn:lemma12_4}) that the ratio of primal cost increase vs dual cost increase is always bounded by 
$$\frac{\Delta P}{\Delta D}\leq \frac{e(\lambda)-\beta}{e(\lambda)-1}$$ Using Lemma \ref{lem:dual_bahncard} along with weak duality is enough to conclude that the cost of the fractional solution built by the algorithm is bounded as follows
$$\textit{cost}_{\tiny{\mathcal{PDLA}}}(\mathcal A, \mathcal I,\lambda)\leq \frac{\left(e(\lambda)-\beta\right)\cdot \left(1+(1-\beta)/B \right)}{e(\lambda)-1}\cdot \OPT$$
which ends the proof.
\end{proof}

For consistency, we analyze the algorithm's cost in two parts. When the heuristic algorithm $\mathcal{A}$ buys its $i$th Bahncard at some time $t_i$, define the interval $I_i=[t_i,t_i+T]$ which represents the set of times during which this specific Bahncard is valid. This creates a family of intervals $I_1,\ldots I_k$ if $\mathcal{A}$ buys $k$ Bahncards. Note that  we can assume that all these intervals are disjoint since if the prediction $\mathcal{A}$ suggests to buy a new Bahncard before the previous one expires, it is always better to postpone this buy to the end of the validity of the current Bahncard.
\begin{lem}
\label{lem:consistency_1_Bahncard}
Denote by $\left(\Delta P\right)_{I_i}$ the increase in the primal cost of Algorithm \ref{alg:LA Primal-Dual for Bahncard} during interval $I_i$ and by cost$(\mathcal{A})_{I_i}$ what prediction $\mathcal{A}$ pays during this same interval $I_i$ (including the buy of the Bahncard at the beginning of the interval $I_i$). Then, for all $i$ we have

\begin{equation*}
    \frac{\left(\Delta P\right)_{I_i}}{\mbox{cost}(\mathcal{A})_{I_i}} \leq \frac{\left\lceil\lambda \cdot \frac{B}{1-\beta}\right\rceil}{B+\beta \cdot \left\lceil\lambda \cdot \frac{B}{1-\beta}\right\rceil}\cdot \frac{e(\lambda)-\beta}{e(\lambda)-1}
\end{equation*}{}
\end{lem}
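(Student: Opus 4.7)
The plan is to decompose the primal cost increase during $I_i$ by update type, bound the number of big updates via the same geometric growth argument used in the robustness proof, and then argue that the resulting ratio is maximized in an extremal configuration $(m_b,m_m) = (M,0)$.

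First I would observe that throughout $I_i$, the prediction $\mathcal{A}$ holds a valid Bahncard, so every trip $j$ with $t(j) \in I_i$ satisfies $t(j) \leq l_{\mathcal{A}}(t(j)) + T$. Consequently, Algorithm \ref{alg:LA Primal-Dual for Bahncard} only ever performs \emph{big} updates (when $\sum_{t=t(j)-T}^{t(j)} x_t < 1$) or \emph{minimal} updates (when this sum is already $\geq 1$); no small updates can occur inside $I_i$. Writing $m_b$ and $m_m$ for the respective counts, the primal update rules give
$$(\Delta P)_{I_i} \;=\; m_b \cdot \frac{e(\lambda)-\beta}{e(\lambda)-1} + m_m \cdot \beta,$$
while the prediction pays $B$ for the Bahncard bought at $t_i$ plus $\beta$ for each of the $m_b + m_m$ trips falling in $I_i$, so $\mbox{cost}(\mathcal{A})_{I_i} = B + (m_b + m_m)\beta$.

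Next I would bound $m_b \leq M := \lceil \lambda B/(1-\beta) \rceil$. Let $S(t) := \sum_{t'=t_i}^{t} x_{t'}$. Any big update happens at some $\tau \in I_i$ so $\tau - T \leq t_i$, which forces the trigger sum $\sum_{t=\tau-T}^{\tau} x_t$ to dominate $S(\tau)$. Hence a big update at $\tau$ increases $S$ according to
$$S^{\mathrm{new}} \;\geq\; S^{\mathrm{old}}\!\left(1 + \frac{1-\beta}{B}\right) + \frac{1-\beta}{B\,(e(\lambda)-1)}.$$
A straightforward induction (completely analogous to the one in the proof of Lemma \ref{lem:dual_bahncard}) then gives $S \geq \frac{e(k(1-\beta)/B)-1}{e(\lambda)-1}$ after $k$ big updates, so $S \geq 1$ as soon as $k \geq \lambda B/(1-\beta)$, after which any further trip in $I_i$ can only trigger a minimal update.

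Finally, set $K := \frac{e(\lambda)-\beta}{e(\lambda)-1}$ and $f(m_b, m_m) := \frac{m_b K + m_m \beta}{B + (m_b + m_m)\beta}$; I would maximize $f$ over $0 \leq m_b \leq M$, $m_m \geq 0$. Since $K \geq \beta$, a direct computation gives $\partial f/\partial m_b \geq 0$, so it suffices to take $m_b = M$. At that point $\partial f/\partial m_m$ has the same sign as $B - M(K - \beta)$; the main obstacle is verifying $B \leq M(K - \beta)$. Using $K - \beta = \frac{e(\lambda)(1-\beta)}{e(\lambda)-1}$ together with $M \geq \lambda B/(1-\beta)$, this reduces to $(1-\lambda)\,e(\lambda) \leq 1$, which follows in the limit $B/(1-\beta) \to \infty$ postulated in the theorem by checking that $\frac{d}{d\lambda}[(1-\lambda) e^{\lambda}] = -\lambda e^{\lambda} \leq 0$ and evaluating at $\lambda = 0$. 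With $\partial f/\partial m_m \leq 0$, the maximum is attained at $m_m = 0$, giving $f \leq \frac{MK}{B + M\beta}$, which is exactly the claimed bound.
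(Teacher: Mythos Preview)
Your proof is correct and follows essentially the same route as the paper: both decompose $(\Delta P)_{I_i}$ into big and minimal updates, bound the number of big updates by $M=\lceil \lambda B/(1-\beta)\rceil$ via the same geometric-growth induction on $\sum_{t'\geq t_i}x_{t'}$, and then identify the worst-case trip count. In fact you are more careful than the paper at the last step: the paper simply asserts ``one can see that the worst case possible for the ratio is obtained for $m=M$,'' whereas you explicitly verify the monotonicity in $m_b$ and $m_m$ and isolate the key inequality $B\leq M(K-\beta)$, equivalently $(1-\lambda)\,e(\lambda)\leq 1$.

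One small remark: you invoke the limit $B/(1-\beta)\to\infty$ to justify $(1-\lambda)\,e(\lambda)\leq 1$, but the lemma is stated for finite $B$. This is not actually a gap, since $e(\lambda)=(1+(1-\beta)/B)^{\lambda B/(1-\beta)}\leq e^{\lambda}$ for every finite $B$, and hence $(1-\lambda)\,e(\lambda)\leq (1-\lambda)e^{\lambda}\leq 1$ holds without passing to the limit. It would be cleaner to state it this way so that the lemma stands on its own.
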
{}
\begin{proof}
Assume that $m$ trips are requested during this interval $I_i$. Then we first have that $cost(\mathcal{A})_{I_i}=B+\beta m$ ($\mathcal A$ buys a Bahncard then pays a discounted price for every trip in the interval $I_i$).

As for Algorithm \ref{alg:LA Primal-Dual for Bahncard}, for each trip $j$, we are possibly in the first two cases: either $\sum_{t=t(j)-T}^{t(j)}x_t\geq 1$ in which case the increase in the primal is $\Delta P = \beta$ or in the second case in which case the increase in the primal is 
\begin{equation*}
    \Delta P = (1-\beta) \cdot \left(\sum_{t=t(j)-T}^{t(j)} x_t + \frac{1}{e(\lambda)-1} \right)+\beta \cdot \sum_{t=t(j)-T}^{t(j)} x_t + 1-\sum_{t=t(j)-T}^{t(j)} x_t= 1+\frac{1-\beta}{e(\lambda)-1}
\end{equation*}

We claim that the updates of the second case can happen at most $\left\lceil\lambda \cdot \frac{B}{1-\beta}\right\rceil$ times during interval $I_i$. To see this, denote by $S(l)$ the value of $\sum_{t'\geq t_i} x_{t'}$ after $l$ big updates in interval $I_i$. Note that once $\sum_{t'\geq t_i} x_{t'}\geq 1$, big updates cannot happen anymore. Hence all we need to prove is that $S\left(\left\lceil\lambda \cdot \frac{B}{1-\beta}\right\rceil\right)\geq 1$.

We prove by induction that $$S(k)\geq \frac{e(k \cdot (1-\beta)/B)-1}{e(\lambda)-1}$$

This is indeed true for $k=0$ as $S(0)$ is the value of $\sum_{t'\geq t_i} x_{t'}$ before any big update was made in $I_i$ hence $S(0)\geq 0$. Now assume this is the case for some $k$ and compute 

\begin{align*}
    S(k+1) &\geq \left(1+\frac{1-\beta}{B}\right) \cdot S(k) + \frac{1-\beta}{B}\cdot \frac{1}{e(\lambda)-1}\\
    &\geq \frac{\left(1+\frac{1-\beta}{B}\right)\cdot \left(e(k \cdot (1-\beta)/B) - 1\right)+\frac{1-\beta}{B}}{e(\lambda)-1}\\
    &\geq \frac{e((k+1) \cdot (1-\beta)/B)-1}{e(\lambda)-1}
\end{align*}{}
which concludes the induction.

Hence on interval $I_i$, the total increase in the cost of the solution can be bounded as follows 
$$\left(\Delta P\right)_{I_i} \leq \min\left\lbrace \left\lceil\lambda \cdot \frac{B}{1-\beta}\right\rceil, m \right\rbrace \cdot \left(1+\frac{1-\beta}{e(\lambda)-1}\right)+ \max\left\lbrace 0,\left(m- \left\lceil\lambda \cdot \frac{B}{1-\beta}\right\rceil\right)\right\rbrace \cdot \beta$$

One can see that the worst case possible for the ratio $\frac{\left(\Delta P\right)_{I_i}}{\mbox{cost}(\mathcal{A})_{I_i}}$ is obtained for $m=\left\lceil\lambda \cdot \frac{B}{1-\beta}\right\rceil$ and is bounded by 

$$\frac{\left(\Delta P\right)_{I_i}}{\mbox{cost}(\mathcal{A})_{I_i}} \leq 
 \frac{\left\lceil\lambda \cdot \frac{B}{1-\beta}\right\rceil \cdot \left(1+\frac{1-\beta}{e(\lambda)-1}\right)}{B+\beta \cdot \left\lceil\lambda \cdot \frac{B}{1-\beta}\right\rceil} = \frac{\left\lceil\lambda \cdot \frac{B}{1-\beta}\right\rceil}{B+\beta \cdot \left\lceil\lambda \cdot \frac{B}{1-\beta}\right\rceil}\cdot \frac{e(\lambda)-\beta}{e(\lambda)-1}$$
\end{proof}{}

We then consider times $t$ that do not belong to any interval $I_i$. More precisely, we upper bound the value $\left(\Delta P\right)_j$ that is the increase in cost of the primal solution due to trip $j$ such that $t(j)$ does not belong to any interval $I_i$. Note that in this case the prediction always pays a cost of $1$. 

\begin{lem}
\label{lem:consistency_2_Bahncard}
For any trip $j$ such that $t(j)\notin \bigcup_i I_i $, we have that 

$$\left(\Delta P\right)_{j} \leq \frac{e(1/\lambda)-\beta}{e(1/\lambda)-1} $$
\end{lem}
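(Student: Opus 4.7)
The plan is to proceed by direct case analysis on which branch of Algorithm~\ref{alg:LA Primal-Dual for Bahncard} is triggered by the trip $j$. Since $t(j) \notin \bigcup_i I_i$, no Bahncard previously bought by $\mathcal{A}$ is still valid at time $t(j)$, which means $t(j) > l_{\mathcal{A}}(t(j)) + T$. Hence the ``big update'' branch of the algorithm is excluded, leaving only two possible behaviors: either a \emph{minimal update} (when $\sum_{t = t(j)-T}^{t(j)} x_t \geq 1$) or a \emph{small update} (when $\sum_{t = t(j)-T}^{t(j)} x_t < 1$ and the prediction has no active Bahncard).

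First I would handle the minimal update case. Here the algorithm only sets $d_j \leftarrow 1$ and $c_j \leftarrow \beta$, so the primal cost increase is exactly $\left(\Delta P\right)_j = \beta$. Since $e(1/\lambda) > 1$, the bound $\beta \leq \frac{e(1/\lambda) - \beta}{e(1/\lambda)-1}$ is equivalent to $\beta(e(1/\lambda) - 1) \leq e(1/\lambda) - \beta$, i.e.\ $\beta \cdot e(1/\lambda) \leq e(1/\lambda)$, which clearly holds for $\beta \in [0,1]$.

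Next I would deal with the small update case. The algorithm updates $d_j \leftarrow \sum_{t=t(j)-T}^{t(j)} x_t$, $f_j \leftarrow 1-d_j$, and raises $x_{t(j)}$. Writing $\sigma := \sum_{t=t(j)-T}^{t(j)} x_t$ for brevity, the increase in the primal objective is
\begin{equation*}
\left(\Delta P\right)_j \;=\; B \cdot \frac{1-\beta}{B}\Bigl(\sigma + \tfrac{1}{e(1/\lambda)-1}\Bigr) + \beta \sigma + (1-\sigma) \;=\; 1 + \frac{1-\beta}{e(1/\lambda)-1} \;=\; \frac{e(1/\lambda) - \beta}{e(1/\lambda)-1},
\end{equation*}
where the $\sigma$ terms cancel exactly. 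This is the same simplification already used in the proof of Lemma~\ref{lem:robustness Bahncard}. Combining the two cases yields the claimed upper bound.

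I do not expect any obstacles here: the statement follows purely from unrolling the algorithm's definition in the regime $t(j) > l_{\mathcal{A}}(t(j)) + T$, and the only nontrivial step is the algebraic cancellation of $\sigma$, which is identical to the one performed earlier for robustness. The real work in the overall consistency argument lies in Lemma~\ref{lem:consistency_1_Bahncard}; the present lemma is the straightforward counterpart covering the complement of the intervals $I_i$, and will then be aggregated with Lemma~\ref{lem:consistency_1_Bahncard} to obtain the consistency factor claimed in Theorem~\ref{thm:PDLA bahncard}.
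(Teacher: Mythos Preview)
Your proposal is correct and follows essentially the same route as the paper: the paper's proof simply observes that for such a trip the algorithm performs either a minimal update of cost $\beta$ or a small update of cost $\frac{e(1/\lambda)-\beta}{e(1/\lambda)-1}$, both bounded by the latter. You have just spelled out the case analysis and the algebraic cancellation in more detail than the paper does.
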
{}
\begin{proof}
Note that Algorithm \ref{alg:LA Primal-Dual for Bahncard} pays either the cost of a small update which is $\frac{e(1/\lambda)-\beta}{e(1/\lambda)-1}$ or the cost of a minimal update which is $\beta$.
\end{proof}{}

For simplicity and better readability, we will formulate the final theorem of this section only for $\frac{B}{1-\beta}\rightarrow \infty$.

\begin{thm*}[Theorem \ref{thm:PDLA bahncard} restated]
    For any $\lambda \in (0,1]$, any $\beta \in [0,1]$ and ${\frac{B}{1-\beta} \xrightarrow[]{} \infty }$, we have the following guarantees on any instance $\mathcal I$
  $$ \textit{cost}_{\tiny{\mathcal{PDLA}}}(\mathcal A, \mathcal I,\lambda) \leq \min \left\lbrace \frac{\lambda}{1-\beta +\lambda \beta }\cdot \frac{e^{\lambda} - \beta}{e^{\lambda} - 1}\cdot S(\mathcal A, I), \frac{e^{\lambda}-\beta}{e^{\lambda}-1}\cdot \OPT \right\rbrace$$
\end{thm*}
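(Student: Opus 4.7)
The plan is to combine the three lemmas already established in this section (Lemmas~\ref{lem:robustness Bahncard}, \ref{lem:consistency_1_Bahncard}, and \ref{lem:consistency_2_Bahncard}) and then pass to the limit $B/(1-\beta)\to\infty$, in which regime $e(z)=(1+(1-\beta)/B)^{zB/(1-\beta)}$ converges pointwise to $e^{z}$ and the multiplicative slack factor $(1+(1-\beta)/B)$ converges to $1$.

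The robustness bound is essentially immediate: Lemma~\ref{lem:robustness Bahncard} says that PDLA is $\frac{(e(\lambda)-\beta)(1+(1-\beta)/B)}{e(\lambda)-1}$-robust, and taking the limit gives the $\frac{e^{\lambda}-\beta}{e^{\lambda}-1}\cdot \OPT(\I)$ branch of the $\min$.

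For the consistency bound, I would decompose the fractional cost produced by Algorithm~\ref{alg:LA Primal-Dual for Bahncard} according to the disjoint intervals $I_1,\ldots,I_k$ (one per Bahncard bought by $\mathcal{A}$, each of length $T$) together with the trips whose arrival time $t(j)$ falls outside $\bigcup_i I_i$. On each $I_i$, Lemma~\ref{lem:consistency_1_Bahncard} bounds the ratio $(\Delta P)_{I_i}/\mathrm{cost}(\mathcal{A})_{I_i}$, and passing to the limit $\lceil \lambda B/(1-\beta)\rceil / B \to \lambda/(1-\beta)$ turns this ratio into
\[
\frac{\lambda/(1-\beta)}{1+\beta\lambda/(1-\beta)}\cdot\frac{e^{\lambda}-\beta}{e^{\lambda}-1}
=\frac{\lambda}{1-\beta+\lambda\beta}\cdot\frac{e^{\lambda}-\beta}{e^{\lambda}-1},
\]
which matches the target consistency factor $C(\lambda)$. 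For every trip $j$ whose arrival time lies outside all $I_i$, Lemma~\ref{lem:consistency_2_Bahncard} bounds $(\Delta P)_j$ by $\frac{e(1/\lambda)-\beta}{e(1/\lambda)-1}$, while $\mathcal{A}$ pays a full-price cost of $1$ for that trip. Summing both contributions and factoring out $C(\lambda)$ against $S(\mathcal{A},\I)$ then yields the consistency branch of the $\min$, provided the per-trip bound is dominated by $C(\lambda)$.

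The remaining analytic step, and what I expect to be the main obstacle, is exactly this last inequality:
\[
\frac{e^{1/\lambda}-\beta}{e^{1/\lambda}-1}\ \leq\ \frac{\lambda}{1-\beta+\lambda\beta}\cdot\frac{e^{\lambda}-\beta}{e^{\lambda}-1}\qquad \text{for all } \lambda\in(0,1],\ \beta\in[0,1].
\]
For $\beta=0$ this collapses to the ski-rental inequality $\frac{1}{1-e^{-1/\lambda}}\leq \frac{\lambda}{1-e^{-\lambda}}$ that was already used in Theorem~\ref{thm:PDLA ski rental main}, so the plan is to generalize that argument, i.e.\ to derive the inequality from the family of bounds packaged in Lemma~\ref{lem:inequalities}. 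Everything else is bookkeeping: verifying that the intervals $I_i$ can be taken pairwise disjoint (which is noted right before Lemma~\ref{lem:consistency_1_Bahncard}), and checking that the ``minimal'' updates of Algorithm~\ref{alg:LA Primal-Dual for Bahncard}, which contribute only $\beta$ per trip, are correctly counted on each side of the decomposition -- they are, because Lemma~\ref{lem:consistency_1_Bahncard} already handles all trips requested inside $I_i$ and Lemma~\ref{lem:consistency_2_Bahncard} handles all the rest.
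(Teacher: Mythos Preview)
Your proposal is correct and follows essentially the same route as the paper's own proof: take the limit in Lemma~\ref{lem:robustness Bahncard} for robustness, decompose the primal cost over the disjoint intervals $I_i$ and the remaining trips, apply Lemmas~\ref{lem:consistency_1_Bahncard} and~\ref{lem:consistency_2_Bahncard} after passing to the limit, and then invoke the inequality $\frac{e^{1/\lambda}-\beta}{e^{1/\lambda}-1}\le \frac{\lambda}{1-\beta+\lambda\beta}\cdot\frac{e^{\lambda}-\beta}{e^{\lambda}-1}$, which is exactly inequality~\eqref{eqn:lemma12_3} of Lemma~\ref{lem:inequalities}. The only minor remark is that you need not ``generalize'' the ski-rental inequality yourself; the required two-parameter version is already stated and proved as part of Lemma~\ref{lem:inequalities}.
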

\begin{proof}
By taking the limit in Lemma \ref{lem:robustness Bahncard}, we see that the cost of the solution output by Algorithm \ref{alg:LA Primal-Dual for Bahncard} is at most $\frac{e^\lambda-\beta}{e^\lambda-1}\cdot \OPT$ which proves the second bound in the theorem.

For the first bound, note that we can write the final cost of the solution as 
$$\textit{cost}_{\tiny{\mathcal{PDLA}}}(\mathcal A, \mathcal I,\lambda) = \Delta P = \sum_i \left(\Delta P \right)_{I_i}+\sum_{j: t(j)\notin \bigcup_i I_i} \left(\Delta P \right)_j $$

By taking the limit in Lemma \ref{lem:consistency_1_Bahncard} we get that 
$$\sum_i \left(\Delta P \right)_{I_i}\leq \frac{\lambda}{1-\beta + \beta\lambda}\cdot \frac{e^\lambda - \beta}{e^\lambda - 1} \cdot \sum_i \textit{cost}(\mathcal{A})_{I_i}$$
and by taking the limit in Lemma \ref{lem:consistency_2_Bahncard}, we get that
$$\sum_{j: t(j)\notin \bigcup_i I_i} \left(\Delta P \right)_j \leq \frac{e^{1/\lambda}-\beta}{e^{1/\lambda}-1}\cdot \sum_{j: t(j)\notin \bigcup_i I_i} \textit{cost}(\mathcal A)_j$$

By using Lemma \ref{lem:inequalities} (inequality \eqref{eqn:lemma12_3}), we see that 
$$\max\left\lbrace \frac{\lambda}{1-\beta + \beta\lambda}\cdot \frac{e^\lambda - \beta}{e^\lambda - 1}, \frac{e^{1/\lambda}-\beta}{e^{1/\lambda}-1}\right\rbrace = \frac{\lambda}{1-\beta + \beta\lambda}\cdot \frac{e^\lambda - \beta}{e^\lambda - 1}$$
which ends the proof.

\end{proof}

We finish this section by proving that a fractional solution can be rounded online into a randomized integral solution. The expected cost of the rounded instance will be equal to the cost of the fractional solution. If the rounding is very similar to the existing rounding of \citet{ad_auctionsESA} for ski rental or TCP acknowledgement, we still include it here for completeness as the Bahncard problem was never solved in a primal-dual way. The argument is summarized in the following lemma.
\begin{lem}
Given a fractional solution $(x,d,f)$ to the Bahncard problem, it can be rounded online into an integral solution of expected cost equal to the fractional cost of $(x,d,f)$.
\end{lem}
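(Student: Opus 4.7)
The plan is to round $(x,d,f)$ online using a single uniform random threshold, in direct analogy with the standard rounding procedures of \citet{ad_auctionsESA} for ski rental and TCP acknowledgement, adapted to the sliding ``window of validity'' structure of the Bahncard problem.

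First, I would reduce to the case where the fractional solution is tight, i.e.\ $f_j = 1 - d_j$ and $d_j = \min\{1,\, \sum_{s=t(j)-T}^{t(j)} x_s\}$ for every trip $j$. Enforcing these equalities cannot increase the fractional cost, and Algorithm~\ref{alg:LA Primal-Dual for Bahncard} already produces such a solution. I would also assume $x_t \le 1$ for all $t$; any integer part of $x_t$ can be handled by deterministic purchases, without affecting the expected-cost analysis that follows.

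The rounding itself is as follows. Sample a single $\alpha \in [0,1]$ uniformly at the start. Let $X_t = \sum_{s \le t} x_s$ be the running cumulative mass. At each time $t$, buy an integral Bahncard if and only if the interval $(X_{t-1}, X_t]$ contains a shifted integer of the form $k + \alpha$ with $k \in \mathbb{Z}_{\ge 0}$. For each trip $j$, pay the discounted price $\beta$ if an integral Bahncard has been bought during $[t(j)-T,\, t(j)]$, and the full price $1$ otherwise. Both decisions depend only on information revealed by time $t$, so the procedure is genuinely online.

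To conclude, I would compute the expectations. Because $\alpha$ is uniform and $x_t \le 1$, the probability that $(X_{t-1}, X_t]$ contains a shifted integer is exactly $x_t$, giving expected total Bahncard cost $B \sum_t x_t$. For each trip $j$, the event that the rounded solution has a valid Bahncard at time $t(j)$ is exactly the event that $(X_{t(j)-T-1},\, X_{t(j)}]$ contains some shifted integer, which occurs with probability $\min\{1,\, \sum_{s=t(j)-T}^{t(j)} x_s\} = d_j$. Hence the expected cost incurred by trip $j$ is $\beta d_j + (1-d_j) = \beta d_j + f_j$, and summing recovers the fractional cost $B\sum_t x_t + \sum_j (\beta d_j + f_j)$ exactly.

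The main obstacle is the second probability computation: one must argue that a single uniform threshold $\alpha$ also yields the correct marginal probability over a sliding window of $T$ time steps, not just a single step. This boils down to the elementary modular-arithmetic fact that for any interval of cumulative mass of length $y$, the probability it contains a shifted integer $k+\alpha$ equals $\min\{1, y\}$; this is where the tightness assumption $d_j = \min\{1, \sum_s x_s\}$ is used to convert an inequality into the equality required for matching expected cost to fractional cost.
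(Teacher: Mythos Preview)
Your proposal is correct and follows essentially the same single-uniform-threshold rounding as the paper (which in turn mirrors \citet{ad_auctionsESA}): lay the $x_t$ on the real line, buy whenever a shifted integer $k+\alpha$ falls in the interval of $x_t$, and observe that the sliding-window event ``some purchase in $[t(j)-T,t(j)]$'' is exactly the event that the contiguous interval $(X_{t(j)-T-1},X_{t(j)}]$ contains some $k+\alpha$. If anything, your write-up is tidier than the paper's: the paper does not explicitly tighten $(d_j,f_j)$ and phrases the trip-cost step as an inequality (probability of having a valid card is \emph{at least} $d_j$, so the expected cost is \emph{at most} the fractional cost), whereas your preliminary reduction to $f_j=1-d_j$ and $d_j=\min\{1,\sum_s x_s\}$ turns this into the exact equality the lemma states.
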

\begin{proof}
Choose some real number $p$ uniformly at random in the interval $[0,1]$. Then arrange the variables $x_t$ on the real line (i.e. iteratively as follows, each time $t$ takes an interval $I_t$ of length $x_t$ right after the interval taken by $x_{t-1}$). Then buy a Bahncard at every time $t$ such that the interval corresponding to time $t$ contains the real number $p+k$ for some integer $k$. We check first that the expected buying cost is
\begin{equation*}
    B\cdot \sum_t \mathbbm{E}\left(\mathds{1}_{p+k\in I_t}\right) = B\cdot \sum_t x_t
\end{equation*}

Next, to compute the total expected price of the tickets, notice that if a ticket was bought in the previous $T$ time steps, we can pay a discounted price, otherwise we need to pay the full price of $1$. For a trip $j$, the probability that a ticket was bought in the previous $T$ time steps is at least $\sum_{t=t(j)-T}^{t(j)}x_t$. Hence with probability at least $\sum_{t=t(j)-T}^{t(j)}x_t\geq d_j$ we pay a price of $\beta$ and with probability $1-d_j\leq f_j$ we pay a price of $1$ which ends the proof.
\end{proof}

\section{Missing proofs for TCP}
\label{sec:TCP appendix}
\subsection{Plots of instances}

We briefly show in Figures \ref{fig:poisson_instance}, \ref{fig:pareto_instance}, and \ref{fig:poisson_it_instance} how typical instances under various distributions look like.

\begin{figure}[H]
    \centering
    \includegraphics[scale=0.5]{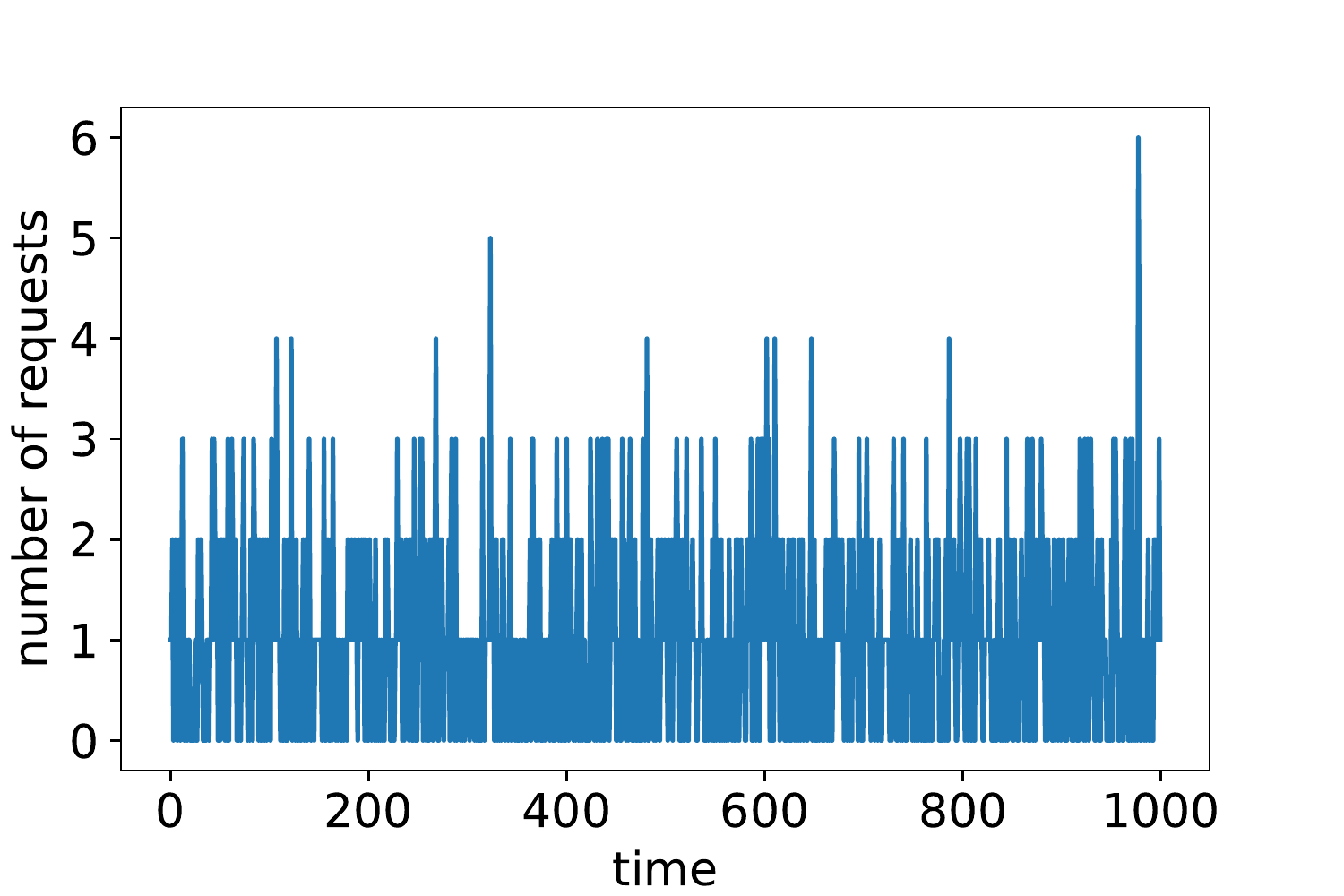}
    \caption{Typical instance under Poisson distribution}
    \label{fig:poisson_instance}
\end{figure}

\begin{figure}[H]
    \centering
    \includegraphics[scale=0.5]{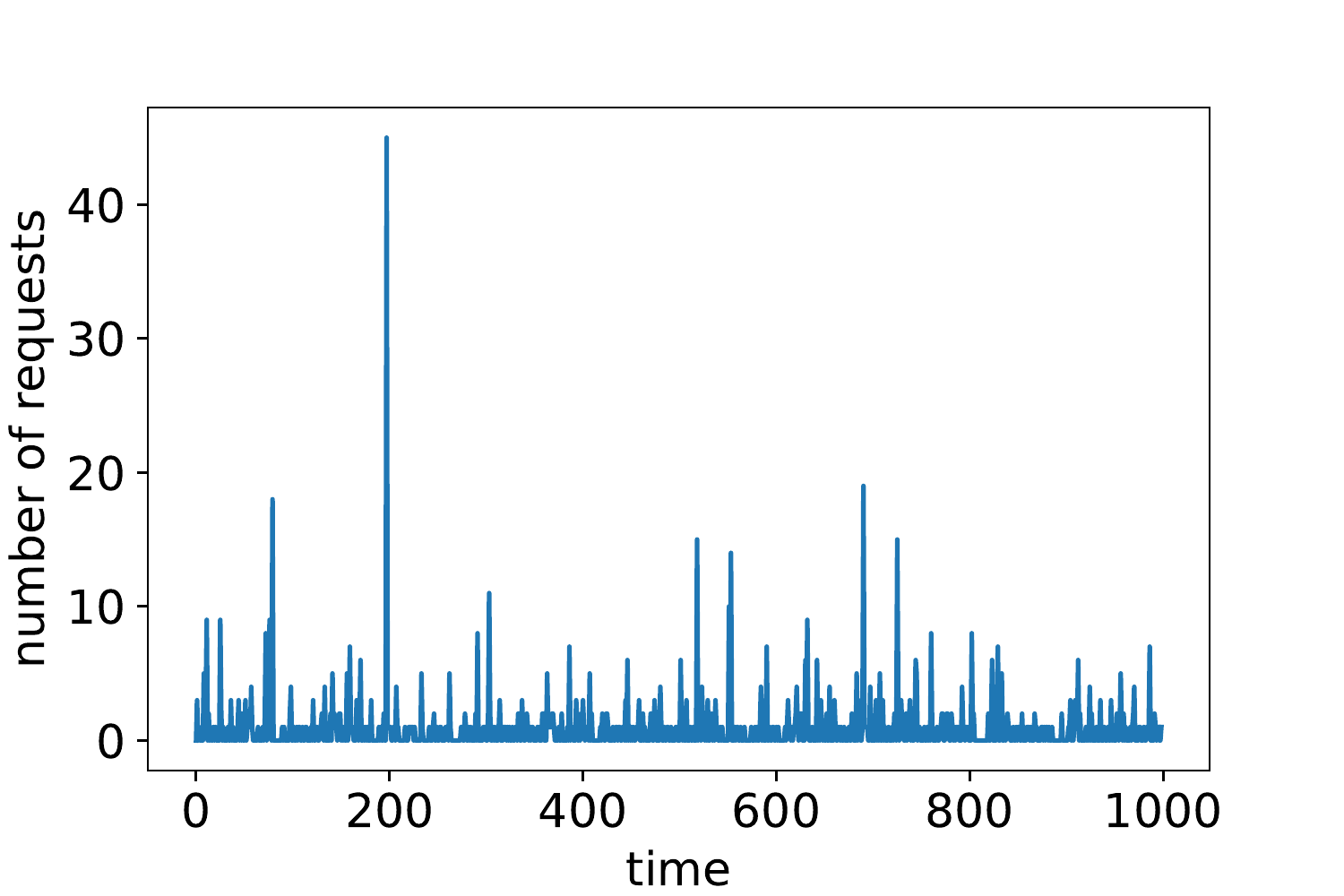}
    \caption{Typical instance under Pareto distribution}
    \label{fig:pareto_instance}
\end{figure}

\begin{figure}[H]
    \centering
    \includegraphics[scale=0.5]{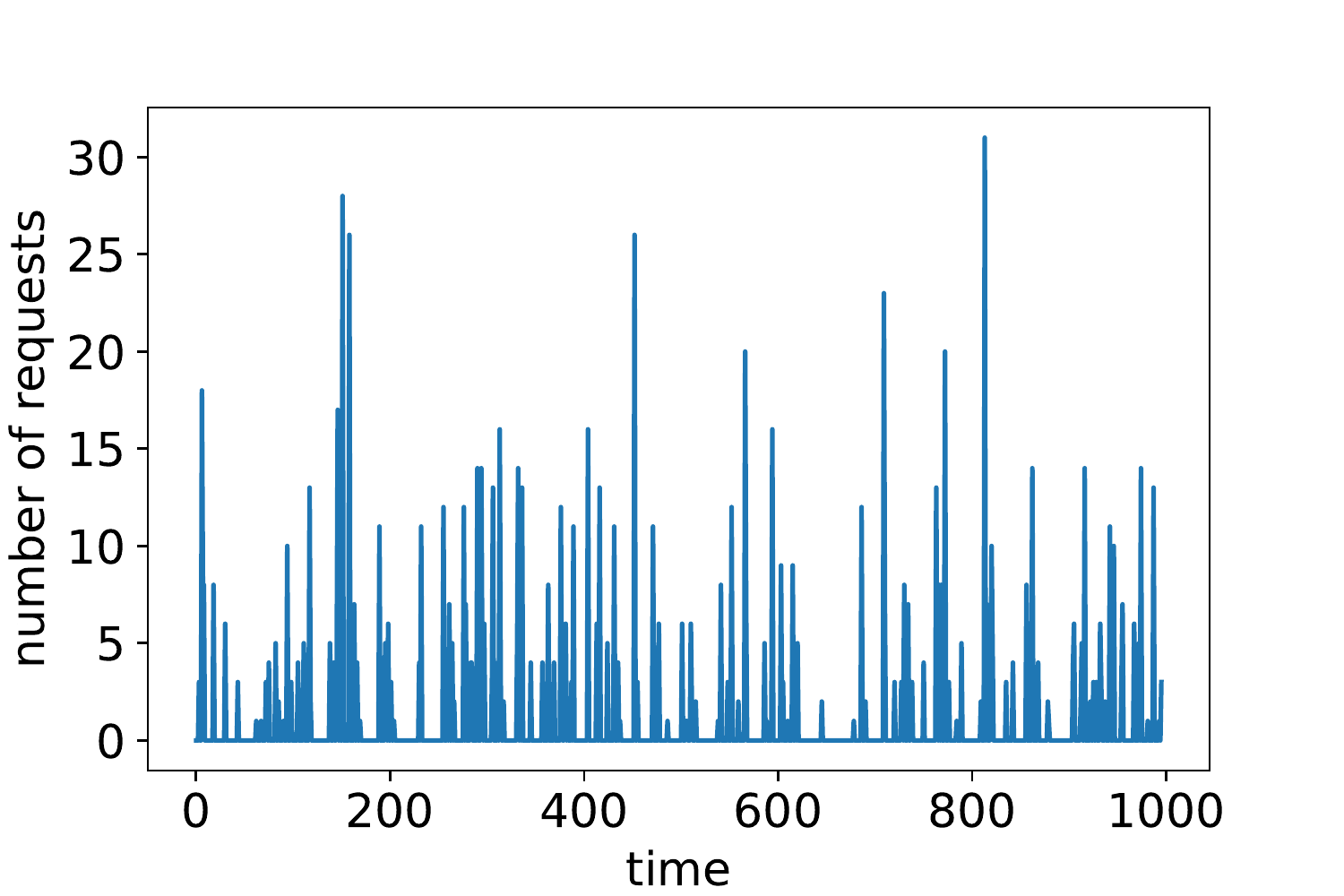}
    \caption{Typical instance under iterated Poisson distribution}
    \label{fig:poisson_it_instance}
\end{figure}

\subsection{Theoretical analysis}
Recall that we define in this section \(e(z)=(1+1/d)^{z\cdot d}\) which will be roughly equal to \(e^z\) for big \(d\). The big updates are then the updates where \(z\) is set to \(\lambda\) and during a small update, \(z\) is set to \(1/\lambda\).

We first analyze the consistency of this algorithm. To this end denote by \(n_\mathcal{A}\) the number of acknowledgements sent by \(\mathcal{A}\) and by \(\textit{latency}\left(\mathcal{A} \right)\) the latency paid by the prediction \(\mathcal{A}\).
\begin{lem} 
\label{lem:consistency_TCP}
For any \(\lambda\in (0,1]\), \(d>0\),
\[\textit{c}_{\tiny{\mathcal{PDLA}}}(\mathcal A, \I,\lambda)\leq n_{\mathcal{A}}  \cdot \frac{1}{d}\cdot \frac{\left\lceil \lambda d\right\rceil}{1-e(-\lambda)} + \textit{latency}(\mathcal{A})\cdot  \frac{1}{1-e(-1/\lambda)}\]
\end{lem}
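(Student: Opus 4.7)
The plan is to decompose $c_{\tiny{\mathcal{PDLA}}}(\mathcal{A}, \I, \lambda)$ into the total cost incurred by big updates ($c = e(\lambda)$) and by small updates ($c = e(1/\lambda)$), and to charge each piece separately to a distinct portion of the prediction's cost.

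First, a short calculation on the update rule shows that a single update at $(j,t)$ raises the primal by
\[ \Delta P = \tfrac{1}{d}\!\left(\sum_{k=t(j)}^t x_k + \tfrac{1}{c-1}\right) + \tfrac{1}{d}\!\left(1 - \sum_{k=t(j)}^t x_k\right) = \tfrac{1}{d}\cdot \tfrac{1}{1-1/c}, \]
so a big update costs $\tfrac{1}{d}\cdot \tfrac{1}{1-e(-\lambda)}$ and a small update costs $\tfrac{1}{d}\cdot \tfrac{1}{1-e(-1/\lambda)}$.

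The small-update contribution is easy to charge. The algorithm performs a small update at $(j,t)$ only when $t<\alpha(t(j))$, which is precisely when the prediction has not yet acknowledged $j$ at time $t$ and is therefore accumulating $1/d$ of latency for $j$ at that time step. Summing over all such $(j,t)$ shows that the number of small updates is at most $d\cdot\textit{latency}(\mathcal A)$, so their total cost is bounded by $\textit{latency}(\mathcal{A})\cdot \tfrac{1}{1-e(-1/\lambda)}$, which is exactly the second term of the lemma.

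The main obstacle is bounding the big updates. I would group each big update $(j,t)$ with the ack $t_0 := \alpha(t(j))$ of $\mathcal{A}$ that acknowledges $j$; this partitions the big updates into at most $n_{\mathcal{A}}$ groups. The central claim is that each group has size at most $\lceil \lambda d\rceil$. To prove it, fix $t_0$ and denote by $S$ the running value of $\sum_{t'=t_0}^{t} x_{t'}$ at the current time $t$. Every big update in this group is triggered by a packet $j$ with $t(j)\le t_0$, so $\sum_{k=t(j)}^t x_k \ge S$, which means the update raises $x_t$ by at least $\tfrac{1}{d}\bigl(S + \tfrac{1}{e(\lambda)-1}\bigr)$. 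A short induction on the number $k$ of updates already assigned to $t_0$, analogous to the one used in the ski-rental and Bahncard proofs, then yields $S \ge \tfrac{e(k/d)-1}{e(\lambda)-1}$; plugging $k=\lceil \lambda d\rceil$ gives $S\ge 1$. At that point every packet $j$ with $t(j)\le t_0$ is already covered, so no further big update in this group can fire. Multiplying $n_{\mathcal{A}}\cdot \lceil \lambda d\rceil$ by the per-update cost delivers the first term of the lemma, and combining the two bounds finishes the proof.
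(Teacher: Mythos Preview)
Your proposal is correct and follows essentially the same argument as the paper: the decomposition into big and small updates, the charging of each small update to a $1/d$ unit of latency in $\mathcal{A}$, the grouping of big updates by the acknowledgement time $t_0=\alpha(t(j))$, and the induction showing $S\geq \frac{e(k/d)-1}{e(\lambda)-1}$ (equivalently $\frac{(1+1/d)^k-1}{(1+1/d)^{\lambda d}-1}$) so that at most $\lceil \lambda d\rceil$ big updates fall into each group, all mirror the paper's proof. The only point you leave implicit is that updates not belonging to the current group (small updates, or big updates charged to a different ack) can only increase $S$, so the inductive lower bound is unaffected; the paper states this explicitly, but it does not change the argument.
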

\begin{proof}
We will use a charging argument to analyze the performance of Algorithm \ref{alg:LA Primal-Dual for TCP ack}. Note that for a small update, the increase in cost of the fractional solution is 

\[\Delta P = \frac{1}{d}\left(1 - \sum_{k = t(j)}^t x_k\right)+\frac{1}{d}\cdot \left(\sum_{k = t(j)}^t x_k + \frac{1}{e(1/\lambda)-1} \right) = \frac{1}{d}\cdot \frac{1}{1-e(-1/\lambda)}\]

However, for every small update that is made, it must be that \(\mathcal{A}\) pays a latency of at least \(\frac{1}{d}\). Hence the total cost of small updates made by Algorithm \ref{alg:LA Primal-Dual for TCP ack} is at most \(\textit{latency}(\mathcal{A}) \cdot \frac{1}{1-e(-1/\lambda)}\).

Secondly we bound the total cost of big updates of our algorithm. Let \(t_0\) be a time at which \(\mathcal{A}\) sends an acknowledgment. Let \(Y\) be the set of big updates made because of jobs \(j\) that are acknowledged at time \(t_0\) by \(\mathcal{A}\) (these big updates are hence made at some time \(t\geq t_0\)). We claim that \(|Y|\leq \left\lceil \lambda d\right\rceil\).

To prove this denote by \(S(l)\) the value of \(\sum_{k=t_0}^{+\infty} x_k\) after \(l\) such big updates (there might be small updates influencing this value but only to make it bigger). Notice that once \(\sum_{k=t_0}^{+\infty} x_k\geq 1\) there is no remaining update in \(Y\). We prove by induction that 
\[ S(l)\geq \frac{(1+1/d)^l - 1}{(1+1/d)^{\lambda d}-1}\]

This is clear for \(l=0\) as \(S(0)\geq 0\). Now assume this is the case for some value \(l\) and apply a big update at time \(t\) for job \(j\) to get

\begin{align*}
    S(l+1)&=S(l)+\frac{1}{d}\cdot \left(\sum_{k=t(j)}^t x_k + \frac{1}{e(\lambda)-1}\right)\\
    &\geq S(l)\cdot (1+1/d) + \frac{1}{d(e(\lambda)-1)}\\
    &= \frac{(1+1/d)^{l+1} - 1-1/d}{(1+1/d)^{\lambda d}-1}+ \frac{1/d}{(e(\lambda)-1)}\\
    &=\frac{(1+1/d)^{l+1} - 1-1/d}{(1+1/d)^{\lambda d}-1}+ \frac{1/d}{(1+1/d)^{\lambda d}-1}\\
    &=\frac{(1+1/d)^{l+1} - 1}{(1+1/d)^{\lambda d}-1}
\end{align*}{}

Where the second inequality comes from noting that since we are considering an update due to a request $j$ acknowledged at time $t_0$ by the predicted solution, it must be that $t(j) \leq t_0$ and  $\sum_{k=t(j)}^t x_k \geq \sum_{k=t_0}^t x_k$.
Hence we get that \(S\left(\left\lceil \lambda d\right\rceil \right)\geq 1\) which implies that \(|Y|\leq \left\lceil \lambda d\right\rceil\). 

By a similar calculation as for the small update case, we have that the cost of a big update is 

\begin{equation*}
    \Delta P = \frac{1}{d}\cdot \frac{1}{1-e(-\lambda)}
\end{equation*}

Hence the total cost of these updates in \(Y\) is charged to the acknowledgement that \(\mathcal{A}\) pays at time \(t_0\) to finish the proof.
\end{proof}

Taking the limit \(d\rightarrow +\infty\) we get the following corollary:

\begin{cor}
\label{cor:TCP_consistency_asymptotic}
For any \(\lambda\in (0,1]\) and taking \(d\rightarrow +\infty\), we have that
\[\textit{c}_{\tiny{\mathcal{PDLA}}}(\mathcal A, \I,\lambda) \leq n_\mathcal{A}\cdot \frac{\lambda}{1-e^{-\lambda}}+\textit{latency}(\mathcal{A})\cdot \frac{1}{1-e^{-1/\lambda}} \]
\end{cor}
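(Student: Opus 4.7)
The plan is to obtain the corollary as a direct consequence of Lemma~\ref{lem:consistency_TCP}, by taking termwise limits in the bound
\[
\textit{c}_{\tiny{\mathcal{PDLA}}}(\mathcal A, \I,\lambda)\leq n_{\mathcal{A}}  \cdot \frac{1}{d}\cdot \frac{\left\lceil \lambda d\right\rceil}{1-e(-\lambda)} + \textit{latency}(\mathcal{A})\cdot  \frac{1}{1-e(-1/\lambda)}.
\]
Since the left-hand side is the cost of the algorithm (which for fixed instance $\I$ and prediction $\mathcal{A}$ depends on $d$ only through the algorithm's fractional updates), and since the right-hand side bound is valid for every $d>0$, it suffices to show the right-hand side converges to the stated expression as $d \to +\infty$.

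First I would recall the definition $e(z) = (1+1/d)^{z d}$ and note the standard limit $\lim_{d \to \infty} (1+1/d)^{z d} = e^z$, which holds for any fixed real $z$; applying this with $z = -\lambda$ and $z = -1/\lambda$ gives $e(-\lambda) \to e^{-\lambda}$ and $e(-1/\lambda) \to e^{-1/\lambda}$. Next I would handle the ceiling factor in the first term: since $\lambda d \leq \lceil \lambda d \rceil \leq \lambda d + 1$, dividing by $d$ gives
\[
\lambda \;\leq\; \frac{\lceil \lambda d \rceil}{d} \;\leq\; \lambda + \frac{1}{d},
\]
so $\lceil \lambda d \rceil / d \to \lambda$ as $d \to \infty$. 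Combining these two limits,
\[
\frac{1}{d}\cdot \frac{\left\lceil \lambda d\right\rceil}{1-e(-\lambda)} \;\longrightarrow\; \frac{\lambda}{1-e^{-\lambda}}, \qquad \frac{1}{1-e(-1/\lambda)} \;\longrightarrow\; \frac{1}{1-e^{-1/\lambda}},
\]
and passing to the limit in the bound of Lemma~\ref{lem:consistency_TCP} yields the corollary.

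There is essentially no hard step here; the only mild subtlety is the ceiling in $\lceil \lambda d\rceil$, which is handled by the sandwich inequality above. One should also briefly observe that the denominators $1-e(-\lambda)$ and $1-e(-1/\lambda)$ are bounded away from zero for all sufficiently large $d$ (they converge to the strictly positive numbers $1-e^{-\lambda}$ and $1-e^{-1/\lambda}$), so no issue of division arises in taking the limit.
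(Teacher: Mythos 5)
Your proof is correct and follows exactly the paper's route: the paper simply states "Taking the limit $d\rightarrow +\infty$" after Lemma \ref{lem:consistency_TCP}, and your termwise limit computation (including the sandwich bound for $\lceil \lambda d\rceil/d$ and the convergence $e(\pm z)\to e^{\pm z}$) is precisely the elaboration of that step.
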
{}

We then prove robustness of the algorithm with the following lemmas.

\begin{lem}
\label{lem:TCP_feasible}
Let \(y\) be the dual solution produced by Algorithm \ref{alg:LA Primal-Dual for TCP ack}. Then \(\frac{y}{1+1/d}\) is feasible.
\end{lem}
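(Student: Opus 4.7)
The plan is to mirror the proofs of Lemmas~\ref{lem:dual_ski_rental_feasible} and \ref{lem:dual_bahncard}. From the primal--dual formulation in Figure~\ref{fig:PD for TCP}, the dual has two kinds of constraints: ``box'' constraints $y_{jt}\leq 1/d$ coming from the primal variables $f_{jt}$, and packing constraints $\sum_{j\,:\,t(j)\leq t\leq t'} y_{j t'} \leq 1$ coming from each primal variable $x_t$. The box constraints are immediate because Algorithm~\ref{alg:LA Primal-Dual for TCP ack} only ever sets $y_{jt}$ to $c'\in\{1/d,\lambda/d\}$, and dividing by $1+1/d>1$ preserves the inequality.

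The substantive work is in the packing constraints. I would fix a time $t$ and bound $\Sigma := \sum_{j\,:\,t(j)\leq t\leq t'} y_{j t'}$. Let $b$ and $s$ denote the number of big and small updates of Algorithm~\ref{alg:LA Primal-Dual for TCP ack} that contribute to $\Sigma$; each such update adds $1/d$ or $\lambda/d$ respectively, so $\Sigma = (b + \lambda s)/d$. I would show that once $b+\lambda s \geq d$ no further contributing update can occur; since each new update raises $b+\lambda s$ by at most $1$, this yields $\Sigma \leq (d+1)/d = 1 + 1/d$, and dividing $y$ by $1+1/d$ produces a feasible dual.

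To prove this cutoff, I introduce the running quantity $S := \sum_{k\geq t} x_k$ evaluated at the current instant of the algorithm. Because times are processed in increasing order, a contributing update at some time $t'\geq t$ for a packet $j$ with $t(j)\leq t$ has precondition $\sum_{k=t(j)}^{t'} x_k < 1$, which immediately forces $S = \sum_{k=t}^{t'} x_k < 1$. At such an update, $S$ grows by at least $\tfrac{1}{d}\bigl(S + \tfrac{1}{e(\lambda)-1}\bigr)$ if it is big, and by at least $\tfrac{1}{d}\bigl(S + \tfrac{1}{e(1/\lambda)-1}\bigr)$ if it is small, using $\sum_{k=t(j)}^{t'} x_k \geq S$ and the update rule of the algorithm. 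This is exactly the recurrence handled by Lemma~\ref{lem:sequence_lemma}, whose instantiation with parameter $d$ yields $S\geq 1$ as soon as $b+\lambda s \geq d$, which is the required cutoff.

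The only delicate point is accommodating an arbitrary interleaving of big and small updates, since the two update types grow $S$ at different rates and contribute differently to $\Sigma$. This is precisely what Lemma~\ref{lem:sequence_lemma} abstracts away, so once its statement is invoked the rest of the argument is a parameter-level variant of the ski-rental and Bahncard proofs.
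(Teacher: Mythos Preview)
Your proposal is correct and follows essentially the same approach as the paper's proof. Both arguments dispose of the box constraints $y_{jt}\le 1/d$ trivially, then fix a time $t$, track the running sum $S=\sum_{k\ge t} x_k$, observe that every contributing update has precondition forcing $S<1$ and raises $S$ by at least $\tfrac{1}{d}\bigl(S+\tfrac{1}{e(\cdot)-1}\bigr)$, invoke Lemma~\ref{lem:sequence_lemma} to conclude $S\ge 1$ once $b+\lambda s\ge d$, and then use that each update changes $b+\lambda s$ by at most $1$ to bound the dual sum by $1+1/d$.
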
{}
\begin{proof}
Notice that the constraints of the second type (i.e. \(0 \leq y_{jt} \leq 1/d\)) are always satisfied since \(0<\lambda\leq 1\). We now check that the second constraints are almost satisfied (within some factor \((1+1/d)\)). Fix a time \(t\in T\) and consider the corresponding constraint: 
\[\sum_{j | t \geq t(j)} \sum_{t' \geq t} y_{jt} \leq 1\]

Note that for a small update for some job \(j\) such that \(t(j)\leq t\) the sum above increases by \(\lambda/d\) while it increases by \(1/d\) for a big update. Notice that once we have that \(\sum_{t'\geq t} x_{t'}\geq 1\), no more such update will be performed. Denote by $S$ the value of this sum.

Notice that for a big update, the sum $S$ becomes $\left(1+\frac{1}{d} \right)\cdot S + \frac{1}{d((1+1/d)^{\lambda d} -1)}$. Similarly, for a small updates it becomes $\left(1+\frac{1}{d} \right)\cdot S + \frac{1}{d((1+1/d)^{d/\lambda} -1)}$.

Hence, if we denote by \(s\) the number of small updates in this sum and by \(b\) the number of big updates, by Lemma \ref{lem:sequence_lemma} we have that if \(\lambda s + b \geq d\) then \(\sum_{t'\geq t} x_{t'}\geq 1\). This directly implies that the value of $\sum_{j | t \geq t(j)} \sum_{t' \geq t} y_{jt}$ is at most \(1+1/d\) at the end of the algorithm (each update in the dual is of value at most \(1/d\)).

Therefore scaling down all $y_{jt}$ by a multiplicative factor of $1+1/d$ yields a feasible solution to the dual.
\end{proof}{}

\begin{lem}
\label{lem:TCP_robustness}
For \(d\rightarrow +\infty\), Algorithm \ref{alg:LA Primal-Dual for TCP ack} outputs a solution of cost at most \(\frac{1}{1-e^{-\lambda}}\cdot \OPT\)
\end{lem}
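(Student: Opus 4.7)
The plan is a standard primal-dual robustness argument: bound the per-step ratio $\Delta P/\Delta D$ uniformly, then invoke weak duality together with the dual feasibility already established in Lemma \ref{lem:TCP_feasible}. Concretely, I would first case-split on the two types of updates that Algorithm \ref{alg:LA Primal-Dual for TCP ack} performs and show that in either case $\Delta P / \Delta D$ is at most $1/(1-e(-\lambda))$.

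For a big update (triggered when $t\geq \alpha(t(j))$, so $c=e(\lambda)$ and $c'=1/d$), the primal increase decomposes as $\Delta P = \frac{1}{d}\bigl(1-\sum_{k=t(j)}^{t}x_k\bigr)+\frac{1}{d}\bigl(\sum_{k=t(j)}^{t}x_k+\frac{1}{e(\lambda)-1}\bigr)$, which simplifies to $\frac{1}{d}\cdot\frac{1}{1-e(-\lambda)}$, while the dual increases by $\Delta D = 1/d$, yielding the target ratio. For a small update ($c=e(1/\lambda)$, $c'=\lambda/d$), the same computation gives $\Delta P = \frac{1}{d}\cdot \frac{1}{1-e(-1/\lambda)}$ and $\Delta D = \lambda/d$, so $\Delta P/\Delta D = \frac{1}{\lambda(1-e(-1/\lambda))}$. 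To unify the two bounds I would then appeal to Lemma \ref{lem:inequalities} (the inequality $\frac{1}{1-e(-1/\lambda)}\leq \frac{\lambda}{1-e(-\lambda)}$, already used in the consistency proof), which rearranges to $\frac{1}{\lambda(1-e(-1/\lambda))}\leq \frac{1}{1-e(-\lambda)}$. Hence in every iteration $\Delta P \leq \frac{1}{1-e(-\lambda)}\cdot \Delta D$.

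Summing over all iterations and combining with Lemma \ref{lem:TCP_feasible}, which asserts that $y/(1+1/d)$ is dual-feasible, weak duality gives
\begin{equation*}
c_{\tiny{\mathcal{PDLA}}}(\mathcal A,\mathcal I,\lambda) \;\leq\; \frac{1}{1-e(-\lambda)}\cdot \sum_{j,t}y_{jt} \;\leq\; \frac{(1+1/d)}{1-e(-\lambda)}\cdot \OPT(\mathcal I).
\end{equation*}
Taking $d\to\infty$, the factor $(1+1/d)$ tends to $1$ and $e(-\lambda)=(1+1/d)^{-\lambda d}\to e^{-\lambda}$, yielding the claimed $\frac{1}{1-e^{-\lambda}}\cdot \OPT(\mathcal I)$ bound.

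I do not expect a serious obstacle: the computation of $\Delta P$ for each update type is mechanical, dual feasibility is handed to us by Lemma \ref{lem:TCP_feasible}, and the only nontrivial step is invoking the small-vs-big comparison from Lemma \ref{lem:inequalities}, which is a one-variable calculus fact. The only care required is to keep track of the $(1+1/d)$ slack from dual feasibility and the discrete $e(\cdot)$ notation, both of which disappear cleanly in the limit.
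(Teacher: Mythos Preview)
Your proposal is correct and matches the paper's proof essentially step for step: the same case split on big versus small updates, the same computation of $\Delta P/\Delta D$ in each case, the same appeal to Lemma~\ref{lem:inequalities} (inequality~\eqref{eqn:lemma12_1}) to handle the small-update ratio, and the same conclusion via Lemma~\ref{lem:TCP_feasible} and weak duality with the $(1+1/d)$ slack vanishing in the limit.
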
{}
\begin{proof}
We first compare the increase \(\Delta P\) in the primal value to the increase \(\Delta D\) in the dual value at every update. We claim that for every update we have 
\begin{equation*}
    \frac{\Delta P}{\Delta D}\leq \frac{1}{1-e(-\lambda)}
\end{equation*}{}
In the case of a big update we directly have \(\Delta P=\frac{1}{d}\left(1+\frac{1}{e(\lambda)-1} \right)=\frac{1}{d}\cdot\frac{1}{1-e(-\lambda)}\) and \(\Delta D = \frac{1}{d}\). In the case of a small update we have \(\Delta D = \frac{\lambda}{d}\) and \(\Delta P = \frac{1}{d}\left(1+\frac{1}{e(1/\lambda)-1} \right)=\frac{1}{d}\cdot\frac{1}{1-e(-1/\lambda)}\) and we conclude applying Lemma \ref{lem:inequalities} (inequality \eqref{eqn:lemma12_1}) that we always have

\begin{equation*}
    \frac{\Delta P}{\Delta D} \leq \frac{1}{1-e(-\lambda)}
\end{equation*}

By lemma \ref{lem:TCP_feasible}, \(\frac{y}{1+1/d}\) is a feasible solution. Hence taking \(d \rightarrow +\infty\) together with the previous remark and weak duality we get the result.
\end{proof}{}

Combining Lemma \ref{cor:TCP_consistency_asymptotic} and Lemma \ref{lem:TCP_robustness} yields Theorem \ref{thm:tcp_main}.

\section{Optimality bound}
\label{sec: Optimality appendix}
\newcommand{\tend}{t_{\textit{end}}}
\primelemma*
\begin{proof}
For simplicity, we will consider the ski-rental problem in the continuous case which corresponds to the behaviour of the discrete version when $B \xrightarrow[]{} \infty$. In this problem, the cost of buying is $1$ and a randomized algorithm has to define a (buying) probability distribution $\{ p_t\}_{t \geq 0}$. Moreover, consider the case where the true number of vacation days $\tend \in [0,1] \cup (2,\infty)$. In such a case we can assume w.l.o.g. that $p_t = 0,\forall t >1$. Indeed moving buying probability mass from any $p_t, t >1$ to $p_1$ does not increase the cost of the randomized algorithm. Assume now that the prediction suggests us that the end of vacations is at $\hat{t}_{\textit{end}} >  2$, thus the optimal offline solution, if the prediction is correct, is to buy the skis in the beginning for a total cost of $1$. Since the algorithm has to define a probability distribution in $[0,1]$,  $\{ p_t\}$ needs to satisfy the equality constraint $\int_0^1 p_t dt = 1 $. Moreover, note that when the prediction is correct, i.e. $t_{\textit{end}} > 2$, the LA algorithm suffers an expected cost of $\int_0^1 (t+1)p_t dt$ while the optimum offline has a cost of $1$. Thus the consistency requirement forces the distribution to satisfy the inequality $\int_0^1 (t+1)p_t dt \leq \frac{\lambda}{1 - e^{-\lambda}}$. 
Now assume that the best possible LA algorithm is $c$-robust. If $t_{\textit{end}} \leq 1$ then the LA algorithm's cost is ${\int_0^{\tend}  (t+1)p_tdt  + \tend \int_{\tend}^1 p_t dt}$ while the optimum offline cost is $\tend$. Thus, due to $c$-robustness we have that for every $t' \in [0,1]$, ${\int_0^{t'}  (t+1)p_tdt  + t' \int_{t'}^1 p_t dt \leq c t'}$. We calculate the best possible robustness $c$ with the following LP:
\begin{figure}[H]
    \centering
    \caption{Primal Robustness for ski-rental problem.}
    \label{fig:Primal Optimal Robustness for Ski Rental}
    \begin{tabular}{|c|}
    \hline
    \textbf{Primal} \\
    \hline
    \text{minimize} $c$ \\
    \text{subject to:} $\quad \int_0^1 p_t dt = 1$\\
     $\quad \int_0^1 (t+1)p_t dt \leq \frac{\lambda}{1 - e^{-\lambda}}$ \\
     $\quad {\int_0^{t'}  (t+1)p_tdt  + t' \int_{t'}^1 p_t dt \leq c t'} \quad \forall t' \in [0,1]$ \\
     $\quad p_t \geq 0 \quad \forall t' \in [0,1]$\\
    \hline
\end{tabular}{}
\end{figure}{}
To lower bound the best possible robustness $c$ we will present a feasible solution to the dual of \ref{fig:Primal Optimal Robustness for Ski Rental}. The dual variables $\lambda_d$ and $\lambda_c$ correspond respectively to the first and second primal constraints in Figure \ref{fig:Primal Optimal Robustness for Ski Rental}. The dual variables $\lambda_t$, $\forall t \in [0,1]$ correspond to the robustness constraints described in the third line of the primal.

The corresponding dual is:

\begin{figure}[H]
    \centering
    \caption{Dual Robustness for ski-rental problem.}
    \label{fig:Dual Optimal Robustness for Ski Rental}
    \begin{tabular}{|c|}
    \hline
    \textbf{Dual} \\
    \hline
    \text{maximize} $\lambda_d  -\lambda_c \cdot \frac{\lambda}{1 - e^{-\lambda}}$ \\
    \text{subject to:} $\quad \int_0^1 t \lambda_t dt \leq 1$\\
     $\quad  \lambda_d -(t'+1) \lambda_c \leq \int_0^{t'} t \lambda_t dt  + (t'+1)\int_{t'}^1 \lambda_t dt \quad \forall t' \in [0,1]$\\
     $\lambda_c, \lambda_t \geq 0 \quad \forall t \in [0,1]$ \\
    \hline
\end{tabular}{}
\end{figure}{}
Let $K = \frac{1}{1 - \lambda e^{-\lambda}  - e^{-\lambda}}$. Then, $\lambda_t = K \cdot e^{-t}  \cdot \mathbbm{1} \{ t \leq \lambda\}  $ , $\lambda_d = K$ and $\lambda_c = K \cdot e^{-\lambda}$.

We first prove that this dual solution is feasible. For the first constraint notice that
\begin{equation*}
    \int_0^1 t\lambda_t dt = K\cdot \int_0^\lambda te^{-t}dt=K\cdot \left(1-(\lambda+1)e^{-\lambda}\right)=1
\end{equation*}

For the second type of constraint first in the case $t'>\lambda$ we get
\begin{equation*}
    \int_0^{t'} t \lambda_t dt  + (t'+1)\int_{t'}^1 \lambda_t dt = \int_0^{\lambda} t \lambda_t dt = 1
\end{equation*}
and we note that 
\begin{equation*}
    \lambda_d -(t'+1) \lambda_c \leq \lambda_d -(\lambda+1) \lambda_c = K\cdot \left(1-(\lambda+1)e^{-\lambda}\right)=1
\end{equation*}
hence these constraints are satisfied.

In the second case $t'\leq \lambda$, we have that

\begin{align*}
    \int_0^{t'} t \lambda_t dt  + (t'+1)\int_{t'}^1 \lambda_t dt &= K\cdot \left(\int_0^{t'} t e^{-t} dt + (t'+1)\int_{t'}^\lambda e^{-t}dt\right)\\
    &= K\cdot \left(1-(t'+1)e^{-t'}+(t'+1)(e^{-t'}-e^{-\lambda})\right)\\
    &= K\cdot \left(1-(t'+1)e^{-\lambda}\right)\\
    &=\lambda_d -(t'+1) \lambda_c
\end{align*}

which proves that these constraints are also satisfied. Hence this dual solution is feasible. Finally note that the cost of this dual solution is
\begin{align*}
    \lambda_d  -\lambda_c \cdot \frac{\lambda}{1 - e^{-\lambda}} &= K\cdot \left(1-\frac{\lambda}{1 - e^{-\lambda}} \cdot e^{-\lambda} \right)\\
    &=  K\cdot \frac{1-e^{-\lambda}-\lambda e^{-\lambda}}{1-e^{-\lambda}} = \frac{1}{1-e^{-\lambda}}
\end{align*}

By weak duality, we conclude that the best robustness cannot be better than $\frac{1}{1-e^{-\lambda}}$
\end{proof}

\section{Technical lemmas}
\label{Technical lemmas}
A few inequalities that will be useful:

\begin{lem}
\label{lem:inequalities}
    For any $d>0$, any $0<\lambda\leq 1$, and any $\beta\in [0,1]$, we have:
    \begin{equation}
    \label{eqn:lemma12_1b}
        \frac{\lambda}{1-e^{-\lambda}} \geq \frac{1}{1-e^{-1/\lambda}}
    \end{equation}{}
    \begin{equation}
    \label{eqn:lemma12_1}
        \frac{\lambda}{1-(1+1/d)^{-\lambda d}} \geq \frac{1}{1-(1+1/d)^{-d/\lambda}}
    \end{equation}{}
    \begin{equation}
    \label{eqn:lemma12_2b}
        \frac{1}{e^{\lambda}-1} \geq \frac{\frac{1-\lambda}{\lambda} \cdot e^{1/\lambda}+1}{ e^{1/\lambda}-1}
    \end{equation}{}
    \begin{equation}
    \label{eqn:lemma12_2}
        \frac{1}{(1+1/d)^{\lambda d}-1} \geq \frac{\frac{1-\lambda}{\lambda} \cdot (1+1/d)^{d/\lambda}+1}{ (1+1/d)^{d/\lambda}-1}
    \end{equation}{}
    \begin{equation}
    \label{eqn:lemma12_3}
        \frac{\lambda}{1-\beta + \beta\lambda}\cdot \frac{e^\lambda - \beta}{e^\lambda - 1} \geq \frac{e^{1/\lambda}-\beta}{e^{1/\lambda}-1}
    \end{equation}{}
    \begin{equation}
    \label{eqn:lemma12_4}
        (\lambda+\beta - \beta\lambda)\cdot \frac{e^\lambda - \beta}{e^\lambda - 1} \geq \frac{e^{1/\lambda}-\beta}{e^{1/\lambda}-1}
    \end{equation}{}
\end{lem}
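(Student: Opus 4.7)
The six inequalities collapse to a small core via elementary algebra. Cross-multiplying \eqref{eqn:lemma12_2b} and dividing by $e^{1/\lambda}$ yields $\lambda(1-e^{-1/\lambda}) \geq 1-e^{-\lambda}$, which is exactly \eqref{eqn:lemma12_1b}; the identical manipulation with $e^x$ replaced by $(1+1/d)^{xd}$ shows \eqref{eqn:lemma12_2} is equivalent to \eqref{eqn:lemma12_1}. For the $\beta$-dependent pair, the identity $(1-\beta(1-\lambda))(\lambda + \beta(1-\lambda)) = \lambda + \beta(1-\lambda)^2(1-\beta) \geq \lambda$ rearranges to $\lambda/(1-\beta(1-\lambda)) \leq \lambda + \beta(1-\lambda)$ for $\beta \in [0,1]$, so the LHS of \eqref{eqn:lemma12_3} is bounded above by the LHS of \eqref{eqn:lemma12_4}; hence \eqref{eqn:lemma12_3} implies \eqref{eqn:lemma12_4}. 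It therefore suffices to prove \eqref{eqn:lemma12_1b}, its discrete analogue \eqref{eqn:lemma12_1}, and \eqref{eqn:lemma12_3}.

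For \eqref{eqn:lemma12_1b}, I set $\phi(\lambda) := \lambda(1-e^{-1/\lambda}) - (1-e^{-\lambda})$ and aim to show $\phi \geq 0$ on $(0,1]$. Direct computation yields $\phi(0^+) = \phi(1) = 0$, $\phi'(\lambda) = 1 - e^{-1/\lambda}(1+1/\lambda) - e^{-\lambda}$ with $\phi'(0^+) = 0$ and $\phi'(1) = 1-3/e < 0$, and $\phi''(\lambda) = e^{-\lambda} - e^{-1/\lambda}/\lambda^3$. The sign of $\phi''$ tracks that of $h(\lambda) := 3\log\lambda - \lambda + 1/\lambda$, so $\phi'' \geq 0$ iff $h \geq 0$. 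Since $h'(\lambda) = -(\lambda^2-3\lambda+1)/\lambda^2$ has its unique root on $(0,1)$ at $\lambda_- = (3-\sqrt 5)/2$, $h$ descends from $+\infty$ to a negative minimum at $\lambda_-$ and then rises to $h(1)=0$; hence $h$ crosses zero exactly once on $(0,1)$, so $\phi'' > 0$ on some initial $(0, \lambda_1)$ and $\phi'' < 0$ on $(\lambda_1, 1)$. This makes $\phi'$ unimodal, and together with $\phi'(0^+) = 0$ and $\phi'(1) < 0$ forces $\phi' > 0$ on $(0,\lambda^*)$ and $\phi' < 0$ on $(\lambda^*,1)$ for some $\lambda^*$. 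So $\phi$ rises from $0$ to a positive peak and returns to $0$, giving $\phi \geq 0$. The discrete version \eqref{eqn:lemma12_1} follows by the same template with $e^x$ replaced by $E(x) := (1+1/d)^{xd}$, which shares all the relevant properties ($E(0)=1$, strict convexity, $E'(x) = cE(x)$ with $c := d\ln(1+1/d) > 0$); the derivatives of the analogous $\phi_d$ differ from those of $\phi$ only by positive powers of $c$, so the sign analysis transfers verbatim.

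For \eqref{eqn:lemma12_3}, cross-multiplying yields the equivalent polynomial inequality $P(\beta) \leq 0$, where
\[
P(\beta) := (1-\beta(1-\lambda))(e^{1/\lambda}-\beta)(e^\lambda - 1) - \lambda(e^\lambda - \beta)(e^{1/\lambda}-1)
\]
is quadratic in $\beta$ with leading coefficient $(1-\lambda)(e^\lambda - 1) \geq 0$. Direct evaluation gives $P(1)=0$, and $P(0) = e^{1/\lambda}(e^\lambda-1) - \lambda e^\lambda(e^{1/\lambda}-1) \leq 0$ is exactly \eqref{eqn:lemma12_1b} after dividing by $e^{\lambda + 1/\lambda}$. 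By Vieta's formulas the other root of $P$ is $\beta^* = P(0)/[(1-\lambda)(e^\lambda-1)] \leq 0$, so the convex quadratic $P$ is non-positive on $[\beta^*, 1] \supseteq [0,1]$ (the degenerate case $\lambda = 1$ collapses $P$ identically to $0$). The main obstacle is the unimodality analysis of $\phi'$ in the proof of \eqref{eqn:lemma12_1b}: the $\lambda \leftrightarrow 1/\lambda$ asymmetry means $\phi$ itself changes concavity inside $(0,1)$, so a single global convexity argument fails and the careful sign bookkeeping via the auxiliary function $h$ is essential.
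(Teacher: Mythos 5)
Your proposal is correct and takes a genuinely different --- and in several places stronger --- route than the paper. The paper proves \eqref{eqn:lemma12_1b} and \eqref{eqn:lemma12_2b} only ``by plot'' (a computer check), treats them as independent facts, and then spends considerable calculus (the auxiliary functions $g_\lambda$, $h_\lambda$, $i_\lambda$) deducing \eqref{eqn:lemma12_1} from \eqref{eqn:lemma12_1b} and \eqref{eqn:lemma12_2} from \eqref{eqn:lemma12_2b} via the substitution $(1+1/d)^d=e^x$; it then handles \eqref{eqn:lemma12_3} and \eqref{eqn:lemma12_4} each as a quadratic in $\beta$ with $P(1)=0$ and $P(0)$ controlled by \eqref{eqn:lemma12_1b}, which is the same content as your Vieta computation. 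You instead observe that \eqref{eqn:lemma12_2b} is algebraically \emph{equivalent} to \eqref{eqn:lemma12_1b} (and \eqref{eqn:lemma12_2} to \eqref{eqn:lemma12_1}) --- an observation the paper misses and which makes its longest computation unnecessary --- you derive \eqref{eqn:lemma12_4} from \eqref{eqn:lemma12_3} by comparing prefactors via $(1-\beta(1-\lambda))(\lambda+\beta(1-\lambda))=\lambda+\beta(1-\beta)(1-\lambda)^2\geq\lambda$, and, most importantly, you give an actual proof of \eqref{eqn:lemma12_1b} through the sign analysis of $\phi$, $\phi'$, $\phi''$ and the auxiliary $h(\lambda)=3\ln\lambda-\lambda+1/\lambda$, where the paper offers only a figure. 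I verified the equivalence \eqref{eqn:lemma12_2b}$\iff$\eqref{eqn:lemma12_1b} (after clearing denominators one divides by $e^{\lambda+1/\lambda}$, not by $e^{1/\lambda}$ as you write, but the computation does close), the implication \eqref{eqn:lemma12_3}$\Rightarrow$\eqref{eqn:lemma12_4}, and the unimodality argument; all are sound.

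The one step that needs more care is the transfer of the \eqref{eqn:lemma12_1b} argument to \eqref{eqn:lemma12_1}. Writing $E(z)=e^{cz}$ with $c=d\ln(1+1/d)\in(0,1)$, the analogous $\phi_d(\lambda)=\lambda(1-e^{-c/\lambda})-(1-e^{-c\lambda})$ does \emph{not} have derivatives differing from those of $\phi$ ``only by positive powers of $c$'': one finds $\phi_d'(0^+)=1-c\neq 0$, $\phi_d'(1)=1-(1+2c)e^{-c}$, and the sign of $\phi_d''$ is governed by $h_c(\lambda)=3\ln\lambda-c\lambda+c/\lambda$, which is not a scalar multiple of $h$. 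The structural conclusions do survive: $h_c(0^+)=+\infty$, $h_c(1)=0$, $h_c'$ changes sign exactly once on $(0,1)$ because $\lambda^2-(3/c)\lambda+1$ still has exactly one root there (the roots are positive with product $1$ and sum $3/c\geq 3$), $\phi_d'(0^+)>0$, and $\phi_d'(1)<0$ because $e^c<1+2c$ on $(0,1]$. So the proof is repairable with a few extra lines, but ``transfers verbatim'' oversells it; in particular the boundary fact $\phi_d'(1)<0$ is something you must verify separately, not inherit from the continuous case.
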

\begin{proof}
Since the formal proof of \eqref{eqn:lemma12_1b} and \eqref{eqn:lemma12_2b} seems to require heavy calculations and that they are easy to check on computer we will only give a proof by a plot (see Figures \ref{fig:f} and \ref{fig:g}). For \ref{fig:g}, note that $\eqref{eqn:lemma12_2b}\iff \frac{1}{e^\lambda-1}-\frac{1-\lambda}{\lambda}-\frac{1}{\lambda}\cdot \frac{e^{-1/\lambda}}{1-e^{-1/\lambda}} \geq 0$.
\begin{figure}[H]
\centering
\begin{subfigure}{.5\textwidth}
  \centering
  \includegraphics[width=1.1\linewidth]{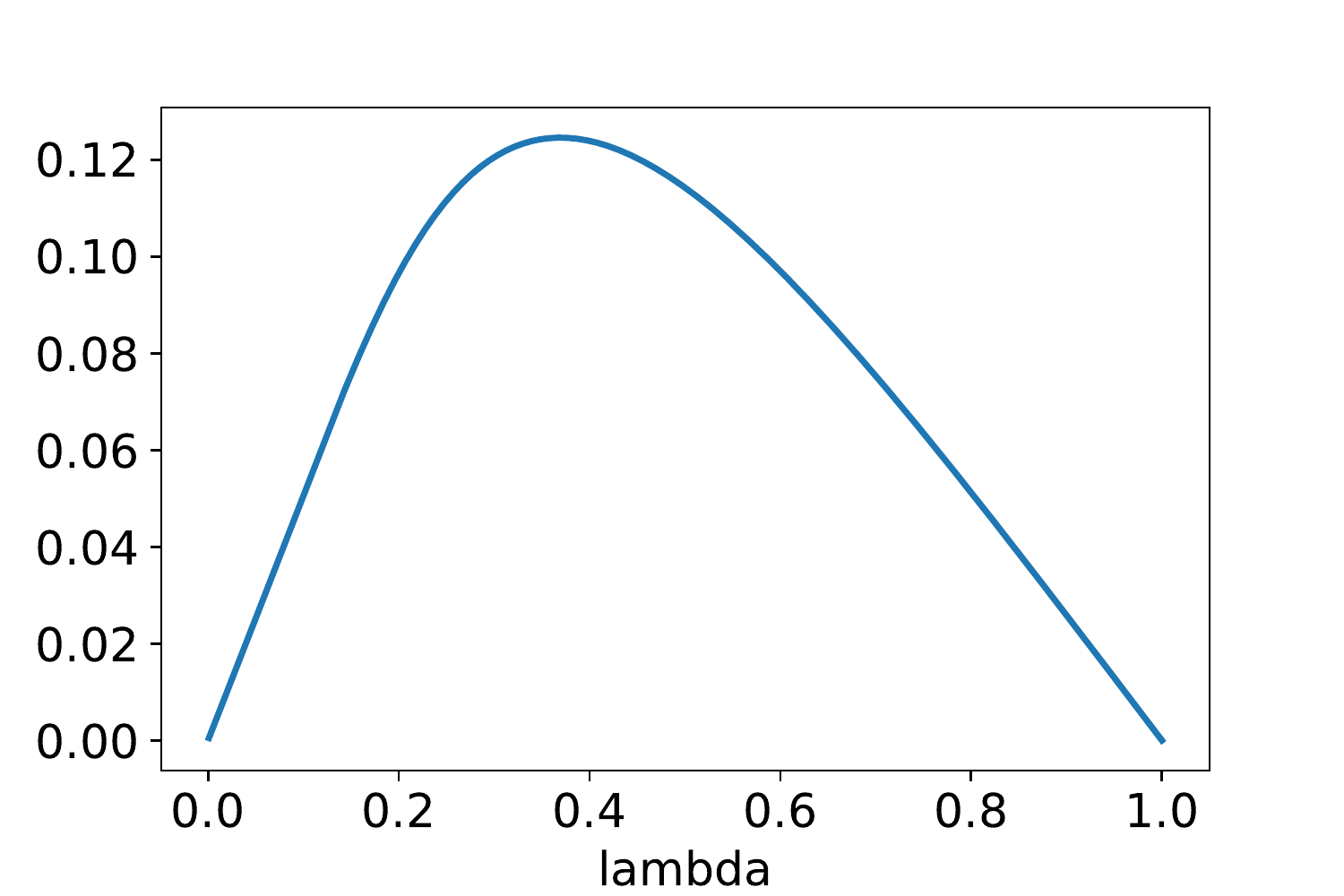}
  \caption{Plot of $\frac{\lambda}{1-e^{-\lambda}}-\frac{1}{1-e^{-1/\lambda}}$ }
  \label{fig:f}
\end{subfigure}%
\begin{subfigure}{.5\textwidth}
  \centering
  \includegraphics[width=1.1\linewidth]{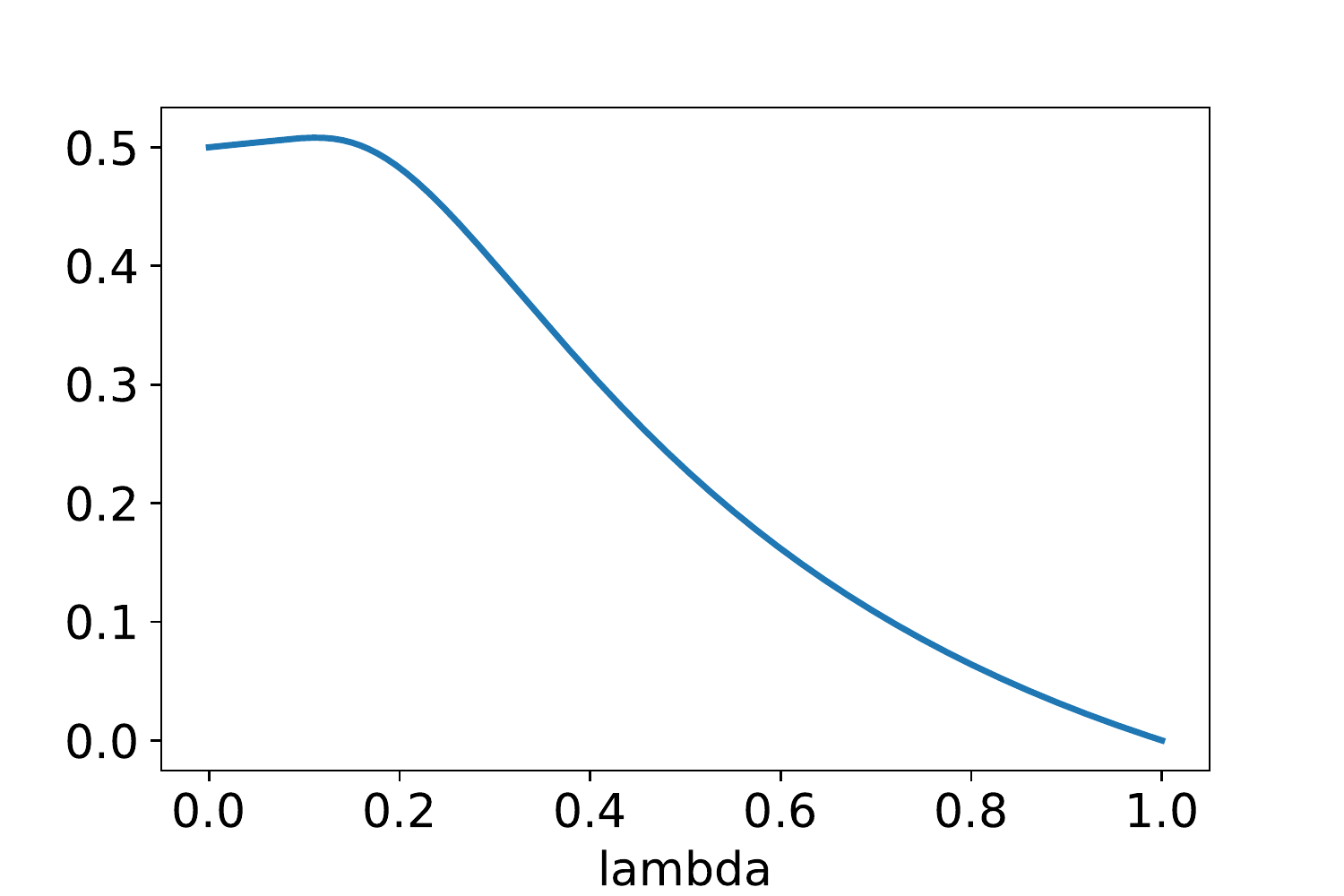}
  \caption{Plot of $\frac{1}{e^\lambda-1}-\frac{1-\lambda}{\lambda}-\frac{1}{\lambda}\cdot \frac{e^{-1/\lambda}}{1-e^{-1/\lambda}}$ }
  \label{fig:g}
\end{subfigure}
\caption{Plots for \eqref{eqn:lemma12_1b} and \eqref{eqn:lemma12_2b}}
\label{fig:plots}
\end{figure}
We now prove that inequality \eqref{eqn:lemma12_1b} implies inequality \eqref{eqn:lemma12_1}. For this end notice that we can write $(1+1/d)^d=e^x$ for some $x\in (0,1)$ since $(1+1/d)^d\in (1,e)$ for all $d>0$. We prove that for any $x\in (0,1]$
\begin{align*}
    \frac{\lambda \left(1-e^{-x/\lambda} \right)}{1-e^{-x\lambda}}\geq \frac{\lambda\left(1-e^{-1/\lambda} \right)}{1-e^{-\lambda}}
\end{align*} which will imply our claim since by inequality \eqref{eqn:lemma12_1b} the right hand side is bigger than $1$. First note this is equivalent to prove that 
\begin{align*}g_\lambda(x)=(1-e^{-\lambda})\cdot (1-e^{-x/\lambda})- (1-e^{-1/\lambda})\cdot (1-e^{-x\lambda}) \geq 0\end{align*}
Taking the derivative of $g_\lambda(x)$ we obtain 
\begin{align*}g'_\lambda(x) = \frac{1-e^{-\lambda}}{\lambda}\cdot e^{-x/\lambda}-\lambda (1-e^{-1/\lambda})\cdot e^{-x\lambda}\end{align*}
hence we can write 
\begin{align*}g'_\lambda(x)\geq 0 \iff e^{x\left(\lambda-1/\lambda\right)} \geq \lambda^2 \cdot \frac{1-e^{-1/\lambda}}{1-e^{-\lambda}}\end{align*}
Notice that the left hand side in this inequality is decreasing because $\lambda \in (0,1]$. Also notice that $g_\lambda(0)=g_\lambda(1)=0$. These two facts together imply that $g_\lambda$ is first increasing for $x\in (0,c]$ then decreasing for $x\in (c,1]$ for some unknown $c$. In particular, we indeed have that $g_\lambda(x)\geq 0$ which ends the proof of inequality \eqref{eqn:lemma12_1}.

Similarly, we prove that inequality \eqref{eqn:lemma12_2b} implies inequality \eqref{eqn:lemma12_2}. Again we write $(1+1/d)^d=e^x$ for some $x\in (0,1)$. We first rewrite inequality \eqref{eqn:lemma12_2}.

\begin{align*}
    \eqref{eqn:lemma12_2} &\iff \frac{1}{e^{\lambda x}-1} \geq \frac{\frac{1-\lambda}{\lambda}\cdot e^{x/\lambda}+1}{e^{x/\lambda}-1}\\
    &\iff \frac{1}{e^{\lambda x}-1} \geq \frac{\frac{1-\lambda}{\lambda}\cdot (e^{x/\lambda}-1)+\frac{1}{\lambda}}{e^{x/\lambda}-1}\\
    &\iff \lambda (e^{x/\lambda}-1)\geq (1-\lambda) (e^{x/\lambda}-1)(e^{\lambda x}-1)+(e^{\lambda x}-1)\\
    &\iff \lambda (e^{x/\lambda}-1) - (1-\lambda) (e^{x/\lambda}-1)(e^{\lambda x}-1) - (e^{\lambda x}-1) \geq 0
\end{align*}

Define the following function $h_{\lambda}(x)=\lambda (e^{x/\lambda}-1) - (1-\lambda) (e^{x/\lambda}-1)(e^{\lambda x}-1) - (e^{\lambda x}-1)$. One can first compute:

\begin{align*}
    h'_\lambda (x) &= e^{x/\lambda}-(1-\lambda)\cdot \left( \lambda e^{\lambda x}(e^{x/\lambda}-1) + \frac{1}{\lambda} e^{x/\lambda}(e^{\lambda x}-1)\right)-\lambda e^{\lambda x}\\
    &= e^{x/\lambda}-\lambda e^{\lambda x}- (1-\lambda)\cdot \left((\lambda+1/\lambda)e^{x(\lambda+1/\lambda)} -\lambda e^{\lambda x} - \frac{1}{\lambda} e^{x/\lambda} \right)\\
    &= e^{x/\lambda}\cdot \left(1+\frac{1-\lambda}{\lambda} \right) + e^{\lambda x}\cdot \left(-\lambda + \lambda (1-\lambda) \right) -e^{x(\lambda+1/\lambda)}\cdot  (1-\lambda)\cdot \left(\lambda + \frac{1}{\lambda} \right)\\
    &= \frac{e^{x/\lambda}}{\lambda} -\lambda^2 e^{\lambda x} - \frac{e^{x(\lambda+1/\lambda)}}{\lambda}\cdot (1-\lambda)\cdot (\lambda^2+1)
\end{align*}

Hence we can rewrite 

\begin{align*}
    h'_\lambda (x) \geq 0 &\iff \frac{e^{x/\lambda}}{\lambda} -\lambda^2 e^{\lambda x} - \frac{e^{x(\lambda+1/\lambda)}}{\lambda}\cdot (1-\lambda)\cdot (\lambda^2+1) \geq 0\\
    &\iff e^{x/\lambda} - \lambda^3 e^{\lambda x}- e^{x(\lambda+1/\lambda)}\cdot (1-\lambda)\cdot (\lambda^2+1) \geq 0\\
    &\iff 1 - \lambda^3 e^{ x(\lambda-1/\lambda)}- e^{x\lambda}\cdot (1-\lambda)\cdot (\lambda^2+1) \geq 0
\end{align*}
Let us define $i_\lambda (x) = 1 - \lambda^3 e^{ x(\lambda-1/\lambda)}- e^{x\lambda}\cdot (1-\lambda)\cdot (\lambda^2+1)$ and we derive

\begin{align*}
    i'_\lambda(x) = -\lambda^3 \cdot (\lambda-1/\lambda)\cdot e^{ x(\lambda-1/\lambda)}-\lambda e^{\lambda x}\cdot (1-\lambda)\cdot (\lambda^2+1)
\end{align*}

We can now notice that 
\begin{align*}
    i'_\lambda(x) \geq 0 &\iff -\lambda^3 \cdot (\lambda-1/\lambda)\cdot e^{ x(\lambda-1/\lambda)}-\lambda e^{\lambda x}\cdot (1-\lambda)\cdot (\lambda^2+1)\geq  0\\
    &\iff  -\lambda^3 \cdot (\lambda-1/\lambda)\cdot e^{-x/\lambda}-\lambda (1-\lambda)\cdot (\lambda^2+1)\geq 0\\
    &\iff \lambda^3 \cdot (1/\lambda-\lambda)\cdot e^{-x/\lambda}-\lambda (1-\lambda)\cdot (\lambda^2+1)\geq 0
\end{align*}
Since the left hand side is decreasing as $x$ increases we only need to check one extreme value which is $i'_\lambda(0)$.
We write 
\begin{align*}
    i'_\lambda(0)\leq 0 &\iff \lambda^3 \cdot (1/\lambda-\lambda) -\lambda \cdot (1-\lambda)\cdot (\lambda^2+1) \leq 0\\
    &\iff \lambda^2-\lambda^4 - (\lambda^3+\lambda -\lambda^4-\lambda^2)\leq 0\\
    &\iff -\lambda^3 +2\lambda^2 -\lambda \leq 0\\
    &\iff -\lambda \cdot (\lambda-1)^2 \leq 0
\end{align*}
hence we always have $i'_\lambda(0)\leq 0$.

Therefore we get that $i'_\lambda(x)\leq 0$ for all $x$ and $\lambda$. Note that $i_\lambda(0)=1-\lambda^3-(1-\lambda)(\lambda^2+1)=1-\lambda^3-\lambda^2-1+\lambda^3+\lambda= \lambda -\lambda^2 \geq 0$. Therefore we get that $h_\lambda$ is first positive on some interval $[0,c]$ and then negative for $x\in [c,\infty)$. Therefore $h_\lambda$ is first increasing then decreasing. Notice that $h_\lambda(0)=0$ and $h_\lambda(1)\geq 0$ by inequality \eqref{eqn:lemma12_2b}. Hence inequality \eqref{eqn:lemma12_2} is true for all $x\in [0,1]$ which concludes the proof. 

Finally, the proof of \eqref{eqn:lemma12_3} and \eqref{eqn:lemma12_4} are quicker and similar. Note that 
\begin{align*}
    \eqref{eqn:lemma12_3} \iff \lambda\cdot \frac{e^\lambda - \beta}{e^\lambda - 1} \geq (1-\beta + \beta\lambda) \cdot \frac{e^{1/\lambda}-\beta}{e^{1/\lambda}-1}
\end{align*}
which is equivalent to a polynomial (in $\beta$) of degree $2$ being positive. The leading coefficient of this polynomial $P$ is negative and we notice that $P(1)=0$ and that $P(0)\geq 0$ by \eqref{eqn:lemma12_1b}. All these facts together imply that $P(\beta)\geq 0$ for all $\beta \in [0,1]$. The proof of \eqref{eqn:lemma12_4} is similar. 
\end{proof}{}

\begin{lem}
\label{lem:sequence_lemma}
Let $0<\lambda\leq 1$, $d>0$ and define the following functions ($x\in \mathbb{R}$):
$$ f(x) = \left(1+\frac{1}{d} \right)\cdot x + \frac{1}{d\left((1+1/d)^{\lambda d}-1\right)}$$

$$ g(x) = \left(1+\frac{1}{d} \right)\cdot x + \frac{1}{d\left((1+1/d)^{d/\lambda}-1\right)}$$

Given $S\geq 0$ and a word $w\in \{a,b\}^*$ we define a sequence $S_w$ recursively as follows:

$$S_{w.y} = \left\{
                \begin{array}{lll}
                  S \mbox{ if } w.y=\epsilon\\            
                  f(S_{w})\mbox{ if } y=a\\
                  g(S_{w})\mbox{ if } y=b\\
                \end{array}
              \right.$$
              
Then for any $w\in \{a,b\}^*$ such that $|w|_a + \lambda |w|_b \geq d$ we have that $S_{w}\geq 1$.
\end{lem}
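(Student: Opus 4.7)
The plan is to reduce the problem to a one-variable convexity argument that ultimately invokes inequality~\eqref{eqn:lemma12_2} of Lemma~\ref{lem:inequalities}. Since $f$ and $g$ are affine with common positive slope $\alpha := 1 + 1/d$, they are non-decreasing, so $S_w$ is non-decreasing in the initial value $S$ and I may assume $S = 0$. Next I would carry out a swap argument: a direct computation gives
\begin{equation*}
g(f(x)) - f(g(x)) \;=\; (\alpha-1)(k_f - k_g) \;\geq\; 0,
\end{equation*}
where $k_f = (\alpha-1)/(\alpha^{\lambda d}-1)$ and $k_g = (\alpha-1)/(\alpha^{d/\lambda}-1)$, and the sign follows from $\lambda \leq 1$. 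Because $f$ and $g$ are monotone, replacing a substring ``$ab$'' by ``$ba$'' in any word can only decrease $S_w$; iterating (bubble-sort all $b$'s to the left) shows that the minimum of $S_w$ over permutations of a word with $n_a$ letters $a$ and $n_b$ letters $b$ is attained at $b^{n_b} a^{n_a}$. Solving the two linear recurrences then yields the closed form
\begin{equation*}
\phi(n_a, n_b) \;:=\; f^{n_a}\bigl(g^{n_b}(0)\bigr) \;=\; \alpha^{n_a}\cdot\frac{\alpha^{n_b}-1}{\alpha^{d/\lambda}-1} \;+\; \frac{\alpha^{n_a}-1}{\alpha^{\lambda d}-1}.
\end{equation*}

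I would then view $\phi$ as a function of non-negative reals. Both partial derivatives are strictly positive, so $\phi$ is strictly increasing in each variable and the infimum of $\phi$ on $\{n_a + \lambda n_b \geq d\}$ is attained on the boundary $n_a + \lambda n_b = d$. Parametrizing this boundary by $s = n_a \in [0, d]$ and $n_b = (d - s)/\lambda$, the task reduces to proving $\psi(s) := \phi(s, (d-s)/\lambda) \geq 1$ on $[0, d]$. A direct manipulation rewrites
\begin{equation*}
\psi(s) \;=\; A\,\alpha^{-(1-\lambda)s/\lambda} \;+\; B\,\alpha^{s} \;+\; C
\end{equation*}
with $A = \alpha^{d/\lambda}/(\alpha^{d/\lambda}-1) > 0$, $B = 1/(\alpha^{\lambda d}-1) - 1/(\alpha^{d/\lambda}-1) \geq 0$, and a constant $C$, so $\psi'' > 0$ and $\psi$ is convex. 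Checking $\psi(0) = 1$ is immediate, and by convexity it thus suffices to show $\psi'(0) \geq 0$, from which $\psi'$ stays non-negative and $\psi$ is non-decreasing on $[0, d]$.

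After elementary rearrangement, $\psi'(0) \geq 0$ becomes exactly
\begin{equation*}
\frac{1}{\alpha^{\lambda d}-1} \;\geq\; \frac{\frac{1-\lambda}{\lambda}\,\alpha^{d/\lambda}+1}{\alpha^{d/\lambda}-1},
\end{equation*}
which is precisely inequality~\eqref{eqn:lemma12_2} of Lemma~\ref{lem:inequalities}. The main obstacle is concentrated here: the swap argument and the convexity computation are routine, but the sign of $\psi'(0)$ is genuinely non-trivial and requires the subtle exponential inequality already isolated in Lemma~\ref{lem:inequalities}. Combining the swap argument, the closed form, monotonicity, convexity, and that inequality completes the proof.
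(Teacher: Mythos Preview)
Your proof is correct and follows essentially the same route as the paper: the swap argument reducing to $w'=b^{n_b}a^{n_a}$, the closed form for $\phi$, the reduction to the one-variable boundary function (the paper calls it $h$, you call it $\psi$), and the final appeal to inequality~\eqref{eqn:lemma12_2} after checking that the value at $0$ equals $1$ and that the derivative at $0$ is non-negative. The only cosmetic difference is that you deduce $\psi'\geq 0$ from convexity of $\psi$, whereas the paper observes directly that the sign condition for $h'(x)\geq 0$ is monotone in $x$; both arguments land on exactly the same inequality at $x=0$.
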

\begin{proof} Let $w'=b\ldots ba\ldots a=b^{|w|_b}a^{|w|_a}$ be the word made of $|w|_b$ consecutive $b$s followed by $|w|_a$ consecutive $a$s. Then we claim that $S_{w'}\leq S_{w}$. This directly follows from the fact that for any real number $x$, $f(g(x))\leq g(f(x))$. Noticing this, we can swap positions between an $a$ followed by a $b$ and reducing the final value. We keep doing this until all the $b$s in $w$ end up in front position.

With standard computations one can check that 
$$S_{b^{|w|_b}} = S\cdot \left(1+1/d\right)^{|w|_b} + \frac{\left(1+1/d\right)^{|w|_b}-1}{(1+1/d)^{d/\lambda}-1} $$

For ease of notation define $S' = S_{b^{|w|_b}}$. Using the assumption that $|w|_a+\lambda |w|_b \geq d$ and that $S\geq 0$ we get that

$$S'\geq \frac{(1+1/d)^{(d-|w|_a)/\lambda}-1}{(1+1/d)^{d/\lambda}-1}$$

Again using standard calculations we get that 

$$S_{w'}\geq S'\cdot (1+1/d)^{|w|_a} + \frac{(1+1/d)^{|w|_a}-1}{(1+1/d)^{\lambda d}-1}$$ 

which implies

$$S_{w'}\geq \frac{(1+1/d)^{(d-|w|_a)/\lambda}-1}{(1+1/d)^{d/\lambda}-1}\cdot (1+1/d)^{|w|_a} + \frac{(1+1/d)^{|w|_a}-1}{(1+1/d)^{\lambda d}-1}$$

Define $h(x)=\frac{(1+1/d)^{(d-x)/\lambda}-1}{(1+1/d)^{d/\lambda}-1}\cdot (1+1/d)^{x} + \frac{(1+1/d)^{x}-1}{(1+1/d)^{\lambda d}-1}$. We finish the proof by proving that for any $0<\lambda \leq 1$, any $d>0$ and any $x\geq 0$, we have that $h(x)\geq 1$.

Note that $h(0)=1$ and that 

$$h'(x) =\ln (1+1/d)\cdot \left(\frac{(1+1/d)^x}{(1+1/d)^{\lambda d}-1}-\frac{1-\lambda}{\lambda}\cdot \frac{ (1+1/d)^{(d-(1-\lambda)x)/\lambda}}{ (1+1/d)^{d/\lambda}-1}-\frac{(1+1/d)^x}{(1+1/d)^{d/\lambda}-1}\right)$$

To study the sign of $h'(x)$ we can drop the $\ln(1+1/d)$ and write

\begin{align*}
    h'(x)\geq 0 \iff& \frac{(1+1/d)^x}{(1+1/d)^{\lambda d}-1}-\frac{1-\lambda}{\lambda}\cdot \frac{ (1+1/d)^{(d-(1-\lambda)x)/\lambda}}{ (1+1/d)^{d/\lambda}-1}-\frac{(1+1/d)^x}{(1+1/d)^{d/\lambda}-1} \geq 0\\
    \iff& \frac{1}{(1+1/d)^{\lambda d}-1}-\frac{1-\lambda}{\lambda}\cdot \frac{ (1+1/d)^{(d-x)/\lambda}}{ (1+1/d)^{d/\lambda}-1}-\frac{1}{(1+1/d)^{d/\lambda}-1} \geq 0
\end{align*}{}
 Clearly the last term is increasing as $x$ increases hence we can limit ourselves to prove that $h'(0)\geq 0$ which we can rewrite
 
 \begin{align*}
     h'(0)\geq 0 \iff& \frac{1}{(1+1/d)^{\lambda d}-1}-\frac{1-\lambda}{\lambda}\cdot \frac{ (1+1/d)^{d/\lambda}}{ (1+1/d)^{d/\lambda}-1}-\frac{1}{(1+1/d)^{d/\lambda}-1} \geq 0\\
     \iff& \frac{1}{(1+1/d)^{\lambda d}-1} \geq \frac{\frac{1-\lambda}{\lambda} \cdot (1+1/d)^{d/\lambda}+1}{ (1+1/d)^{d/\lambda}-1}
 \end{align*}{}
 Which holds by equation (5) of  Lemma \ref{lem:inequalities}.
\end{proof}
}{}

\end{document}
